%% file: main.tex
\documentclass[11pt,a4paper]{article}
\usepackage{graphicx}

\usepackage{todonotes}

\usepackage[T1]{fontenc}       
\usepackage[utf8]{inputenc}    

\usepackage{microtype}         

\usepackage[margin=2.3cm]{geometry}  

\usepackage{amsmath}
\usepackage{mathtools}         
\usepackage{amssymb}           
\usepackage{amsthm}            
\usepackage{bm}                
\usepackage{mathrsfs}          
\usepackage{dsfont}   
\usepackage[normalem]{ulem}

\usepackage{xcolor}            
\usepackage{float}             

\usepackage{tikz}
\usetikzlibrary{arrows,calc,positioning,external}

\usepackage[colorlinks, linkcolor=black, citecolor=black, urlcolor=black]{hyperref}

\usepackage[nameinlink,noabbrev]{cleveref}

\usepackage{caption}
\usepackage{subcaption}
\usepackage{wrapfig}

\usepackage[shortlabels]{enumitem}

\usepackage{authblk}

\usepackage[backend=biber, style=alphabetic, maxbibnames=40, doi=false, isbn=false, url=false, eprint=false]{biblatex}
\input{commands.tex}

\title{Dimension-free regimes for unadjusted Langevin in ridge-separable models: A subspace robust Wasserstein analysis} 
\author[1]{Yuansi Chen}
\author[1]{Francesco Pedrotti}
\author[2]{Peter A. Whalley}

\affil[1]{Seminar for Statistics, ETH Zürich}
\affil[2]{Department of Statistics, University of Warwick}

\begin{document}
	
	\maketitle
	
	\begin{abstract}    
		The Unadjusted Langevin Algorithm (ULA) is widely used for high-dimensional Bayesian inference. Classical analyses in Wasserstein-2 distance predict an asymptotic bias and a mixing time that grow with dimension, but this metric is stronger than necessary for many statistical tasks. We instead analyze the bias in subspace robust Wasserstein distance, which controls the mean and arbitrary low-dimensional marginals and is therefore well suited to observables typically estimated in practice. For ridge-separable potentials formed with i.i.d. data, we prove high-probability bounds showing a bell-shaped mixing profile as the number of data points $N$ varies relative to the dimension $d$. In the overparameterized regime, the effective dimension is the rank $N$ of the data-dependent perturbation, leading to complexity depending on $N$ rather than $d$. In the large-sample regime, Gaussian-perturbation estimates yield dimension-free complexity once $N$ is sufficiently large. In contrast, near the proportional regime $\asymp d$, we prove that dimension-free guarantees are impossible in general. 
	\end{abstract}

	\section{Introduction}

	A fundamental problem in high-dimensional statistics, machine learning and Bayesian inference is to estimate the expectation of a functional with respect to a probability measure \cite{Roberts2004,liu2001monte}.
	An important class of examples \cite{berkson1944application,pampel2020logistic,mou2021high, freund2022convergence, chen2023does} is given by probability measures of the form  $\pi \propto e^{-V}$, where the potential $V\colon \R^d \to \R$ takes the form 
	\begin{equation}
		\label{eq:potential-stats-intro}
		V(x) = \frac{\lambda}{2} \abs*{x}^2 +\sum_{i=1}^N u_i\tond*{\scal*{x,a_i}},
	\end{equation}
    and which we refer to as \emph{ridge-separable potentials}.
	For example, in Bayesian posteriors, the first term corresponds to a Gaussian prior component, with precision  $\lambda >0$, while the vectors $(a_i)_{i=1}^N$ in the second term represent data-dependent directions drawn from a distribution in $\R^d$, and $(u_i)_{i=1}^N$ are one-dimensional losses or negative log-likelihoods. 
	Such posteriors arise, for example, in Bayesian ridge regression and Bayesian generalized linear models, including logistic, probit and Poisson regression \cite{gelman1995bayesian}, as well as in single-neuron Bayesian neural networks \cite{Neal}.

	A standard approach to approximate expectations under $\pi$ is Markov chain Monte Carlo (MCMC). The underlying idea is to design a Markov chain $(X_k)_k$ which is ergodic with respect to $\pi$ (or an accurate approximation), and simulate it for a long enough time. 
	Then, under standard ergodicity assumptions, the empirical average of the test function $f$ along $(X_k)_k$ accurately approximates its expectation under $\pi$.
	
	One simple approach for differentiable $V$ is the \emph{Unadjusted Langevin Algorithm (ULA)}.
	Given a small step-size $h>0$, and a random initialization, it iterates as follows
	\begin{equation}
		\label{eq:ULA}
		\hatx_0 \sim \mu, \quad \hatx_{k+1} = \hatx_k - h\nabla V(\hatx_k) + \sqrt{2}\tond*{B_{(k+1)h} -B_{kh}},
	\end{equation}
	where $(B_t)_{t\geq 0}$ is a standard $d$-dimensional Brownian motion. 
	This is the Euler--Maruyama discretization of the \emph{Langevin dynamics (LD)}
	\begin{equation}
		\label{eq:LD}
		X_0\sim \mu, \quad dX_t = -\nabla V(X_t) dt + \sqrt{2}dB_t.
	\end{equation}
	Under mild assumptions on $V$, the continuous-time process~\eqref{eq:LD} is ergodic with respect to $\pi$ \cite{PavBook}. The discretization scheme \eqref{eq:ULA}, however, is biased, which means that its limiting stationary distribution $\hat \pi$ is in general different from $\pi$. Hence, ULA faces a basic discretization trade-off. A smaller step-size $h$ leads to a smaller error in the bias, but also increases the number of iterations required to converge to $\hat\pi$. One can remove this bias by adding a Metropolis-Hastings accept/reject correction, leading to the Metropolis-adjusted Langevin algorithm (MALA). However, in high dimension, this correction introduces step-size restrictions and dimension-dependent mixing \cite{RobertsMALA,wu2022minimax}. For this reason, many large-scale applications use ULA directly.

	\paragraph{Empirical observations}
	In many high-dimensional Bayesian problems, ULA appears to estimate statistical functionals of interest accurately even when the step size and the number of iterations are not chosen as a function of dimension, according to the classical theory \cite{Dalalyan2017,DurMou2017}. This is especially true when the functionals of interest are simple observables such as the posterior mean, or depend only on a few coordinates~\cite{welling2011bayesian,simon2025benchmarking,paulin2025sampling,chada2023unbiased,sommer2026position,PeVaZy2020,fearnhead2024scalable}.
	
	We illustrate this phenomenon in Figure \ref{fig:double_descent} for Bayesian logistic regression with simulated data and a ridge-separable potential of the form \eqref{eq:potential-stats-intro}. We plot the error $\abs*{m-\hat m}$ between the means $m,\hat m$ of $\pi$ and of the biased limit $\hat \pi$ respectively. The error is small both in an overparameterized regime, where $N\ll d$, and in an asymptotic regime where $N\gg d$, while it is larger near the proportional regime $N \asymp d$. This error plot produces a bell-shaped dependence of the ULA bias on the interplay between $N$, $d$. A similar bell-shaped phenomenon was previously observed in the Bayesian probit model~\cite{Anceschi03042023}, which was referred to as reminiscent of the \emph{double descent} phenomena in machine learning theory \cite{Hastie2022}. 
	\begin{figure}[!ht]
		\centering
		\includegraphics[width=0.328\textwidth]{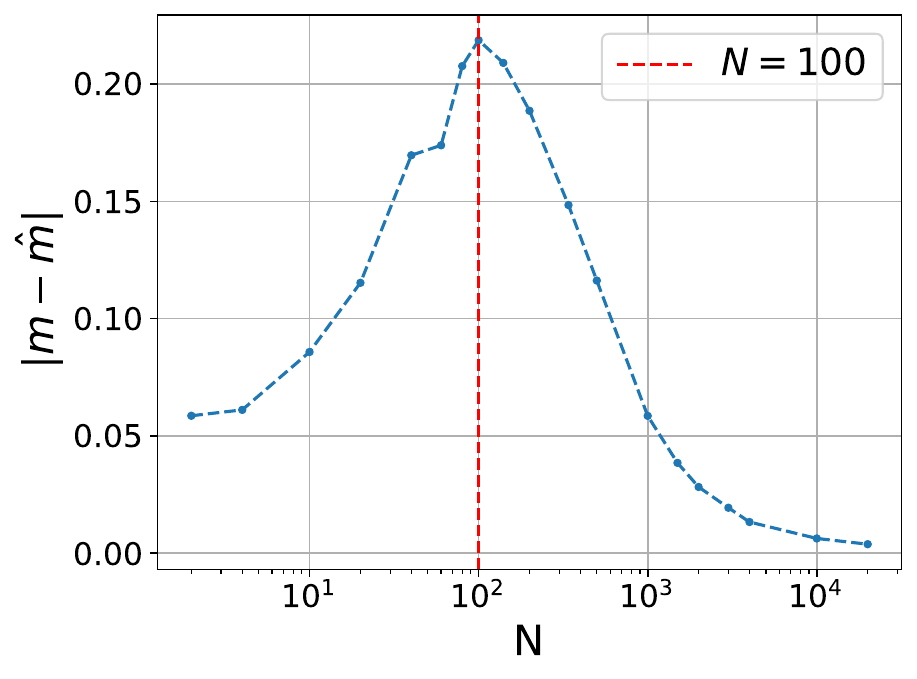} 
		\includegraphics[width=0.328\textwidth]{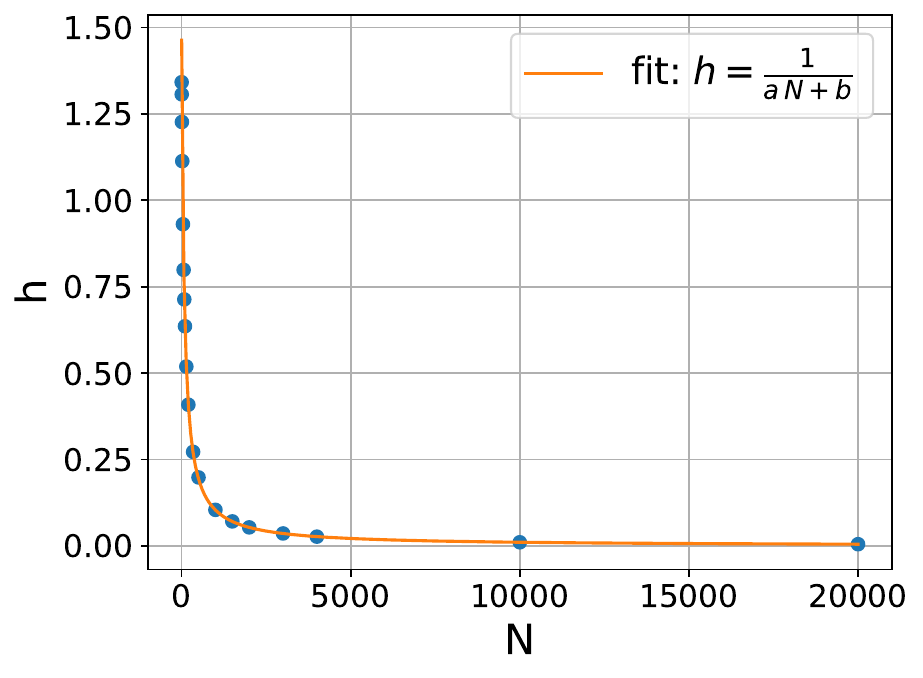}
		\includegraphics[width=0.328\textwidth]{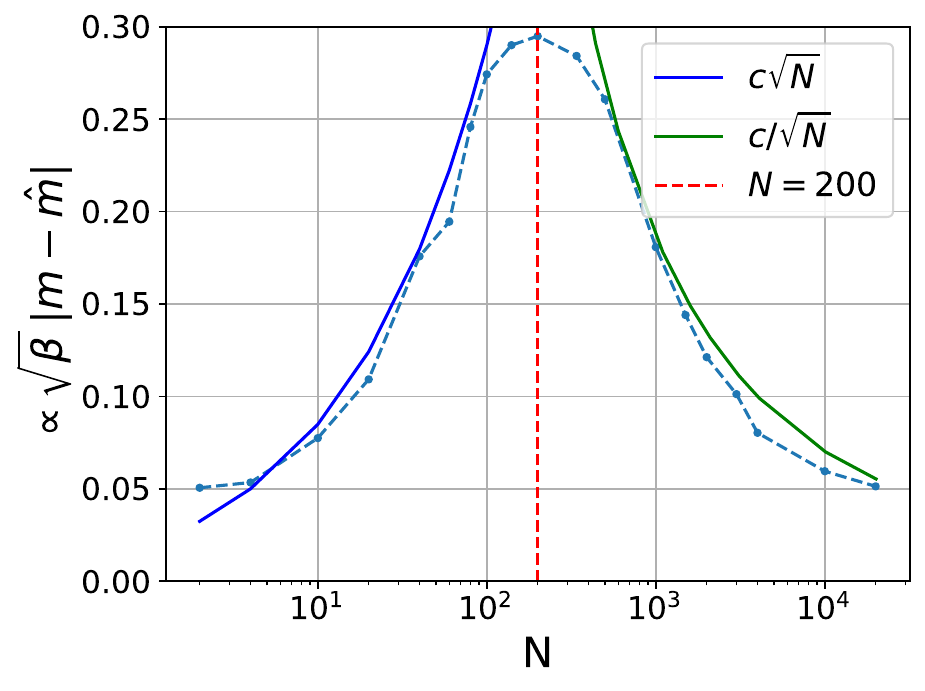}
		
		\caption{$20$-dimensional Bayesian logistic regression with a mean-zero and variance one Gaussian prior, simulated data $a_i \overset{\text{i.i.d.}}\sim \mathcal{N}(0, \frac{1}{d}I_d),$ for $i = 1,\dots,N$, with corresponding simulated labels defined via $y_i \mid a_i, w_{\mathrm{true}} \sim \mathrm{Bernoulli}\left(p_{i}\right)$ with probabilities, $p_i = \sigma(a_i^{\top} w_{\mathrm{true}}) := 1/(1 + \exp{\left(-a_i^{\top} w_{\mathrm{true}}\right)})$, where $w_{\mathrm{true}} \sim \mathcal{N}\!\left(0,\, \frac{1}{d} I_d \right)$. Left: Bell-shaped profile for bias in the mean. Middle: Stepsize scaling is in accordance with Lemma \ref{lem:high_prob_cond_scal}. Right: Bell-shaped profile for rescaled bias in the mean.  }
		\label{fig:double_descent}
	\end{figure}

	The main motivation of this paper is to theoretically justify these observations, and more generally, to understand in which situations it is possible to accurately estimate the expectation of relevant test functions via \eqref{eq:ULA} with a number of steps independent of the dimension.
	
	\paragraph{Classical guarantees and their limitation}
	The problem of quantifying the performance of the Unadjusted Langevin Algorithm \eqref{eq:ULA}, by proving error bounds in the approximation $\mu_k \coloneqq \law\tond*{\hat{X}_k} \approx \pi$ in several metrics, has given rise to an important line of work (cf. \cite{che-book-2024+} and \cite{DuEb24} for an overview).
	An important choice of metric is the $2$-Wasserstein distance, 
	defined for probability measures $\mu,\nu \in \pp_2\tond*{\R^d}$ with finite second moment by
	\begin{equation}\label{eq:w2}
		W_2(\mu,\nu) = \sqrt{\inf_{X,Y} \expe{\abs*{X-Y}^2}},
	\end{equation}
	where the infimum runs over 
	all $\R^{d}$-valued random vectors 
	$X$ and $Y$
	defined on the same probability space 
	$(\Omega, \mathcal{F}, \mathcal{P})$
	with
	$\law(X) = \mu$ and $\law(Y) = \nu$.

	By Kantorovich duality, results establishing upper bounds on $W_2(\mu_k,\pi)$ for $k$ large enough provide a possible justification for using \eqref{eq:ULA} to estimate expectations of Lipschitz test functions with respect to $\pi$.
	Classical Wasserstein guarantees for the Unadjusted Langevin Algorithm, however, require choosing a step-size scaling inversely with the dimension \cite{che-book-2024+}, similarly to what is enforced for the Metropolis correction. Therefore, they are unable to explain the improved empirical performance of \eqref{eq:ULA} often seen in practice, and in particular in the setting of \eqref{eq:potential-stats-intro}.
	To illustrate this limitation, we recall below a prototypical result of this type, which holds under the following classical \emph{log-concavity} assumption.
	\begin{assumption}\label{ass:strong-log-conc-log-smooth}
		The potential $V\in C^2\tond*{\R^d}$ satisfies 
		\[
		\alpha I_d \preccurlyeq \hess V \preccurlyeq \beta I_d,
		\]
		for some $0<\alpha <\beta$. We denote the condition number by $\kappa \coloneqq \frac{\beta}{\alpha}$. 
	\end{assumption}
	\begin{proposition}{\cite{pedrotti2026wasserstein}}
		\label{prop:w2-guarantess-ula-strong-log-conc}
		Under Assumption \ref{ass:strong-log-conc-log-smooth}, if  $h\le \frac1\beta$, we have
		\begin{equation}\label{eq:classical-w2-convergence}
			W_2\tond*{\mu_k, \pi} \le \left(1-\alpha h\right)^{k} W_2(\mu,\hat \pi) + 6h\sqrt{\kappa \beta d}.
		\end{equation}
		In particular, 
		\begin{equation}\label{eq:w2-bias-strong-log-conc}
			W_2\tond*{\pi,\hat \pi} \leq 6h\sqrt{\kappa \beta d}.
		\end{equation}
	\end{proposition}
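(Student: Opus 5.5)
The proof splits into two parts: a contraction estimate for the ULA kernel, and a bound on the one-step discrepancy between ULA and the Langevin dynamics \eqref{eq:LD} run from stationarity. These are then combined by a standard telescoping (or fixed-point) argument.

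\emph{Contraction.} Couple two ULA chains synchronously, driven by the same Brownian increments. Their difference evolves as $\hatx_{k+1}-\hat y_{k+1} = (I - hA_k)(\hatx_k-\hat y_k)$, where $A_k=\int_0^1 \hess V(\hat y_k + s(\hatx_k-\hat y_k))ds$. Under Assumption \ref{ass:strong-log-conc-log-smooth} we have $\alpha I\preccurlyeq A_k\preccurlyeq \beta I$. With $h\le 1/\beta$ this gives $\|I-hA_k\|_{\mathrm{op}}\le 1-\alpha h$. Iterating and taking the infimum over couplings yields $W_2(\mu_k,\hat\pi)\le (1-\alpha h)^k W_2(\mu,\hat\pi)$, using that $\hat\pi$ is stationary for ULA. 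Since $W_2(\mu_k,\pi)\le W_2(\mu_k,\hat\pi)+W_2(\hat\pi,\pi)$, the first bound \eqref{eq:classical-w2-convergence} reduces to proving the bias bound \eqref{eq:w2-bias-strong-log-conc}.

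\emph{Bias.} Let $P_h$ denote one ULA step. Then $W_2(\hat\pi,\pi)=W_2(\hat\pi P_h^k,\pi)$, and
\[
W_2(\hat\pi P_h^k,\pi)\le W_2(\hat\pi P_h^k,\pi P_h^k)+W_2(\pi P_h^k,\pi),
\]
so the whole question is how far ULA started at $\pi$ drifts from $\pi$. Start $X_0=\hatx_0\sim\pi$ and drive both processes with the same Brownian motion; the exact diffusion $X_t$ stays distributed as $\pi$. Writing $e_k=\hatx_k-X_{kh}$, we get
\[
e_{k+1}=e_k-h\bigl(\nabla V(\hatx_k)-\nabla V(X_{kh})\bigr)+\int_{kh}^{(k+1)h}\bigl(\nabla V(X_s)-\nabla V(X_{kh})\bigr)\,ds .
\]
The first two terms contract by $(1-\alpha h)$ in $L^2$. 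Call the integral the local error $\delta_k$. A crude triangle-inequality treatment gives $\|\delta_k\|_{L^2}\lesssim h^{3/2}\beta\sqrt d$, hence a bias of order $\sqrt{h}\,\kappa\sqrt{\beta d}$. That is the wrong order, so the sharper order $h$ requires more care.

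\emph{Sharpening to order $h$.} Decompose $\delta_k$ into a martingale part and a drift part. Expand $\nabla V(X_s)-\nabla V(X_{kh})\approx \hess V(X_{kh})(X_s-X_{kh})$, with $X_s-X_{kh}=\sqrt2(B_s-B_{kh})+O(h)$ drift.
\begin{itemize}[nosep]
\item The Brownian part, $\sqrt2\int \hess V(X_{kh})(B_s-B_{kh})\,ds$, has conditional mean zero given $\mathcal F_{kh}$ and size $h^{3/2}\beta\sqrt d$.
\item The remaining part has size $h^2\beta\|\nabla V\|_{L^2(\pi)}\le h^2\beta\sqrt{\beta d}$, using $\E_\pi|\nabla V|^2=\E_\pi\Delta V\le\beta d$. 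Without $C^3$ assumptions, use Lipschitzness of $\nabla V$ directly rather than a Taylor expansion.
\end{itemize}
Because the mean-zero parts are orthogonal to the $\mathcal F_{kh}$-measurable contracted error, their squared norms add. Unrolling the recursion in $L^2$, with weights $(1-\alpha h)^{k-j}$, then gives
\[
\|e_k\|_{L^2}^2 \lesssim \sum_j (1-\alpha h)^{2(k-j)}\,h^3\beta^2 d+\Bigl(\sum_j(1-\alpha h)^{k-j}h^2\beta^{3/2}\sqrt d\Bigr)^2 .
\]
Summing the geometric series gives $h^2\beta^2 d/\alpha+h^2\beta^3 d/\alpha^2$. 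This is $\lesssim h^2\kappa\beta d$, since $h\le1/\beta$ allows $h^3\beta^2 d/(\alpha h)=h^2\kappa\beta d$ and $h^2\beta^3d/\alpha^2\cdot h^2\le h^2\kappa\beta d\cdot(\beta h)\kappa$. Some care is needed to get exactly $\sqrt{\kappa}$: bounding the drift term via $\alpha h$-weighted Young's inequality against the contraction gives $\kappa$ rather than $\kappa^2$.

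The main obstacle is the cross term between $e_k$ and the non-martingale part of $\delta_k$. The key point is that the Hessian factor makes this term correlate with the already-accumulated error, and the Young split must be tuned so the constant stays at $6$ and the condition-number power at $1/2$. Letting $k\to\infty$ then gives $W_2(\pi,\hat\pi)\le 6h\sqrt{\kappa\beta d}$.
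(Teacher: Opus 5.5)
Your contraction step, and the reduction of \eqref{eq:classical-w2-convergence} to the bias bound \eqref{eq:w2-bias-strong-log-conc} by the triangle inequality, are correct. The gap is in the ``sharpening to order $h$'' step.

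Your split of $\delta_k$ is a centered term $\sqrt2\int_{kh}^{(k+1)h}\nabla^2V(X_{kh})(B_s-B_{kh})\,ds$ plus an $O(h^2)$ drift term. This presupposes that the linearization error $\nabla V(X_s)-\nabla V(X_{kh})-\nabla^2V(X_{kh})(X_s-X_{kh})$ is negligible. That needs a modulus of continuity for $\nabla^2V$, i.e.\ a third-order hypothesis such as Assumption~\ref{ass:third-order-smoothness}. With it, the error is at most $\frac M2\abs*{X_s-X_{kh}}^2$, of order $Mdh$, and this is the origin of the $\scalenumber d$-term in Theorem~\ref{thm:deloc-bias-gauss-perturb-2}. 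That theorem's proof is essentially your scheme, run with the fixed matrix $A=\mathbb{E}_\pi\nabla^2V$.

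Under Assumption~\ref{ass:strong-log-conc-log-smooth} alone, ``using Lipschitzness of $\nabla V$ directly'' only gives $\abs*{\nabla V(X_s)-\nabla V(X_{kh})}\le\beta\abs*{X_s-X_{kh}}$. This isolates no centered part: the conditional mean $\mathbb{E}[\nabla V(X_{kh}+\sqrt2(B_s-B_{kh}))\mid\mathcal F_{kh}]-\nabla V(X_{kh})$ is not $O(s-kh)$ without third derivatives. Take $d=1$ and let $V''$ rise from $1$ to $2$ across a layer around $0$ of width much smaller than $\sqrt h$. This $V$ is $C^2$, and the constant in the proposition may not depend on the width. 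Then $\mathbb{E}[\delta_k\mid\mathcal F_k]$ is of order $h^{3/2}$ on $\{\abs*{X_{kh}}\lesssim\sqrt h\}$, a set of $\pi$-mass of order $\sqrt h$. Hence $\norm*{\mathbb{E}[\delta_k\mid\mathcal F_k]}_{L^2}$ is of order $h^{7/4}$, and your Cauchy--Schwarz/Young treatment of the cross term gives only $h^{3/4}$. In general, the only bound available under Assumption~\ref{ass:strong-log-conc-log-smooth} is the crude $h^{3/2}\beta\sqrt d$, which returns you to the $O(\sqrt h)$ rate. You correctly flag this cross term as the main obstacle, but the proposal has no idea that overcomes it. An order-$h$ $W_2$-bias under Assumption~\ref{ass:strong-log-conc-log-smooth} alone needs a genuinely different argument; the paper does not prove this proposition itself but imports it from the cited reference.

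Second, even granting the split (i.e.\ under a third-order assumption), your bookkeeping does not yield $\sqrt\kappa$. A non-centered drift term of size $h^2\beta^{3/2}\sqrt d$ per step, damped only at rate $\alpha h$, accumulates to $h^2\beta^{3/2}\sqrt d/(\alpha h)=h\kappa\sqrt{\beta d}$. So it contributes $h^2\kappa^2\beta d$ to $\norm*{e_k}_{L^2}^2$. The $\alpha h$-weighted Young inequality gives the same quantity up to a factor $2$, and your inequality for this term contains a spurious extra factor $h^2$. In the proof of Theorem~\ref{thm:deloc-bias-gauss-perturb-2}, the $\kappa$-loss in the leading term is avoided only through the spectral structure of the linearized dynamics. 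That is, it uses $A=\mathbb{E}_\pi[\nabla V\nabla V^\top]$ together with \eqref{eq:smoothing}, which again presupposes the linearization. Finally, since everything is tracked only up to $\lesssim$, nothing in the proposal produces the explicit constant $6$.
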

	The error bound in \eqref{eq:classical-w2-convergence} is made of two terms: the first is exponentially decaying, and reflects the exponential convergence of \eqref{eq:ULA} to $\hat \pi$. The second corresponds to the error due to the bias $\hat \pi \neq \pi$: making it smaller than some desired accuracy requires choosing a step-size $h$ scaling with $\frac1{\sqrt{d}}$, and correspondingly the iteration complexity of the algorithm scales with the dimension too. Unfortunately, this dimension dependence typically cannot be  removed when measuring errors in the $W_2$-metric, even in the simplest case where $\pi$ is a Gaussian measure.

	\subsection{Our contributions}
	
	We propose to use the subspace robust Wasserstein distance $\mswm$ from \cite{paty2019subspace} to quantify the convergence of ULA, instead of the classical Wasserstein-2 distance. We characterize the iteration complexity to sample from ridge-separable potentials \eqref{eq:potential-stats-intro} in this metric, thus  bridging the gap between the empirical observations discussed above and the classical theoretical guarantees. More generally, we also provide bounds on the asymptotic bias of ULA for target distributions that correspond to perturbations of Gaussian measures, revealing a dimension-free behavior when the perturbation is small enough. 
	
	\paragraph{Subspace robust Wasserstein distance}
	The subspace robust Wasserstein distance $\mswm$ was introduced in \cite{paty2019subspace}, and is defined for $\mu,\nu \in \pp_2(\R^d)$ by
	\[
	\mswm^2\tond*{\mu,\nu} = \inf_{X,Y} \opnorm{\expe{(X-Y) (X-Y)^T}},
	\]
	where again the infimum is over all couplings $(X,Y)$ of $\mu,\nu$. By comparison, notice that if we replace $\opnorm{\expe{(X-Y) (X-Y)^T}}$ with $\trace{\expe{(X-Y) (X-Y)^T}}$, we recover the standard Wasserstein-2 distance $W_2^2(\mu,\nu)$: in particular, 
	we always have 
	\[
	\frac{W_2(\mu,\nu)}{\sqrt{d}}\le \msw{\mu, \nu} \le W_2(\mu,\nu).
	\]
    A first crucial observation that we make in this paper is that, when measuring the asymptotic bias of \eqref{eq:ULA}, the first inequality above is approximately achieved for some distributions, in the sense that $W_2(\pi,\hat\pi)$ scales with $\sqrt{d}$ but $\msw{\pi,\hat \pi}$ is dimension-free: this is easily checked, for example, when $\pi$ is Gaussian, cf. our Proposition \ref{pro:msw-bias-gaussian} below.

    As a second important motivation for choosing this metric, observe that upper bounds on the bias $\msw{\pi,\hat\pi}$ naturally yield guarantees for the estimation of low-dimensional observables. Indeed, for an integer $k\in \N_{>0}$ and  $\theta = \tond*{\theta_1, \ldots, \theta_k} \in \tond*{S^{d-1}}^k$, denote by $L_{\theta} \colon \R^d \to \R^k$ to denote the linear map
	\[
	x \to \tond*{\scal{x,\theta_1}, \ldots, \scal{x,\theta_k}},
	\]
	and for a measure $\mu \in \pp \tond*{\R^d}$, denote the corresponding pushforward measure by 
	$
	\mu_\theta \coloneqq L_\theta \#\mu \in \pp\tond*{\R^{k}}$.
	Then, we have the inequality 
	\[
	W_2(\mu_\theta, \nu_\theta) \le \sqrt{k} \,\msw{\mu,\nu}.
	\]
	for all $k \in \N_{>0}$ and $\theta\in \prosph{k}$. 
	Besides low-dimensional observables, the $\mswm$-metric also controls the difference between the mean of two distributions, and is intimately related to this quantity. In fact, for any coupling $(X,Y)$ of $(\mu,\nu)$ it is not difficult to check that
	\begin{equation}  \label{eq:general-bounds-mswm}
		\abs*{\int x\dd\mu -\int x \dd\nu}\le \msw{\mu,\nu} \le \sqrt{\abs*{\int x \dd\mu  - \int x \dd \nu}^2+\opnorm{\covrv{X-Y}}} .
	\end{equation}

	\paragraph{Bell-shaped mixing profile for ridge-separable potentials}
	Equipped with this metric, we 
	establish bounds on
	the iteration complexity of sampling in $\mswm$-accuracy from $\pi\propto e^{-V}$ with $V$ as in \eqref{eq:potential-stats-intro}.
	In order to have scale-invariant results, we actually consider the rescaled metric $\sqrt{\alpha}\mswm$, where $\alpha$ is the strong log-concavity parameter of $\pi$ (this is a natural and standard procedure, cf. \cite[Rmk. 4.1.4]{che-book-2024+}). 
	Our main result in this setting
	distinguishes between three different regimes, and can be informally summarized as follows.
	\begin{theorem}[Informal] \label{thm:informal-ridge}
		Consider a ridge-separable potential $V$ as in \eqref{eq:potential-stats-intro}, with i.i.d sub-Gaussian data, smooth strongly convex functions $u_i\in C^3(\R)$ and $\lambda$ of order $\expe{\abs*{a_i}^{2}}$. 
		Then, with high-probability, setting $\alpha = \sup{\left\{\mathfrak{m} >0 \mid \hess V \succeq \mathfrak{m} I_d\right\}}$, we have that
		the iteration complexity of sampling in a fixed $\sqrt{\alpha}{\mswm}$-accuracy $\eps>0$ is upper-bounded by
		\begin{enumerate}
			\item $\frac{C\sqrt{N}}{\eps}$ in the regime $N\lesssim d$,
			\item $\frac{C}{\eps}$ in the regime $N\gtrsim d^{2}$,
		\end{enumerate}
		where $C > 0$ is a dimension-free constant up to logarithmic factors.
		Furthermore, when $N\asymp d$,
		the iteration complexity can scale as  $\frac{\sqrt{d}}{\eps}$.
	\end{theorem}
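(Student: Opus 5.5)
The plan splits the $\sqrt{\alpha}\mswm$ error after $k$ steps into a contraction term and an asymptotic bias term, exactly as in Proposition \ref{prop:w2-guarantess-ula-strong-log-conc}. The iterates are handled by synchronous coupling. Since ULA with $h\le 1/\beta$ contracts every coupling at rate $(1-\alpha h)$ in operator norm of the second-moment matrix, it contracts $\mswm$ as well. Hence
\[
\msw{\mu_k,\pi}\le (1-\alpha h)^k\msw{\mu,\hat\pi}+\msw{\hat\pi,\pi}.
\]
The iteration count is then $k\asymp \frac{1}{\alpha h}\log(1/\eps)$. This reduces the problem to choosing the largest $h$ with $\sqrt{\alpha}\,\msw{\pi,\hat\pi}\le \eps$, and to showing that $\kappa=\beta/\alpha$ is dimension-free with high probability. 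The latter follows from sub-Gaussian matrix concentration for $\sum_i u_i''\,a_ia_i^T$ with $\lambda\asymp \expe{\abs*{a_i}^2}$; this is the content of Lemma \ref{lem:high_prob_cond_scal}.

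The bias is bounded in terms of a stationary coupling. I would couple $\hat X\sim\hat\pi$ with a continuous Langevin step started from it and expand the one-step local error $\hat X_{1}-X_h$. Its mean part is of order $h^2\,\expe{\nabla^3V[\cdot]}$ plus $h^2\hess V\nabla V$. Its fluctuation part is of order $h^{3/2}\hess V\,\xi$ and is mean zero. Propagating these with the contraction, the martingale (mean-zero) parts accumulate in the covariance $\opnorm{\cdot}$, contributing $O(h\sqrt{\beta\kappa})$ with no trace and hence no $d$; this is the Gaussian computation of Proposition \ref{pro:msw-bias-gaussian}. The mean parts enter through the first term of \eqref{eq:general-bounds-mswm}.

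Only the third-derivative (non-Gaussian) part $\nabla^3 V=\sum_i u_i'''(\scal{x,a_i})\,a_i^{\otimes3}$ can carry dimension dependence. Its contraction with the noise produces the Itô-type drift $h\sum_i u_i'''\,\abs*{a_i}^2 a_i$, evaluated against the law. The three regimes come from bounding this vector in different ways.
\begin{itemize}
\item For $N\lesssim d$, the vector lies in the span of the $a_i$. I would project the whole analysis onto this $N$-dimensional span $E$; on $E^\perp$ the target is exactly Gaussian and ULA's error there is dimension-free. Applying the classical $W_2$ bound \eqref{eq:w2-bias-strong-log-conc} on $E$ gives $h\sqrt{\kappa\beta N}$, hence complexity $\sqrt N/\eps$.
\item For $N\gtrsim d^2$, I would write $\pi$ as a perturbation of its Laplace/Gaussian approximation. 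The perturbation size is governed by $\abs{\nabla^3V}$ in the $\hess V^{-1/2}$ norm. This scales like $N\cdot(\abs{a}^2/(N\abs a^2))^{3/2}d^{1/2}\sim d/\sqrt N$ after summing the vector norms, which becomes $O(1)$ exactly when $N\gtrsim d^2$.
\item For the lower bound at $N\asymp d$, I would construct a specific model, for example $u_i$ with constant third derivative locally and Gaussian $a_i$. There I would compute the mean of $\hat\pi-\pi$ to first order in $h$ and show that it has norm $\gtrsim h\sqrt d$, using the first inequality in \eqref{eq:general-bounds-mswm}.
\end{itemize}

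The main obstacle is the $N\gtrsim d^2$ regime. It needs a quantitative bound on the ULA bias for near-Gaussian targets that depends on the size of the non-Gaussian perturbation rather than on $d$. This requires controlling the mean term $h\,\expe{\nabla^3V[\hess V^{-1}]}$ uniformly, with high probability over the data. I would attack it first via a Poisson-equation/Stein argument: compare $\expe{\hat\pi}{x}-\expe{\pi}{x}$ through the generator difference, and then use concentration of $\sum_i u_i'''\abs{a_i}^2a_i$, whose mean is small by isotropy and whose fluctuations are of order $\sqrt{N}\,d^{3/2}/N^{3/2}$.
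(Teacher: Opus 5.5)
Your $N\lesssim d$ argument is essentially the paper's: split $\pi=\pi_E\otimes\pi_{E^\perp}$ and $\hat\pi=\hat\pi_E\otimes\hat\pi_{E^\perp}$, use the classical $W_2$ bias on the $\le N$-dimensional factor and the Gaussian bias on the complement, and glue with Lemma~\ref{lem:tensorization-mswm}. One correction: there $\alpha=\lambda$, and $\kappa$ is controlled by Lemma~\ref{lem:tail-prob-bound-opnom-sample-cov-2} together with $\lambda\asymp\trace{\Sigma}\ge d\opnorm{\Sigma}/\condcov$. It is not controlled by Lemma~\ref{lem:high_prob_cond_scal}, which needs $\bar\alpha>0$ and $N\gtrsim d^{3/2}$.

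The main gap is the regime $N\gtrsim d^2$, and it starts with your contraction claim. Under synchronous coupling the error evolves as $Z_{k+1}=J_kZ_k$, where $J_k$ is the Jacobian of $x\mapsto x-h\nabla V(x)$ averaged along the segment between the two chains. This $J_k$ is random and correlated with $Z_k$. The bound $\opnorm{J_k}\le 1-\alpha h$ contracts $\expe{\abs{Z_k}^2}$, i.e.\ $W_2$, but not $\opnorm{\expe{Z_kZ_k^\top}}$. For instance, a random symmetric $J_k=c\,vv^\top$ with $v\propto Z_k/\abs{Z_k}+e_1$ moves mass spread over $d$ directions onto $e_1$. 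So $\mswm$-contraction is only available when $J_k$ is deterministic, i.e.\ for quadratic $V$.

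For the mixing term this is harmless: contract in $W_2$ and pay $\log W_2(\mu,\hat\pi)$, as the paper does. But your bias argument (``propagating with the contraction, the mean-zero parts accumulate in operator norm with no trace'') rests on exactly this step. The Poisson/Stein alternative you sketch addresses only the mean, whereas by \eqref{eq:general-bounds-mswm} you also need the operator norm of the covariance of the coupling error. Its concentration step is also heuristic, since the $u_i'''(\scal{x,a_i})$ are integrated against a $\pi$ that depends on all the data.

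The missing idea is the decomposition $X_{kh}-\hat X_k=G_k+R_k$ from the proof of Theorem~\ref{thm:deloc-bias-gauss-perturb-2}:
\begin{itemize}
\item $G_k$ is propagated by the deterministic matrix $I_d-hA$, with $A=\expewr{\pi}{\hess V}=\expewr{\pi}{\nabla V\nabla V^\top}$. Lemma~\ref{lem:matrix-estimates-ula} then gives $\sup_{\abs{u}=1}\norm{\scal{u,G_k}}_{L^2}\lesssim h\sqrt\beta$.
\item All randomness of the Hessian, $Q=\hess V-A$ with $\opnorm{Q(x)}\lesssim M(\abs{x-x^*}+\sqrt{d/\alpha})$, is pushed into $R_k$. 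This term is bounded in full $L^2$ by $h\sqrt\beta\,\scalenumber d$.
\end{itemize}
Your heuristic target $d/\sqrt N$ is then $\scalenumber d$. Getting $\scalenumber\lesssim N^{-1/2}$, however, needs the uniform cubic bound $\sup_{\abs{\theta}\le 1}\sum_i\abs{\scal{a_i,\theta}}^3\lesssim (K\opnorm{\Sigma}^{1/2})^3(\sqrt d+N^{1/3})^3$ of Lemma~\ref{lem:chevet_tensor}, proved by generic chaining. Summing $\abs{a_i}^3$ only gives $\scalenumber\lesssim d^{3/2}/\sqrt N$.

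For $N\asymp d$, you leave the first-order computation of the mean of $\hat\pi$ with Gaussian $a_i$ undone, and it is genuinely hard; the paper supports the Gaussian-data case only numerically (Figure~\ref{fig:proportional_regime_simulation}). Also, a constant $u_i'''$ is incompatible with $\bar\alpha\le u_i''\le\bar\beta$ on all of $\R$.

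The paper's device is to take $\nu$ uniform on $\{\pm e_j\}$, $N=d$, $\lambda=\frac14$ and $u_i=U-\frac18x^2$. Then $\pi$ is a product measure, and with constant probability at least $d/10$ coordinates are exact copies of the one-dimensional $\rho=e^{-U}$ of Lemma~\ref{lem:one-dimensional-potential}. This $\rho$ is centred with $\int xU''\dd\rho>0$, and its ULA limit has mean at least $Ch$, so $\abs{m-\hat m}\gtrsim\sqrt d\,h$.

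You also omit the iteration lower bound needed to turn $h\lesssim\eps/\sqrt d$ into $k\gtrsim\sqrt d/\eps$. The paper uses a coordinate hit by no data point, where $\pi$ is $\mathcal{N}(0,4)$. Initialized at $\delta_1$, its mean along ULA is exactly $(1-h/4)^k$.
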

	In summary, the picture that emerges is illustrated in Figure \ref{fig:theory}.
	In the \emph{overparameterized regime}, where $N  \lesssim 1$, we provide a dimension-free iteration complexity to sample in $\mswm$-accuracy with \eqref{eq:ULA}; somewhat more surprisingly, we also prove a similar phenomenon in an \emph{asymptotic regime}, where $N\gtrsim d^2$.
	On the other hand, in the proportional regime where $N\asymp d$, we consider a specific choice of $V$ 
	illustrating that dimension-free rates are not possible in this case, and provide further numerical experiments to confirm this.
	\begin{figure}[http!]
		\centering
		\includegraphics[width=0.55\textwidth]{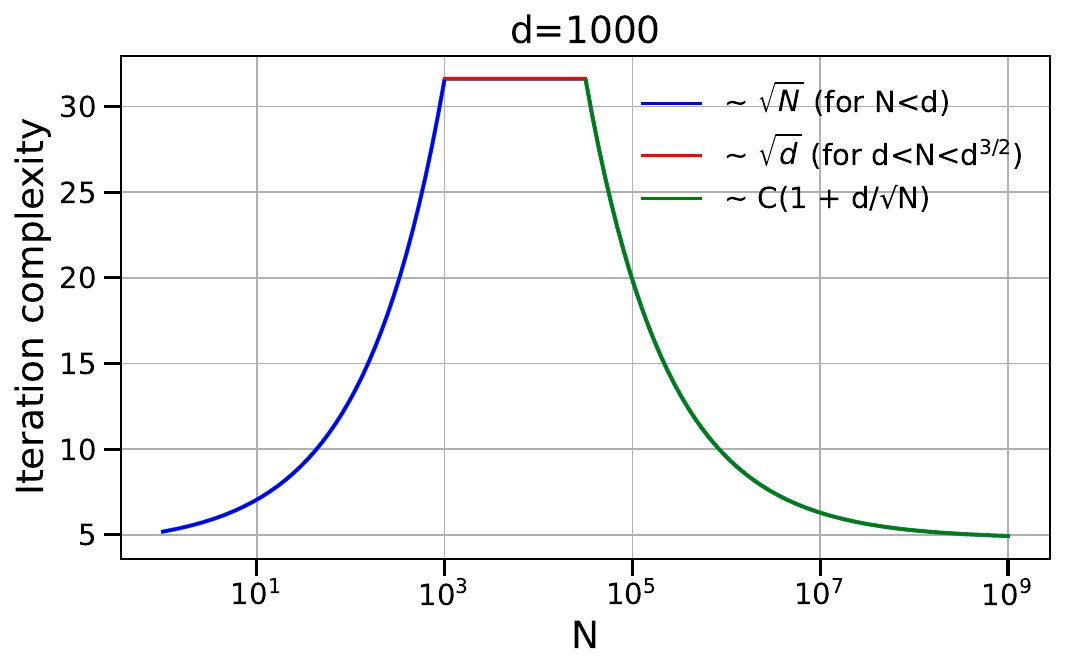}
		\caption{Illustration of Theorem \ref{thm:informal-ridge}}.
		\label{fig:theory}
	\end{figure}
	The findings of Theorem \ref{thm:informal-ridge} are consistent with the empirical observations previously discussed. Note that, under exponential convergence to a biased limit, the iteration complexity is governed by the stepsize required to control the asymptotic bias.
	Since it is difficult to measure the $\mswm$-distance numerically (between $\pi$ and $\hat\pi$), in Figure \ref{fig:double_descent} we plotted instead the error in the mean estimation, which, as suggested by \eqref{eq:general-bounds-mswm}, is an appropriate proxy.

	As anticipated, the findings of Theorem \ref{thm:informal-ridge} illustrated in Figure \ref{fig:theory} cannot be deduced from existing results in the literature.
	First, if the goal is to ensure that $\sqrt{\alpha}W_2(\mu_k,\pi) \leq \eps$ along \eqref{eq:ULA}, classical guarantees like Proposition \ref{prop:w2-guarantess-ula-strong-log-conc} require {the} number of iterations of \eqref{eq:ULA}  {to grow} polynomially with the dimension.
	In contrast, in Theorem \ref{thm:informal-ridge}, the iteration complexity to ensure $\sqrt{\alpha}\msw{\mu_k, \pi} \leq \eps$ is dimension-free (up to logarithmic factors) in the overparameterized and asymptotic regimes.
	
	Closer to our work, a related recent line of research has also focused on replacing classical distances with weaker metrics to provide dimension-free bounds on the bias \cite{che-che-wee-wea-2024,lac-zho-2025}. 
	The work \cite{che-che-wee-wea-2024} introduced in particular an $\ell^\infty$-version of Wasserstein distance, and established dimension-free bound (up to logarithmic terms) on the $\ell^\infty$-Wasserstein bias for distributions with sparse interactions.
	The sparsity assumptions considered in \cite{che-che-wee-wea-2024, lac-zho-2025}, however, are often not satisfied in statistics and machine learning applications, and in particular for the ridge-separable potentials \eqref{eq:potential-stats-intro}. Imposing an approximate product measure structure in this setting is not realistic, and it is more natural to consider assumptions that are not formulated in terms of a coordinate system. Indeed, applying an isometric transformation to the corresponding measure $\pi$ results 
	in a new potential again of the form $\eqref{eq:potential-stats-intro}$ (with new transformed data $(\tilde a_i)_i$).
	In contrast, both for the potentials \eqref{eq:potential-stats-intro} and more generally in Section \ref{sec:iso-deloc-gauss-pert} we provide upper bounds on the $\mswm$-bias under rotationally invariant assumptions on the target measure $\pi$.

	\paragraph{Subspace robust Wasserstein bias for Gaussian perturbations}
	With the aim to prove Theorem \ref{thm:informal-ridge}, we establish in Theorem \ref{thm:deloc-bias-gauss-perturb-2} upper bounds on the subspace-robust Wasserstein bias for more general distributions,
	which are of independent interest.
	More precisely, we consider distributions satisfying the classical Assumption \ref{ass:strong-log-conc-log-smooth}, but also 
	an additional third-order condition $\opnorm{\nabla^3 V} \le M$. 
	Correspondingly, we define a scale-invariant parameter $\scalenumber \coloneq \frac{M}{\alpha^{3/2}}$, to quantify closeness to a Gaussian distribution (which correspond to $M = \scalenumber = 0$). As $\scalenumber\to 0$, our bounds recover in particular the dimension-free inequality $\msw{\pi,\hat \pi} \le \sqrt{\beta}h$, valid when $\pi$ is Gaussian.

	\paragraph{Overview of proof techniques}
	
	Theorem \ref{thm:deloc-bias-gauss-perturb-2}, the Gaussian-perturbation result,  is proved by carefully analyzing the synchronous coupling of \eqref{eq:ULA} and \eqref{eq:LD}, separating the effect of the Gaussian components (which give rise to dimension-free error terms), from the perturbation part.

	The proof of Theorem \ref{thm:informal-ridge} proceeds by treating the three regimes separately. In the large-sample regime $N\gg d$, due to posterior concentration, we can treat the posterior as a Gaussian perturbation and apply our Theorem \ref{thm:deloc-bias-gauss-perturb-2}. For ridge-separable potentials with sub-Gaussian data, this requires high-probability estimates on the second and third derivatives. In the overparameterized regime $N\ll d$, the data-dependent part of the potential is a low-rank perturbation of a Gaussian, and this effective rank replaces the dimension $d$ in our bounds.
    Finally,  on the other hand, in the proportional regime $N\asymp d$ we construct an explicit example showing that dimension-free guarantees fail in general: the $\mswm$-bias, and even the bias in the mean, can grow with dimension for typical data.

	\paragraph{Paper organization}
	This paper is organized as follows. 
	In Section \ref{sec:iso-deloc-gauss-pert}, we  prove Theorem  \ref{thm:deloc-bias-gauss-perturb-2}, our general upper bounds on the $\mswm$-bias for Gaussian perturbations. Section \ref{sec:appl-stats-models} contains the proof of Theorem \ref{thm:informal-ridge} and a more in-depth discussion of the results.
	In Appendix \ref{sec:sliced-wasserstein} we further discuss the $\mswm$-distance and compare it with its variants; in particular, 
	we discuss results in expectation/high-probability about the bias in ``typical directions'', with implications for estimating the expectation $\int f \dd\pi$ for certain observables. Finally, in  Appendix \ref{sec:app:additional_experiments} we provide additional numerical experiments in the spirit of Figure \ref{fig:double_descent}, while the remaining sections of the Appendix
	contain omitted proofs and auxiliary technical results: these include the derivation of high-probability bounds needed in Section \ref{sec:appl-stats-models}, as well as several bounds on $L^p$-norms appearing in the proofs in Section \ref{sec:iso-deloc-gauss-pert}.

	\section{Subspace robust Wasserstein bias for Gaussian perturbations}
	\label{sec:iso-deloc-gauss-pert}
	In this section, we prove our main result about the $\mswm$-bias for Gaussian perturbations.  
	Our first motivation for considering Gaussian perturbations is the following proposition, of which we give a simple proof in Appendix \ref{sec:omitted-proofs}.
	It shows that, when $\pi$ is a Gaussian measure, the $\mswm$-bias is dimension-free, unlike what happens when using the standard Wasserstein-$2$ distance.
	\begin{proposition}\label{pro:msw-bias-gaussian}
		Consider a Gaussian measure $\pi \propto e^{-V}$ with $V(x) =\frac12 \scal*{x-m, A(x-m)}$ and $\alpha I_d \preccurlyeq A \preccurlyeq \beta I_d$ for some constants $0<\alpha<\beta$.
		Then, if $h\leq \frac1\beta$, we have
		\begin{equation}\label{eq:msw-bias-gaussian}
			\msw{\pi,\hat \pi} \lesssim \sqrt{\beta}h.
		\end{equation}
		However, 
		\[
		W_2(\pi, \hat \pi) \gtrsim \sqrt{\alpha d}h.
		\]
	\end{proposition}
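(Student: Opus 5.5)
Both $\pi$ and $\hat\pi$ are Gaussian, so I will compute everything explicitly in the eigenbasis of $A$. With $V(x)=\frac12\scal*{x-m,A(x-m)}$, ULA reads
\[
\hatx_{k+1}-m=(I-hA)(\hatx_k-m)+\sqrt{2h}\,\xi_{k+1},
\qquad \xi_{k+1}\sim\mathcal N(0,I_d).
\]
For $h\le 1/\beta$ we have $\opnorm{I-hA}\le 1-\alpha h<1$. The chain is therefore a contractive linear Gaussian recursion, and its stationary law is $\hat\pi=\mathcal N(m,\hat\Sigma)$. Here $\hat\Sigma$ solves $\hat\Sigma=(I-hA)\hat\Sigma(I-hA)+2hI$. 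Since $A$ and $\hat\Sigma$ commute, this gives
\[
\hat\Sigma=2h\bigl(I-(I-hA)^2\bigr)^{-1}=(A-\tfrac h2A^2)^{-1}=A^{-1}\bigl(I-\tfrac h2A\bigr)^{-1}.
\]
So $\pi=\mathcal N(m,A^{-1})$ and $\hat\pi=\mathcal N(m,\hat\Sigma)$ have the same mean and commuting covariances, with eigenvalues $1/a_i$ and $1/(a_i(1-ha_i/2))$, where $a_i\in[\alpha,\beta]$ are the eigenvalues of $A$.

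For the upper bound I take the coupling $Y=m+\hat\Sigma^{1/2}A^{1/2}(X-m)$ with $X\sim\pi$. Then $X-Y=(I-\hat\Sigma^{1/2}A^{1/2})(X-m)$, and
\[
\expe{(X-Y)(X-Y)^T}=(I-\hat\Sigma^{1/2}A^{1/2})^2A^{-1}.
\]
This matrix is diagonal in the common eigenbasis, with entries
\[
\frac{1}{a_i}\Bigl(1-(1-\tfrac h2a_i)^{-1/2}\Bigr)^2 .
\]
Because $ha_i\le 1$, we have $(1-ha_i/2)^{-1/2}-1\lesssim ha_i$, so each entry is $\lesssim h^2a_i\le h^2\beta$. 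Taking the operator norm, i.e.\ the maximum over $i$, gives $\msw{\pi,\hat\pi}^2\lesssim\beta h^2$. This is the bound in \eqref{eq:msw-bias-gaussian}.

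For the lower bound on $W_2$, I use the closed form of $W_2$ between Gaussians with commuting covariances: $W_2^2=\trace{(A^{-1/2}-\hat\Sigma^{1/2})^2}$, because the coupling above is optimal in that case. Alternatively, one can avoid the closed form by using $W_2^2\ge\sum_i W_2^2$ of the one-dimensional marginals along the eigenvectors, which follows from orthogonal decomposition of any coupling. Each one-dimensional term equals
\[
\frac1{a_i}\Bigl((1-\tfrac h2a_i)^{-1/2}-1\Bigr)^2\ge \frac1{a_i}\Bigl(\frac{ha_i}{4}\Bigr)^2\gtrsim h^2a_i\ge h^2\alpha,
\]
using $(1-x)^{-1/2}-1\ge x/2$. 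Summing over the $d$ eigenvalues gives $W_2^2\gtrsim d\alpha h^2$.

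The only point requiring care is the elementary two-sided estimate on $(1-x)^{-1/2}-1$ for $x=ha_i/2\in(0,1/2]$. Convexity gives the lower bound $x/2$. The mean value theorem gives the upper bound $\le (1/2)^{-3/2}x/2\lesssim x$. The rest is bookkeeping, together with a check that the stationary measure exists and is unique, which follows from the contraction $\opnorm{I-hA}<1$.
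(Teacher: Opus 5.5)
Your proposal is correct and follows essentially the same route as the paper: it uses the explicit stationary law $\hat\pi=\mathcal N\bigl(m,A^{-1}(I_d-\frac h2A)^{-1}\bigr)$, diagonalizes $A$, bounds $\mathrm{SRW}$ with the coordinatewise rescaling coupling, and lower-bounds $W_2$ by summing the one-dimensional Gaussian $W_2$ distances over the eigendirections. You also fill in details the paper leaves implicit, namely the Lyapunov-equation derivation of $\hat\Sigma$ and the two-sided estimate on $(1-x)^{-1/2}-1$ for $x\in(0,1/2]$.
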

	The second motivation comes from the 
	key insight that ridge-separable distributions are closely related to Gaussian measures, at least in the overparameterized regimes $N\ll d$, thanks to the Gaussian prior component, and in the regime $N\gg d$, where asymptotic results like the Bernstein-Von Mises Theorem play an important role \cite{VanderVaart98}.

	To go beyond the Gaussian example, our focus in this section is on distributions which are strongly log-concave and log-smooth as in Assumption \ref{ass:strong-log-conc-log-smooth}, but also satisfying the following additional third-order smoothness condition.
	\begin{assumption}
		\label{ass:third-order-smoothness}
		The potential $V\in C^3\tond*{\R^d}$ satisfies 
		\[
		\opnorm{\nabla^3 V} \le M
		\]
		for some constant $M\geq0$,
		where we define the operator norm of
		$A \in \mathbb{R}^{d \times d \times d}$ by
		\[
		\|A\|_{\textnormal{op}} \coloneqq \sup\left\{\left.\sum^{d}_{i,j,k=1}A_{ijk}x_{i}y_{j}z_{k} \right| \sum_{i}x_i^2\leq 1,\sum_{j}y_j^2\le 1, \sum_{k}z_k^2\le 1\right\}.
		\]
		Equivalently,  for all $x,y\in \R^d$
		\[
		\opnorm{\hess V(y) - \hess V(x)} \le M \abs*{x-y}
		\]
		and 
		\[
		\abs*{\nabla V(y) -\nabla V(x)-\hess V(x)(y-x)} \leq \frac{M}{2}\abs*{x-y}^2.
		\]
	\end{assumption}
    For distributions satisfying Assumptions \ref{ass:strong-log-conc-log-smooth} and \ref{ass:third-order-smoothness}, we naturally introduce the following (scale-invariant) parameter of $\pi$, in the spirit of the condition number:
		\begin{equation}
			\scalenumber \coloneqq \frac{M}{\alpha^{\frac{3}{2}}}.
		\end{equation}
	Notice that the case $M=\scalenumber=0$ corresponds to $\pi$ being a Gaussian measure, and so Assumption \ref{ass:third-order-smoothness} quantifies the closeness of $\pi$ to being Gaussian.

	Our main result for this class of distributions is the following theorem, which provides an upper bound on the subspace robust Wasserstein asymptotic bias of ULA, as well as an  improved bound for the error in the mean estimation, which goes to $0$ as $\scalenumber \to 0$. We also record a new bound on the bias in the standard $W_2$-metric of independent interest, since it follows easily with the same proof; this $W_2$-bound  is closely related to the estimate   $W_2(\pi,\hat \pi)  \lesssim h\kappa \sqrt{\beta d}+ hd\scalenumber\sqrt{\alpha}$ obtained in \cite[Theorem 5]{dal-kar-2019} under the same assumptions.
	\begin{theorem}[$\mswm$-bias of Gaussian perturbations]
    \label{thm:deloc-bias-gauss-perturb-2}
		Under Assumptions \ref{ass:strong-log-conc-log-smooth} and \ref{ass:third-order-smoothness}, if $h\le \frac{1}{\beta}$ 
		we have that
		\begin{align}
			\msw{\pi,\hat \pi} &\lesssim h \sqrt{\beta}+  h \sqrt{\beta} \scalenumber d,
			\\
			\abs*{{m}-\hat m}  & \lesssim  h \sqrt{\beta} \scalenumber d,
			\\
			W_2(\pi,\hat \pi) & \lesssim h \sqrt{\beta d}  + h \sqrt{\beta} \scalenumber d.
		\end{align}
	\end{theorem}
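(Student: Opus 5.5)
We plan to couple, synchronously, the ULA chain started at $\hat X_0\sim\hat\pi$ with the Langevin diffusion started at $X_0\sim\pi$, driving both with the same Brownian motion. Stationarity is preserved, since $\hat X_k\sim\hat\pi$ and $X_{kh}\sim\pi$ for all $k$. The aim is to bound the error $e_k\coloneqq X_{kh}-\hat X_k$ in the quantities $\sup_{\theta\in S^{d-1}}\expe{\scal{e_k,\theta}^2}$, $\abs{\expe{e_k}}$ and $\expe{\abs{e_k}^2}$. These control respectively $\mswm^2$, the mean bias (because $m-\hat m=\expe{e_k}$ exactly under stationarity), and $W_2^2$.

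Over one step we have
\[
e_{k+1}=e_k-h\tond*{\nabla V(X_{kh})-\nabla V(\hat X_k)}-\int_{kh}^{(k+1)h}\tond*{\nabla V(X_s)-\nabla V(X_{kh})}\dd s .
\]
The second term equals $H_k e_k$ with $H_k=\int_0^1\hess V(\hat X_k+t e_k)\dd t$, so the contraction part is $(I-hH_k)e_k$, and $\opnorm{I-hH_k}\le 1-\alpha h$. The key step is to split the local error $\int(\nabla V(X_s)-\nabla V(X_{kh}))\dd s$ using Itô's formula for $\nabla V(X_s)$. The Itô formula gives three pieces: a martingale part $\int\hess V(X_r)\sqrt2\dd B_r$, a drift part $-\hess V\nabla V$, and an Itô correction $\Delta\nabla V=\nabla(\Delta V)$. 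The correction is the only term that sees the dimension, and it is bounded by $\abs{\nabla\Delta V}\le dM$ via Assumption \ref{ass:third-order-smoothness}, since $\nabla\Delta V$ is a contraction of $\nabla^3V$ over $d$ indices.

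We then treat the two parts differently.
\begin{enumerate}[(i)]
\item The martingale part has mean zero and is orthogonal to $\mathcal F_{kh}$. When we project it on a fixed direction $\theta$, its second moment is of order $h^3\expe{\abs{\hess V\,\theta}^2}\le h^3\beta^2$, which is dimension-free. These terms add up in quadratic variation rather than linearly, so after summing against the contraction factors they contribute about $h^3\beta^2/(\alpha h)$ to $\expe{\scal{e,\theta}^2}$. This is the Gaussian-like term $h\sqrt\beta$ (using $\kappa$-type simplifications, or by a finer Gaussian computation along the lines of Proposition \ref{pro:msw-bias-gaussian}).
\item The drift part $h^2\hess V\nabla V$ is $O(h^2\beta)$ times $\nabla V$ in a fixed direction. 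Its $\theta$-component has second moment controlled by $\beta^3$, again dimension-free, since $\scal{\theta,\hess V\nabla V}$ involves $\nabla V$ only through the single direction $\hess V\theta$, and $\expe{\scal{\nabla V,v}^2}\le\beta\abs v^2$ under $\pi$.
\item The correction contributes a deterministic-sized bias of $h^2 dM$ per step, which accumulates to $h^2dM/(\alpha h)=hd M/\alpha = h\sqrt\alpha\, d\scalenumber\le h\sqrt\beta\, \scalenumber d$.
\end{enumerate}

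For the mean we take expectations of the recursion. The martingale term vanishes. For the drift term, $\expe{\hess V\nabla V}$ under $\pi$ equals $\expe{\nabla\Delta V}$ by integration by parts, which is again $O(dM)$. The remaining issue is the nonlinearity $H_k e_k$. We write $H_k=\hess V(X_{kh})+O(M\abs{e_k})$, so its expectation is $\expe{\hess V(X)}\expe{e_k}$ up to a covariance term $\operatorname{Cov}(\hess V(X),e_k)$ and an $M\expe{\abs{e_k}^2}$ error. The first is bounded by $\sqrt{\expe{\opnorm{\hess V(X)-\expe{\hess V}}^2}}$ times $\sqrt{\expe{\abs{e}^2}}$, where the first factor is $O(M\sqrt{d/\alpha})$ by Poincaré-type concentration of $\hess V$ under $\pi$ (Lipschitz constant $M$; we use the $L^p$ norm bounds from the appendix). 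The $W_2$ bound itself follows from the same recursion by taking the full trace, which gives $\expe{\abs e^2}\lesssim h^2\beta d+h^2\beta\scalenumber^2 d^2$. Plugging this in, the covariance error is of order $M\sqrt{d/\alpha}\,h\sqrt{\beta d}(1+\scalenumber\sqrt d)/(\alpha)$ per unit time. This is the place where the bound is tight, and we expect the main obstacle to lie here and in the analogous cross term for $\mswm$: the nonlinear coupling term $(H_k-\hess V(X_{kh}))e_k$ in direction $\theta$, bounded by $M\abs{e_k}\,\abs{e_k}$, contributes $M\expe{\abs e^2}/\alpha\approx M h^2\beta d/\alpha$. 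Showing that this is at most $h\sqrt\beta\scalenumber d$, which requires $h\sqrt\beta\lesssim\sqrt\alpha/\sqrt\alpha$ and uses $h\le1/\beta$, is the bookkeeping we would try first. If it fails, we would use fourth-moment (hypercontractive) bounds on $e_k$ to absorb the quadratic term.

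Finally, taking $\sup_\theta$ in (i)–(iii) gives the $\mswm$ bound, and summing the geometric series with ratio $1-\alpha h$ over $k$ closes all three estimates.
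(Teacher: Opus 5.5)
There is a genuine gap where you propagate directional bounds. You treat $(I-hH_k)e_k$ as a contraction by $1-\alpha h$ and then take $\sup_\theta$. But $\langle (I-hH_k)e_k,\theta\rangle=\langle e_k,(I-hH_k)\theta\rangle$, and $(I-hH_k)\theta$ is a random vector depending on $(X_{kh},\hat X_k)$, hence correlated with $e_k$. For such a vector you only get $\mathbb{E}\langle e_k,v\rangle^2\le(1-\alpha h)^2\,\mathbb{E}|e_k|^2$. This brings back exactly the $\sqrt d$ you want to avoid, so the recursion for $\sup_\theta\mathbb{E}\langle e_k,\theta\rangle^2$ does not close. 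Step (ii) has the same flaw: $\mathbb{E}_\pi\langle\nabla V,v\rangle^2\le\beta|v|^2$ holds for deterministic $v$, not for $v=\nabla^2V(X)\theta$.

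The missing idea is to linearize around the \emph{deterministic} matrix $A=\mathbb{E}_\pi\nabla^2V=\mathbb{E}_\pi[\nabla V\nabla V^\top]$, writing $\nabla^2V=A+Q$ with $\|Q(x)\|_{\mathrm{op}}\lesssim M(|x-x^*|+\sqrt{d/\alpha})$. The paper splits $e_k=G_k+R_k$ as follows.
\begin{itemize}
\item $G_{k+1}=(I-hA)G_k-A\int_0^h(h-s)\,dX_{kh+s}$ has a deterministic propagator. Hence $G_k$ is centered, with $\sup_{|u|=1}\|\langle u,G_k\rangle\|_{L^2}\lesssim h\sqrt\beta$ and $\|G_k\|_{L^4}\lesssim h\sqrt{\beta d}$.
\item $R_{k+1}=\Phi(\hat X_k+R_k)-\Phi(\hat X_k)-h\rho_k+U_k+N_k$, where $\Phi=\mathrm{id}-h\nabla V$ is $(1-\alpha h)$-Lipschitz. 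So $R_k$ needs only a full-norm $L^2$ bound.
\end{itemize}
The nonlinearity then enters only through $\rho_k=\int_0^1Q(X_{kh}-tG_k)G_k\,dt$, i.e.\ through $G_k$, whose fourth moments are explicit. This removes your ``main obstacle'' (the $M|e_k|^2$ term and the need for $L^4$ bounds on the full error) altogether. The $Q\nabla V$ part of your drift term is of size $Md\sqrt\kappa$, not dimension-free; it goes into $U_k$ and is one source of the $h\sqrt\beta\,\mathfrak{S}d$ term.

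Second, even granting directional propagation, your accounting with the uniform rate $\alpha h$ loses powers of $\kappa$. Step (i) gives $h^3\beta^2/(\alpha h)$, i.e.\ $h\sqrt{\kappa\beta}$, and (ii) gives $h\kappa\sqrt\beta$. The trace recursion gives $W_2(\pi,\hat\pi)\lesssim h\kappa\sqrt{\beta d}+h\sqrt\alpha\,\mathfrak{S}d$, which is the Dalalyan--Karagulyan-type bound, not the claimed $h\sqrt{\beta d}+h\sqrt\beta\,\mathfrak{S}d$. Since $\kappa$ is arbitrary even when $\mathfrak{S}=0$, this does not prove the statement.

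The $\kappa$-free constants again come from the deterministic $A$. In an eigendirection with eigenvalue $a$, noise is injected at rate $h^3a^2$ but contracted at rate $ha$, giving $\tfrac{2h^3}{3}\sum_j(I-hA)^jA^2(I-hA)^j\preceq\tfrac{2h^2}{3}A$. The drift part uses $\|\langle A(I-hA)^ju,\nabla V\rangle\|_{L^2}=|A^{3/2}(I-hA)^ju|$ together with $h^2\sum_j\|A^{3/2}(I-hA)^j\|_{\mathrm{op}}\lesssim h\sqrt\beta$.

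Your mean argument is otherwise sound in outline. Given a $\kappa$-free bound $\sup_k\|e_k\|_{L^2}\lesssim h\sqrt{\beta d}+h\sqrt\beta\,\mathfrak{S}d$, the covariance term and the $M\,\mathbb{E}|e_k|^2$ term close after splitting into $\mathfrak{S}\sqrt d\le 1$ and $\mathfrak{S}\sqrt d>1$. In the latter case $|m-\hat m|\le W_2(\pi,\hat\pi)$ already suffices. But that $L^2$ bound is precisely what your recursion does not deliver.
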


	\begin{remark}
		The improvement in the bound on $\msw{\pi,\hat \pi}$ compared to $W_2(\pi,\hat \pi)$ might seem minor, by removing just the dimension dependence in the first term. However, as it will be clear from the discussion in Section \ref{sec:appl-stats-models},  for our applications to ridge-separable potentials
		this will be crucial to deduce a dimension-free iteration complexity in 
		the rescaled metric $\sqrt{\alpha}\mswm$.
	\end{remark}
	
	For the proof of Theorem \ref{thm:deloc-bias-gauss-perturb-2}, we will also make use of the following standard results, which we prove in Appendix \ref{sec:app:auxiliary-ld-ula}.
	\begin{lemma} \label{lem:moment_bounds}
		Under Assumption \ref{ass:strong-log-conc-log-smooth}, let $x^*\coloneqq \argmin V$. Then we have for $X\sim \pi$ and any $u\in \R^d$
		\begin{equation}\label{eq:moment-bounds}
			\norm*{X-x^*}_{L^4}  \lesssim \sqrt{\frac{d}{\alpha}}, \qquad \norm*{\nabla V(X)}_{L^4} \lesssim \sqrt{\beta d}, \quad \textnormal{and} \quad \norm*{u \cdot \nabla V(X)}_{L^4} \lesssim \sqrt{\beta} \abs*{u},
		\end{equation}
		Moreover, define
		$A:=\mathbb{E}_\pi\nabla^2V
		=\mathbb{E}_\pi[\nabla V\nabla V^\top]$ and $ Q (x)=\nabla^2V(x)-A$.
        If Assumption \ref{ass:third-order-smoothness} holds, then 
		\begin{equation}
			\|Q(x)\|_{\textnormal{op}} \lesssim M\left(|x-x^*|+\sqrt{\frac d\alpha}\right).
			\label{eq:Q-bound}
		\end{equation}
	\end{lemma}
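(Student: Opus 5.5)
Here the statement to prove is Lemma~\ref{lem:moment_bounds}. I would treat its two parts separately: the moment bounds \eqref{eq:moment-bounds} come from concentration for strongly log-concave measures, and the bound \eqref{eq:Q-bound} comes from the Lipschitz property of the Hessian.

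\textbf{Bound on $\norm*{X-x^*}_{L^4}$.} The plan is to show that $\abs{X-x^*}$ has mean of order $\sqrt{d/\alpha}$ and sub-Gaussian fluctuations of order $1/\sqrt{\alpha}$.
First, I integrate by parts against $\pi$ with the vector field $x-x^*$:
\[
\expe{\scal{\nabla V(X),X-x^*}}=d.
\]
Since $\nabla V(x^*)=0$, strong convexity gives $\scal{\nabla V(x),x-x^*}\ge\alpha\abs{x-x^*}^2$. Hence $\expe{\abs{X-x^*}^2}\le d/\alpha$.
Next, $x\mapsto\abs{x-x^*}$ is $1$-Lipschitz. Under Assumption~\ref{ass:strong-log-conc-log-smooth}, $\pi$ satisfies a log-Sobolev inequality with constant $1/\alpha$ (Bakry--\'Emery). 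Herbst's argument then gives sub-Gaussian concentration around the mean at scale $1/\sqrt\alpha$.
Together these give $\norm{X-x^*}_{L^4}\lesssim\sqrt{d/\alpha}+1/\sqrt\alpha\lesssim\sqrt{d/\alpha}$.

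\textbf{Bounds involving $\nabla V(X)$.} For the gradient, smoothness and $\nabla V(x^*)=0$ give $\abs{\nabla V(X)}\le\beta\abs{X-x^*}$. This would only yield $\beta\sqrt{d/\alpha}$, so I will instead use concentration of $\abs{\nabla V(X)}$ directly.

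\emph{Mean.} Integration by parts gives $\expe{\abs{\nabla V}^2}=\expe{\Delta V}\le\beta d$.

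\emph{Fluctuations.} The function $x\mapsto\abs{\nabla V(x)}$ is $\beta$-Lipschitz, so it fluctuates at scale $\beta/\sqrt\alpha$. That is not $\lesssim\sqrt{\beta d}$ in general.

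\emph{Second route, giving the directional bound.} Consider $f=u\cdot\nabla V$. We have $\expe{f^2}=u^\top\expe{\nabla^2 V}u\le\beta\abs u^2$, using $\expe{\nabla V\nabla V^\top}=\expe{\nabla^2 V}$ by integration by parts. For higher moments I would use the integration-by-parts identity
\[
\expe{f^4}=\expe{f^3\, u\cdot\nabla V}=3\,\expe{f^2\, u^\top\nabla^2V\,u}\le3\beta\abs u^2\expe{f^2},
\]
which gives $\norm{f}_{L^4}\lesssim\sqrt\beta\abs u$. This is dimension-free, as required.
The same trick handles the full gradient. Write $g=\abs{\nabla V}^2$. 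Then
\[
\expe{g^2}=\expe{g\,\nabla V\cdot\nabla V}=\expe{g\Delta V}+\expe{\nabla g\cdot\nabla V},
\qquad \nabla g\cdot\nabla V=2\nabla V^\top\nabla^2V\nabla V\le2\beta g.
\]
Hence $\expe{g^2}\le3\beta\,\expe{g}\cdot$ (up to replacing $\expe{g\Delta V}\le\beta d\,\expe{g}$) $\lesssim\beta^2d^2$, which gives $\norm{\nabla V}_{L^4}\lesssim\sqrt{\beta d}$. More precisely, $\expe{g^2}\le(\beta d+2\beta)\expe g\le3\beta^2d^2$.

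\textbf{Bound \eqref{eq:Q-bound}.} By Assumption~\ref{ass:third-order-smoothness}, $\opnorm{\nabla^2V(x)-\nabla^2V(y)}\le M\abs{x-y}$. Since $A=\expe{\nabla^2 V(Y)}$ with $Y\sim\pi$, Jensen's inequality gives
\[
\opnorm{Q(x)}\le\expe{\opnorm{\nabla^2V(x)-\nabla^2V(Y)}}\le M\,\expe{\abs{x-Y}}\le M\bigl(\abs{x-x^*}+\expe{\abs{Y-x^*}}\bigr).
\]
The first part bounds $\expe{\abs{Y-x^*}}$ by $\sqrt{d/\alpha}$, which gives \eqref{eq:Q-bound}.

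\textbf{Main obstacle.} The delicate step is getting the $L^4$ bounds on $\nabla V$ without a $\kappa$ factor. I expect the integration-by-parts identities above to resolve it. The boundary terms vanish because of the Gaussian tails of $\pi$ and the polynomial growth of $\nabla V$.
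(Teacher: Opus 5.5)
Your proof is correct. For \eqref{eq:Q-bound} it is the paper's argument: Jensen, the Lipschitz Hessian, and $\mathbb{E}|Y-x^*|\le\sqrt{d/\alpha}$.

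For the moment bounds you take a different, self-contained route. The paper cites a literature bound for $\|X-x^*\|_{L^4}$ and derives both score estimates from the sub-Gaussianity of $\nabla V(X)$ under $\pi$. That sub-Gaussianity is obtainable, e.g., from $\mathbb{E}_\pi e^{\langle v,\nabla V\rangle}\le e^{\beta|v|^2/2}$, which follows from $\langle\nabla V(x),v\rangle\le V(x)-V(x-v)+\beta|v|^2/2$ and a translation of the integral. You instead use the Stein identity $\mathbb{E}_\pi[\varphi\,\partial_uV]=\mathbb{E}_\pi[\partial_u\varphi]$ (and its divergence form) for the score, and LSI plus Herbst for $|X-x^*|$.

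Both routes use only $\nabla^2V\preceq\beta I_d$ for the score bounds. The MGF route gives full $\psi_2$ control in one line and needs no boundary-term justification. Yours gives explicit constants, and iterating $\mathbb{E}f^{2k}\le(2k-1)\beta|u|^2\,\mathbb{E}f^{2k-2}$ yields the same sub-Gaussian moment growth.

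Two simplifications are available:
\begin{enumerate}
\item The bound on $\|\nabla V\|_{L^4}$ follows from the directional one by Minkowski: $\|\,|\nabla V|^2\|_{L^2}\le\sum_i\|\partial_iV\|_{L^4}^2\lesssim\beta d$.
\item The LSI is unnecessary. The divergence identity with the field $|x-x^*|^2(x-x^*)$ gives $\alpha\,\mathbb{E}|X-x^*|^4\le(d+2)\,\mathbb{E}|X-x^*|^2\le d(d+2)/\alpha$.
\end{enumerate}

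Finally, delete the garbled line ``$\mathbb{E}[g^2]\le3\beta\,\mathbb{E}[g]\cdot$\ldots''. The line after it, $\mathbb{E}[g^2]\le(\beta d+2\beta)\,\mathbb{E}[g]\le3\beta^2d^2$, is the correct statement.
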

	Additionally, we will need the following elementary matrix bounds, proved in Appendix \ref{sec:omitted-proofs}.
	\begin{lemma} \label{lem:matrix-estimates-ula}
		Let $0 < \alpha \leq \beta$ and $A$ be a symmetric $d\times d$ matrix satisfying $\alpha I_d \preccurlyeq A\preccurlyeq \beta I_d$. If $0<h\leq 1/\beta$, then
		\begin{equation}
			h^2\sum_{j=0}^\infty\|A^{3/2}\tond*{I_d-hA}^j\|_{\textnormal{op}}
			\lesssim h\sqrt{\beta},
			\label{eq:smoothing}
		\end{equation}
		and
		\begin{align}
			h^3\sum_{j=0}^\infty \tond*{I_d-hA}^j A^2\tond*{I_d-hA}^j &\preceq h^2A.
			\label{eq:gaussian-sum}
		\end{align}
	\end{lemma}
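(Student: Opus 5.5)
Both estimates reduce to scalar inequalities once we diagonalize $A$. Write $A=U\,\mathrm{diag}(a_1,\dots,a_d)\,U^T$ with $U$ orthogonal and $a_i\in[\alpha,\beta]$. Every matrix in the statement is a function of $A$, so it is diagonal in the same basis. Since $0<h\le 1/\beta$, we have $0\le 1-ha_i\le 1-h\alpha<1$ for every $i$, so all geometric series below converge. It therefore suffices to prove the corresponding inequalities for each eigenvalue $a\in[\alpha,\beta]$ separately, with one caveat: in the first estimate the supremum over eigenvalues sits inside the sum over $j$.

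For \eqref{eq:smoothing}, I would use
\[
\|A^{3/2}(I_d-hA)^j\|_{\textnormal{op}}=\max_i a_i^{3/2}(1-ha_i)^j .
\]
Each term admits two bounds. The trivial one is $a^{3/2}(1-ha)^j\le \beta^{3/2}$. The second comes from $1-ha\le e^{-ha}$ and maximizing over all $a>0$:
\[
a^{3/2}e^{-haj}\le \Bigl(\tfrac{3}{2e}\Bigr)^{3/2}(hj)^{-3/2}\lesssim (hj)^{-3/2}\qquad (j\ge1).
\]
I then split the sum at $j_0=\lceil 1/(h\beta)\rceil$. The terms with $j<j_0$ are bounded by $\beta^{3/2}$ each, giving at most $j_0\beta^{3/2}\lesssim \beta^{3/2}/(h\beta)=\sqrt\beta/h$; here $h\beta\le1$ ensures $j_0\lesssim 1/(h\beta)$. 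For the terms with $j\ge j_0$, I would use
\[
h^{-3/2}\sum_{j\ge j_0}j^{-3/2}\lesssim h^{-3/2}j_0^{-1/2}\lesssim h^{-3/2}(h\beta)^{1/2}=\sqrt\beta/h .
\]
Multiplying the total by $h^2$ gives $h\sqrt\beta$. The only delicate point is this choice of split, which balances the two bounds; it is the main (mild) obstacle.

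For \eqref{eq:gaussian-sum}, all factors commute, so the sum equals
\[
A^2\sum_{j\ge0}(I_d-hA)^{2j}=A^2\bigl(I_d-(I_d-hA)^2\bigr)^{-1}=A^2\,(2hA-h^2A^2)^{-1}=A\,(2h I_d-h^2A)^{-1}.
\]
Since $h^2A\preceq hI_d$ by $h\le1/\beta$, we get $2hI_d-h^2A\succeq hI_d$. The matrices commute, so this gives $A(2hI_d-h^2A)^{-1}\preceq A/h$. Multiplying by $h^3$ yields $\preceq h^2A$, as claimed.
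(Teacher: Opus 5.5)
Your proof is correct and is essentially the paper's argument. For the first estimate you diagonalize, bound each term by $\min\{\beta^{3/2},(hj)^{-3/2}\}$, and split the sum at $\lceil 1/(h\beta)\rceil$; for the second you sum the geometric series, obtaining $h^2 a/(2-ha)\le h^2 a$. The only cosmetic differences are that you write the geometric series in functional-calculus form rather than eigenvalue by eigenvalue, and you absorb the $j=0$ term into the first block instead of treating it separately.
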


	\begin{proof}[Proof of Theorem \ref{thm:deloc-bias-gauss-perturb-2}]

		Let $X_0=\hatx_0\sim\pi$ and consider solutions $(X_t)_{t\geq0}$ and
		$(\hatx_k)_{k\in\N}$ of \eqref{eq:LD} and \eqref{eq:ULA} with synchronous coupling.
		In particular, $X_t\sim\pi$ for every $t\geq0$. We will decompose, for every $k$,
		\begin{equation}\label{eq:error-decomp-gauss-plus-remaind}
			X_{kh}-\hatx_k=G_k+R_k,
		\end{equation}
		where, uniformly in $k$,
		\begin{align}
			\label{eq:Gk-bound}
			&\expe{G_k}=0, \qquad
			\sup_{\abs*{u}=1}\norm*{\scal*{u,G_k}}_{L^2}\lesssim h\sqrt{\beta},
			\qquad \norm*{G_k}_{L^4}\lesssim h\sqrt{\beta d},
			\\
			\label{eq:Rk-bound}
			&\norm*{R_k}_{L^2}\lesssim h\sqrt{\beta}\scalenumber d.
		\end{align}
		From this, the desired conclusion follows easily, taking $k\to \infty$.
		
		To achieve the decomposition \eqref{eq:error-decomp-gauss-plus-remaind}, let us start by writing the recursion
		\begin{equation}
			\label{eq:Zk-recursion}
			\begin{split}
				X_{(k+1)h} - \hatx_{k+1} & = X_{kh} - \hat X_k - h\tond*{\nabla V(X_{kh}) - \nabla V(\hatx_k)}  - \int_0^h \tond*{\nabla V(X_{kh+t}) -\nabla V(X_{kh})} \dd t
				\\
				& = X_{kh} - \hat X_k - h\tond*{\nabla V(X_{kh}) - \nabla V(\hatx_k)}  - \int_0^h \tond*{h-s} d\nabla V(X_{kh+s})
			\end{split}
		\end{equation}
		Next, as in  Lemma \ref{lem:moment_bounds} we define the matrix $A = \expewr{\pi}{\hess V}$ and  $Q(x) = \hess V(x) - A$, to decompose the Hessian of the potential  as the sum of a constant matrix and a remainder term, in order to highlight the linear component.
		This suggests to consider the recursion
		\begin{equation}\label{eq:Gk-recursion}
			G_{0}=0,\qquad G_{k+1} = (I_{d}-hA)G_{k} - A\int^{h}_{0}(h-s)dX_{kh+s},
		\end{equation}
		and define correspondingly $R_k = X_{kh}-\hatx_k - G_k$. It remains then to prove   the two bounds \eqref{eq:Gk-bound} and \eqref{eq:Rk-bound}.
		
		Iterating the preceding recursion and using the Langevin equation, we write
		$G_k=G_k^{\mathrm{D}}+G_k^{\mathrm{B}}$, where
		\begin{align*}
			G_k^{\mathrm{D}}
			&=\sum_{j=0}^{k-1}\tond*{I_d-hA}^jA\int_0^h(h-s)
			\nabla V\tond*{X_{(k-1-j)h+s}}\dd s,
			\\
			G_k^{\mathrm{B}}
			&=-\sqrt{2}\sum_{j=0}^{k-1}\tond*{I_d-hA}^jA
			\int_0^h(h-s)dB_{(k-1-j)h+s}.
		\end{align*}
		Both terms are centered, since $X_t\sim\pi$ and
		$\expewr{\pi}{\nabla V}=0$. For every unit vector $u$, stationarity,
		triangle inequality, and the identity
		$A=\expewr{\pi}{\nabla V\nabla V^\top}$ give
		\begin{align*}
			\norm*{\scal*{u,G_k^{\mathrm{D}}}}_{L^2}
			&\leq \frac{h^2}{2}\sum_{j=0}^{k-1}
			\norm*{\scal*{A\tond*{I_d-hA}^ju,\nabla V(X_0)}}_{L^2}
			\\
			&=\frac{h^2}{2}\sum_{j=0}^{k-1}
			\abs*{A^{3/2}\tond*{I_d-hA}^ju}
			\lesssim h\sqrt{\beta},
		\end{align*}
		where the last inequality follows from \eqref{eq:smoothing}. Moreover,
		by independence of the Brownian increments and \eqref{eq:gaussian-sum},
		\begin{equation*}
			\covrv{G_k^{\mathrm{B}}}
			=\frac{2h^3}{3}\sum_{j=0}^{k-1}\tond*{I_d-hA}^jA^2
			\tond*{I_d-hA}^j
			\preccurlyeq \frac{2h^2}{3}A
			\preccurlyeq \frac{2h^2\beta}{3}I_d,
		\end{equation*}
		which proves the directional estimate in \eqref{eq:Gk-bound}.
		
		To prove the vector estimate, let $(e_i)_{i=1}^d$ be an orthonormal eigenbasis
		of $A$, with $Ae_i=a_ie_i$. Lemma \ref{lem:moment_bounds} and a geometric sum yield
		\begin{equation*}
			\norm*{\scal*{e_i,G_k^{\mathrm{D}}}}_{L^4}
			\leq \frac{h^2a_i}{2}\sum_{j=0}^{k-1}(1-ha_i)^j
			\norm*{\scal*{e_i,\nabla V(X_0)}}_{L^4}
			\lesssim h\sqrt{\beta}.
		\end{equation*}
		It follows that
		\[
		\norm*{G_k^{\mathrm{D}}}_{L^4}
		\leq\tond*{\sum_{i=1}^d
			\norm*{\scal*{e_i,G_k^{\mathrm{D}}}}_{L^4}^2}^{1/2}
		\lesssim h\sqrt{\beta d}.
		\]
		The covariance estimate above and the standard Gaussian moment bound give the
		same estimate for $G_k^{\mathrm{B}}$, which proves \eqref{eq:Gk-bound}.
		
		We now turn to $R_k$. It\^o's formula gives
		\[
		d\nabla V(X_t)
		=\tond*{\nabla\Delta V-\hess V\nabla V}(X_t)\dd t
		+\sqrt{2}\hess V(X_t)dB_t.
		\]
		Define now the $(1-\alpha h)$-Lipschitz  map $\Phi(x)=x-h\nabla V(x)$. Splitting $\hess V=A+Q$ and subtracting
		the recursion \eqref{eq:Gk-recursion} defining $G_k$ from \eqref{eq:Zk-recursion}, we obtain
		\begin{equation}
			\label{eq:rk-recursion}
			R_{k+1}=\Phi(\hatx_k+R_k)-\Phi(\hatx_k)-h\rho_k+U_k+N_k,
		\end{equation}
		where
		\begin{align*}
			\rho_k
			&=\int_0^1Q\tond*{X_{kh}-tG_k}G_k\dd t,
			\\
			U_k
			&=\int_0^h(h-s)\quadr*{Q(X_{kh+s})\nabla V(X_{kh+s})
				-\nabla\Delta V(X_{kh+s})}\dd s,
			\\
			N_k
			&=-\sqrt{2}\int_0^h(h-s)Q(X_{kh+s})dB_{kh+s}.
		\end{align*}
		We claim now that
		\begin{align}
			\label{eq:rho_k_bound}
			\norm*{\rho_k}_{L^2}&\lesssim Mhd\sqrt{\kappa}
			\\
			\label{eq:U_k_bound}
			\norm*{U_k}_{L^2}&\lesssim Mdh^2\sqrt{\kappa}
			\\
			\label{eq:N_k_bound}
			\norm*{N_k}_{L^2}&\lesssim \frac{Mdh^{3/2}}{\sqrt\alpha}.
		\end{align}
		To see \eqref{eq:rho_k_bound} notice that by \eqref{eq:Q-bound} we have
		\begin{align*}
			\abs*{\rho_k}
			&\leq \int_0^1
			\opnorm{Q(X_{kh}-tG_k)}\abs*{G_k}\dd t
			\\
			&\lesssim M\left[
			\tond*{\abs*{X_{kh}-x^*}+\sqrt{\frac d\alpha}}
			\abs*{G_k}
			+\abs*{G_k}^2
			\right].
		\end{align*}
		Therefore, by Hölder's inequality, stationarity, and
		\eqref{eq:Gk-bound},
		\begin{align*}
			\norm*{\rho_k}_{L^2}
			&\lesssim M\left[
			\tond*{\norm*{X_{kh}-x^*}_{L^4}
				+\sqrt{\frac d\alpha}}
			\norm*{G_k}_{L^4}
			+\norm*{G_k}_{L^4}^2
			\right]
			\\
			&\lesssim M\left[
			\sqrt{\frac d\alpha}\,h\sqrt{\beta d}
			+h^2\beta d
			\right]
			\lesssim Mhd\sqrt{\kappa},
		\end{align*}
		where in the last inequality we used $h\beta\leq1$.
		To see \eqref{eq:U_k_bound}, observe first that by the triangle inequality and stationarity,
		\begin{align*}
			\norm*{U_k}_{L^2}
			&\leq \int_0^h(h-s)
			\left[
			\norm*{\opnorm{Q(X_0)}\abs*{\nabla V(X_0)}}_{L^2}
			+\norm*{\nabla\Delta V(X_0)}_{L^2}
			\right]\dd s.
		\end{align*}
		By Hölder's inequality, \eqref{eq:Q-bound}, and the moment estimates,
		\begin{align*}
			\norm*{\opnorm{Q(X_0)}\abs*{\nabla V(X_0)}}_{L^2}
			&\leq
			\norm*{\opnorm{Q(X_0)}}_{L^4}
			\norm*{\nabla V(X_0)}_{L^4}
			\\
			&\lesssim
			M\sqrt{\frac d\alpha}\sqrt{\beta d}
			=Md\sqrt{\kappa}.
		\end{align*}
		Moreover, Assumption \ref{ass:third-order-smoothness} implies that
		$\abs*{\nabla\Delta V}\leq Md$, and \eqref{eq:U_k_bound} follows.
		Finally, \eqref{eq:N_k_bound} follows from Itô's isometry and stationarity. Indeed,
		\begin{align*}
			\norm*{N_k}_{L^2}^2
			&=2\int_0^h(h-s)^2
			\expe{\fronorm{Q(X_{kh+s})}^2}\dd s
			\\
			&=2\int_0^h(h-s)^2\dd s\,
			\expe{\fronorm{Q(X_0)}^2}
			\\
			& \lesssim d h^3 \expe{\opnorm{Q(X_0)}^2}
			\\
			& \lesssim M^2 \frac{d^2}{\alpha} h^3,
		\end{align*}
		as desired.
		Finally, let $\mathcal F_k=\sigma\tond*{X_0,(B_t)_{0\leq t\leq kh}}$ and rewrite \eqref{eq:rk-recursion} as
		\begin{equation} \label{eq:Rk-recursion-rewritten}
			\begin{split}
				R_{k+1} & = \Phi(\hatx_k+R_k)-\Phi(\hatx_k)-h\rho_k+U_k+N_k
				\\
				& = \Phi(\hatx_k+R_k)-\Phi(\hatx_k) +\graf*{-h\rho_k + \expe{U_k \mid \mathcal{F}_k}} +\graf*{U_k -\expe{U_k\mid \mathcal{F}_k} + N_k }.
			\end{split}
		\end{equation}
		Since the last term conditioned on $\mathcal{F}_k$ is a mean-zero random variable, the bounds \eqref{eq:rho_k_bound}, \eqref{eq:U_k_bound}, \eqref{eq:N_k_bound} the $(1-\alpha h)$-Lipschitzness of $\Phi$ imply that that for an absolute constant $C>0$
		\begin{align*}
			\norm*{R_{k+1}}_{L^2}^2
			&\leq \tond*{(1-\alpha h)\norm*{R_k}_{L^2}
				+CMdh^2\sqrt\kappa}^2+\frac{CM^2d^2h^3}{\alpha}
			\\
			&\leq (1-\alpha h)\norm*{R_k}_{L^2}^2
			+\frac{CM^2d^2h^3\kappa}{\alpha}.
		\end{align*}
		This gives the desired inequality
		\[
		\sup_{k\geq0}\norm*{R_k}_{L^2}
		\lesssim \frac{Mhd\sqrt\kappa}{\alpha}
		=h\sqrt\beta\scalenumber d,
		\]
		thus concluding the proof.

	\end{proof}

	\section{Main results for ridge-separable models}
	\label{sec:appl-stats-models}
	As discussed in the introduction, our focus here is on a class of common models in Bayesian applications, where the potential takes the form 
	\begin{equation}
		\label{eq:potential-statistics-application}
		V(x) = \frac{\lambda}{2}\abs*{x}^2 + \sum_{i=1}^N u_i\tond*{\scal*{a_i, x}},
	\end{equation}
    for a parameter $\lambda>0$, i.i.d datapoints $(a_i)_i$ in $\R^d$, and functions $u_i\colon\R\to\R$.
    
    In this section, we establish iteration complexity bounds in $\mswm$-accuracy for sampling from $\pi$, distinguishing three regimes to complete Theorem 1: for the \emph{asymptotic regime}, where $N\gg d$, we provide Theorem \ref{thm:asymtotic-regime}; for the \emph{proportional regime}, where $N\asymp d$, we provide  Theorem \ref{thm:proportional-regime};  for the \emph{overparameterized regime}, where $N\ll d$, we provide  Theorem \ref{thm:overparameterized-regime}.

	In order to analyze and compare between these different regimes, we now place ourselves in the following specific setting.
	\begin{assumption}\label{ass:ridge-separable-assumption}
		\begin{enumerate}
			\item The functions $u_i\in C^3\tond*{\R}$ are such that  $\bar \alpha \le u_i''\le \bar \beta$ and $\abs*{u_i'''}\le \bar M$ for some constants $0\le \bar \alpha \le \bar \beta $ and $ \bar M \ge 0$.
			\item The points $(a_i)_{i=1}^N$ are i.i.d.
			from a centered sub-Gaussian distribution $\nu \in \pp\tond*{\R^d}$ such that $\Sigma \coloneqq \cova{\nu} \succ 0$. We denote by 
			\begin{equation}
				\label{eq:condition-number-covariance}
				\condcov \coloneqq \frac{\lambda_{\max}\tond*{\Sigma}}{\lambda_{\min}\tond*{\Sigma}}
			\end{equation}
			the condition number of $\Sigma$.
			
			Furthermore, $\nu$ is such that there exists a constant $K\ge 1$ for which
			\begin{align}
				&\norm*{\scal*{a_i,\theta}}_{\psi_2} \leq K \sqrt{\scal*{\theta,\Sigma \theta}} \quad \text{ for all } \theta \in S^{d-1},
			\end{align}
			where $ \norm*{\cdot}_{\psi_2}$ denotes the sub-Gaussian norm of a random variable \cite{ver-2026}.
		\end{enumerate}    
	\end{assumption}
	
	Notice that Assumption \ref{ass:ridge-separable-assumption} implies in particular that Assumption \ref{ass:strong-log-conc-log-smooth} is satisfied too for some $0<\alpha\le\beta$, since 
	\[
	\lambda I_d + \bar\alpha\sum_{i=1}^N a_i a_i^\top
	\preccurlyeq \hess V(x)
	\preccurlyeq \lambda I_d + \bar\beta\sum_{i=1}^N a_i a_i^\top.
	\]

	\subsection{Asymptotic regime: $N\gg d$}
	
	The goal of this first subsection is to show that, when $N\gtrsim d^{\frac{3}{2}}$, our upper bound on the iteration complexity of sampling from $\pi$ in subspace robust Wasserstein accuracy decreases with $N$, and in particular becomes dimension-free in the regime $N\gtrsim d^2$ (up to log factors). This is consistent with the empirical observations of Figure \ref{fig:double_descent}, where the rescaled bias in the means decreases in the large-sample regime.
	Our main result in this regime is the following theorem.

	\begin{theorem} \label{thm:asymtotic-regime}
		Let $\pi \propto e^{-V}$ for a ridge-separable potential $V$ as in \eqref{eq:potential-statistics-application}, and suppose that Assumption \ref{ass:ridge-separable-assumption} holds with $\bar \alpha >0$.
		Then, there  exists an absolute constant $C>0$
		such that, if  
		\[
		N \ge C \max\left\{K^4 d,\,d^{3/2}\right\},
		\] 
		with probability at least $1-4e^{-d}$ over the datapoints $(a_i)_i$ the following holds.
		Denoting by $\alpha = \sup{\left\{\mathfrak{m} >0 \mid \hess V \succeq \mathfrak{m} I_d\right\}}$ the log-concavity constant of $\pi$,
		for every $0<\eps<1$, there exists a  step-size $h>0$ such that along \eqref{eq:ULA} we have
		\[
		\sqrt{\alpha}\msw{\mu_k,  \pi} \le \eps
		\]
		provided that 
		\[
		k \ge  \frac{\tilde C(K, \condcov, \bar\alpha,  \bar \beta, \bar M)\tond*{1 + \frac{d}{\sqrt{N}}}}{\eps} \log\tond*{1+ \frac{\sqrt{\alpha} W_2(\mu,\hat \pi)}{\eps}},
		\]
		where $\tilde C(K, \condcov, \bar\alpha,  \bar \beta, \bar M)>0$ is a positive constant depending only on the specified parameters.
	\end{theorem}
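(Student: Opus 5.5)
The plan is to reduce everything to Theorem \ref{thm:deloc-bias-gauss-perturb-2}, combined with the contraction part of Proposition \ref{prop:w2-guarantess-ula-strong-log-conc}. On the data side, we need high-probability control of the three constants $\alpha$, $\beta$, $M$.

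\textbf{Step 1: Hessian bounds.} By Assumption \ref{ass:ridge-separable-assumption},
\[
\lambda I_d+\bar\alpha\sum_i a_ia_i^\top\preccurlyeq\hess V\preccurlyeq\lambda I_d+\bar\beta\sum_i a_ia_i^\top .
\]
We apply standard sub-Gaussian covariance concentration (e.g.\ \cite{ver-2026}) to the whitened vectors $\Sigma^{-1/2}a_i$. When $N\ge CK^4d$, with probability at least $1-2e^{-d}$,
\[
\tfrac12\Sigma\preccurlyeq \tfrac1N\sum_i a_ia_i^\top\preccurlyeq 2\Sigma .
\]
Hence $\alpha\ge \bar\alpha N\lambda_{\min}(\Sigma)/2$ and $\beta\le\lambda+2\bar\beta N\lambda_{\max}(\Sigma)$. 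In the regime $N\gtrsim d$, and with $\lambda$ of order $\operatorname{tr}\Sigma\le d\lambda_{\max}(\Sigma)$, this gives $\kappa\lesssim \condcov\bar\beta/\bar\alpha$. So the condition number is a dimension-free constant.

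\textbf{Step 2: third derivative.} We have
\[
\nabla^3V(x)[y,y,y]=\sum_i u_i'''(\scal{a_i,x})\scal{a_i,y}^3,
\]
which is a symmetric tensor. Its operator norm is therefore bounded by
\[
M\le \bar M\sup_{y\in S^{d-1}}\sum_i\abs{\scal{a_i,y}}^3 .
\]
This is the step I expect to be the main obstacle: we need a high-probability bound of the form
\[
\sup_{y}\sum_i\abs{\scal{a_i,y}}^3\lesssim K^3\lambda_{\max}(\Sigma)^{3/2}\bigl(N+\text{lower order}\bigr).
\]
My first approach is a net argument. For fixed $y$, $\abs{\scal{a_i,y}}^3$ is sub-Weibull of order $2/3$, and Bernstein-type inequalities for such variables give deviation of order $\sqrt{Nd}+d^{3/2}$ at level $e^{-d}$. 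A union bound over a $1/4$-net of size $9^d$ then applies, and we close by homogeneity using
\[
\sum_i\abs{\scal{a_i,y}}^3\le \max_i\abs{\scal{a_i,y}}\sum_i\scal{a_i,y}^2 .
\]
Alternatively, we can bound the sum crudely by $\max_i\abs{a_i}\cdot\opnorm{\sum_i a_ia_i^\top}$. That yields only $N\sqrt d$ terms, which is too weak, so the net/chaining route is needed. Precisely at $N\ge Cd^{3/2}$ the $d^{3/2}$ term becomes dominated by $N$, and we get $M\lesssim \bar M K^3\lambda_{\max}(\Sigma)^{3/2}N$ with probability at least $1-2e^{-d}$. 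Combining with Step 1,
\[
\scalenumber=\frac{M}{\alpha^{3/2}}\lesssim \frac{\tilde C}{\sqrt N},
\]
which is the Bernstein--von Mises scaling.

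\textbf{Step 3: conclusion.} On the intersection event, which has probability at least $1-4e^{-d}$, Theorem \ref{thm:deloc-bias-gauss-perturb-2} gives
\[
\sqrt\alpha\,\msw{\pi,\hat\pi}\lesssim h\sqrt{\alpha\beta}\Bigl(1+\tfrac{\tilde C d}{\sqrt N}\Bigr).
\]
We pick $h=c\,\eps/\bigl(\sqrt{\alpha\beta}(1+d/\sqrt N)\bigr)$, which satisfies $h\le 1/\beta$ since $\kappa\ge1$ and $\eps<1$. This makes the bias term at most $\eps/2$.

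For the transient term, we use the triangle inequality $\msw{\mu_k,\pi}\le\msw{\mu_k,\hat\pi}+\msw{\hat\pi,\pi}$. We then bound $\msw{\mu_k,\hat\pi}\le W_2(\mu_k,\hat\pi)\le(1-\alpha h)^kW_2(\mu,\hat\pi)$, using the contraction of the synchronous coupling of two ULA chains. This term is at most $\eps/(2\sqrt\alpha)$ once
\[
k\ge\frac{1}{\alpha h}\log\Bigl(1+\frac{2\sqrt\alpha W_2(\mu,\hat\pi)}{\eps}\Bigr).
\]
With our choice of $h$, $1/(\alpha h)\asymp\sqrt\kappa(1+d/\sqrt N)/\eps$. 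Since $\kappa$ is a constant depending only on $(K,\condcov,\bar\alpha,\bar\beta)$, this gives the claimed iteration count.
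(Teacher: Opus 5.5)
Your overall route is the paper's. You get high-probability control of $\kappa$ and $\scalenumber$ (Lemma \ref{lem:high_prob_cond_scal}), apply Theorem \ref{thm:deloc-bias-gauss-perturb-2} for the bias, and use the $(1-\alpha h)$-contraction of ULA towards $\hat\pi$ for the transient. The one genuinely different ingredient is the cubic bound $\sup_{\abs{y}=1}\sum_i\abs{\scal{a_i,y}}^3\lesssim K^3\opnorm{\Sigma}^{3/2}(N+d^{3/2})$. The paper views it as $\norm{A}_{2\to3}^3$ and uses generic chaining with Chevet's inequality (Gaussian widths $\sqrt d$ and $N^{1/3}$, Lemma \ref{lem:chevet_tensor}). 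You instead use a fixed-direction Bernstein inequality for $\psi_{2/3}$ variables plus a net, and this works. Indeed $\norm*{\abs{\scal{a_i,y}}^3}_{\psi_{2/3}}=\norm*{\scal{a_i,y}}_{\psi_2}^3\le K^3\opnorm{\Sigma}^{3/2}$. The sub-Weibull Bernstein inequality (e.g.\ Kuchibhotla--Chakrabortty) gives deviations of order $\sqrt{Nt}+t^{3/2}$ at level $e^{-t}$, and $t\asymp d$ pays for the $9^d$ net points. To pass from the net to the sphere, use the triangle inequality for the norm $y\mapsto\tond*{\sum_i\abs{\scal{a_i,y}}^3}^{1/3}$, which gives $\sup_{S^{d-1}}\le\tfrac43\max_{\mathcal N}$ for a $\tfrac14$-net $\mathcal N$. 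The inequality $\sum_i\abs{\scal{a_i,y}}^3\le\max_i\abs{\scal{a_i,y}}\sum_i\scal{a_i,y}^2$ you mention is not what closes it. Both routes give the same bound; yours trades the chaining machinery for a less standard heavy-tailed concentration inequality.

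There are also two slips, both cheap to fix.

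First, the theorem assumes nothing about $\lambda$, so you should not import $\lambda\asymp\trace{\Sigma}$. As written, your bound on $\kappa$ degenerates when $\lambda\gg N\opnorm{\Sigma}$, because you dropped $\lambda$ from the lower bound on $\alpha$ but kept it in the upper bound on $\beta$. Keep it in both: $\alpha\ge\lambda+\bar\alpha N\lambda_{\min}(\Sigma)/2$. Then use $(\lambda+a)/(\lambda+b)\le a/b$ for $a\ge b$ to get $\kappa\le 4\condcov\bar\beta/\bar\alpha$ for every $\lambda>0$, as the paper does.

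Second, your step size $h=c\eps/\tond*{\sqrt{\alpha\beta}(1+d/\sqrt N)}=c\eps\sqrt{\kappa}/\tond*{\beta(1+d/\sqrt N)}$ is not automatically $\le 1/\beta$ ``since $\kappa\ge1$''. The factor $\sqrt\kappa\ge1$ works against you. Moreover, $h\le1/\beta$ is needed both in Theorem \ref{thm:deloc-bias-gauss-perturb-2} and for the contraction factor $1-\alpha h$. You can let $c$ depend on the high-probability bound for $\kappa$, and also on the constant in $\scalenumber\sqrt N$ so that the bias is genuinely $\le\eps/2$. Alternatively, take $h\asymp\eps/\tond*{\beta(1+d/\sqrt N)}$ as the paper does. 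Either way only $\tilde C$ changes.
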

    \begin{remark}
		The assumption $\bar\alpha > 0$ is not satisfied in Bayesian logistic regression or in many other generalized linear models (including in particular the potential in Figure~\ref{fig:double_descent}). For such models, \cite{nickl2022polynomial,chak2025complexity,altmeyer2022polynomial} instead use local convexity arguments to derive complexity guarantees in the standard $W_2$-distance, with dimension-dependent constants. Roughly speaking, they show with high probability that the posterior is strongly convex on a bounded region containing most of its mass, with a local strong convexity constant of order $N$ arising from the summed Hessian.
		Although our assumption does not hold for these statistical potentials, it still provides a useful conceptual explanation for the behavior observed in practice, in particular the apparent absence of dimension dependence and the emergence of the Bell-shaped mixing profile illustrated in Figure \ref{fig:double_descent}.
	\end{remark}
	
	A key insight in the proof of Theorem \ref{thm:asymtotic-regime} is that, in the limit as $N\to \infty$, the ridge-separable potential becomes approximately quadratic on its natural scale. In this sense, the distribution becomes increasingly Gaussian, and we know that the $\mswm$-bias is dimension-free for Gaussian measures (cf. Proposition \ref{pro:msw-bias-gaussian}).
	We capture this intuition through the parameter $\scalenumber$, defined in Section \ref{sec:iso-deloc-gauss-pert}
	as 
	\[
	\scalenumber = \frac{\sup_x \opnorm{\nabla^3 V(x)}}{\alpha^{\frac{3}{2}}}.
	\]
	This parameter is equal to $0$ if and only if $V$ is quadratic, i.e. if $\pi$ is Gaussian. Furthermore, $\scalenumber$ is a scale-invariant parameter, meaning that 
	if we replace $\pi$ by a rescaled version $T_\#\pi$ for a linear map $T(x) = c x$, with $c\in \R\setminus\{0\}$, then $\scalenumber$ does not change.
	For ridge-separable potentials \eqref{eq:potential-statistics-application},  under Assumption \ref{ass:ridge-separable-assumption}, one could expect that for fixed $d$ both $\alpha, \beta$ and $M = \sup_x \opnorm{\nabla^3 V}$ grow linearly with $N$, with high-probability. Under this scaling, the parameter $\scalenumber$ decays to $0$ as $\sqrt{\frac1N}$ if $N\to \infty$, and at the same time the condition number $\kappa$ of $V$ remains bounded. We make this intuition rigorous in Lemma \ref{lem:high_prob_cond_scal} below, whose proof is given in Appendix \ref{sec:app:random-tensors-high-dimensions}. This allows us to interpret the target measure $\pi$ in the asymptotic regime as a third-order perturbation of a Gaussian distribution, and to apply our Theorem \ref{thm:deloc-bias-gauss-perturb-2} to provide a bound on the bias $\msw{\pi,\hat\pi}$, from which, in turn, we can control the iteration complexity to sample to a desired $\mswm$-accuracy.

	\begin{lemma}\label{lem:high_prob_cond_scal}
		Suppose that Assumption \ref{ass:ridge-separable-assumption} holds with $\bar \alpha >0$
		and consider the probability measure $\pi \propto e^{-V}$ with  potential \eqref{eq:potential-statistics-application}. 
		Then, there exists an absolute constant $C>0$ such that if $N>C\max\graf*{ K^{4}d, d^{3/2}}$ then with probability at least $1-4e^{-d}$
        \begin{align} \label{eq:cond-numb-bound-asym}
			\kappa &\le C \frac{\condcov\bar\beta}{\bar\alpha},
			\\ \label{eq:scale-numb-bound-asym}
			\scalenumber & \le C \frac{\bar M \condcov^{\frac{3}{2}}K^3}{\bar \alpha ^{\frac{3}{2}}} \frac{1}{\sqrt{N}}.
		\end{align}
	\end{lemma}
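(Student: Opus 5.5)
The bounds on $\kappa$ and $\scalenumber$ both reduce to two random-matrix facts about the data: a lower/upper bound on the empirical covariance $\hat S \coloneqq \sum_{i=1}^N a_ia_i^\top$, and an upper bound on the random symmetric $3$-tensor $T \coloneqq \sum_{i=1}^N a_i^{\otimes 3}$ in operator norm. We first relate the derivatives of $V$ to these objects. From \eqref{eq:potential-statistics-application}, $\hess V(x) = \lambda I_d + \sum_i u_i''(\scal{a_i,x}) a_ia_i^\top$, so
\[
\lambda I_d + \bar\alpha \hat S \preccurlyeq \hess V \preccurlyeq \lambda I_d + \bar\beta\hat S.
\]
For the third derivative, $\nabla^3 V(x)[y,y,y] = \sum_i u_i'''(\scal{a_i,x})\scal{a_i,y}^3$. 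By symmetry, the operator norm of a symmetric tensor equals the supremum over the diagonal $y=y=y$. Hence
\[
\opnorm{\nabla^3 V}\le \bar M \sup_{\abs{y}=1}\sum_i \abs{\scal{a_i,y}}^3 .
\]
We keep the absolute value, since the signs of $u_i'''$ are arbitrary; this is why the relevant quantity is not $\opnorm{T}$ but the $\ell^3$-type sum above.

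\textbf{Covariance step.} We whiten, writing $a_i=\Sigma^{1/2} g_i$ with $g_i$ isotropic and $K$-sub-Gaussian. The standard covariance concentration bound (Vershynin) gives, with probability $1-2e^{-d}$,
\[
\opnorm{\tfrac1N\sum_i g_ig_i^\top - I_d}\lesssim K^2\Big(\sqrt{d/N}+d/N\Big).
\]
This is at most $1/2$ once $N\ge CK^4 d$. Therefore $\tfrac N2\Sigma\preccurlyeq \hat S\preccurlyeq \tfrac{3N}{2}\Sigma$. It follows that $\alpha\ge \bar\alpha N\lambda_{\min}(\Sigma)/2$ and $\beta\le \lambda+ \tfrac32\bar\beta N\lambda_{\max}(\Sigma)$. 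This yields \eqref{eq:cond-numb-bound-asym}, up to the contribution of $\lambda$. That contribution is harmless because $\lambda/\alpha\le \lambda/\lambda = 1$ together with $\beta/\alpha\le (\lambda+x)/(\lambda+y)\le \max(1,x/y)$. So $\kappa\le 3\condcov\bar\beta/\bar\alpha$ for any $\lambda>0$.

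\textbf{Third-order step (the main obstacle).} We need, with probability $1-2e^{-d}$,
\[
\sup_{\abs{y}=1}\sum_i\abs{\scal{g_i,y}}^3\lesssim K^3 N \qquad \text{when } N\gtrsim d^{3/2}.
\]
Granting this, $\sum_i\abs{\scal{a_i,y}}^3\le \lambda_{\max}(\Sigma)^{3/2}\sup_{\abs{z}=1}\sum_i \abs{\scal{g_i,z}}^3$, so $M\lesssim \bar M K^3\lambda_{\max}(\Sigma)^{3/2}N$. Dividing by $\alpha^{3/2}\gtrsim (\bar\alpha N\lambda_{\min}(\Sigma))^{3/2}$ gives exactly \eqref{eq:scale-numb-bound-asym}.

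To prove the third-order bound we use a net argument with truncation. For fixed $y$, the variables $\abs{\scal{g_i,y}}^3$ have mean $\lesssim K^3$ and are sub-Weibull of order $2/3$ ($\psi_{2/3}$-norm $\lesssim K^3$). Bernstein-type concentration for such variables then gives
\[
\sum_i\abs{\scal{g_i,y}}^3\le C K^3\bigl(N+\sqrt{N t}+t^{3/2}\bigr)
\]
with probability $1-e^{-t}$. We take $t\approx 3d$, allowing a union bound over a $1/4$-net of size $9^d$; this is where $N\gtrsim d^{3/2}$ enters, since it makes $t^{3/2}\lesssim d^{3/2}\lesssim N$. Passing from the net to the sphere uses the standard argument. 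Write $F(y)=\norm{(\scal{g_i,y})_i}_{\ell^3}$, which is a norm in $y$. Then $\sup F\le \sup_{\text{net}}F + \tfrac14\sup F$, so $\sup F\le \tfrac43\sup_{\text{net}}F$ and cubing costs only a constant.

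If the sub-Weibull Bernstein inequality is awkward to apply, the alternative is to truncate at $\abs{\scal{g_i,y}}\le K\sqrt{\log N}$. That route costs logarithms, which would conflict with the log-free statement, so the sub-Weibull route is preferred. Finally, a union bound over the two events gives the overall probability $1-4e^{-d}$.
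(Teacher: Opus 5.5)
Your proposal is correct. The covariance step (whitening, then the Vershynin bound with $u=d$) is essentially the paper's Lemma~\ref{lem:tail-prob-bound-opnom-sample-cov}, and your explicit handling of $\lambda$ via $(\lambda+x)/(\lambda+y)\le\max(1,x/y)$ is more careful than the paper's ``immediately follows''.

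The genuine difference is the third-order step.

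\emph{The paper's route.} It reduces $\opnorm{\nabla^3 V}$ to $\bar M\sup_{\abs{u}\le1}\sum_i\abs{\scal{a_i,u}}^3$ by the AM--GM inequality $\abs{xyz}\le\frac13(\abs{x}^3+\abs{y}^3+\abs{z}^3)$, where you invoke Banach's theorem on symmetric multilinear forms; both are fine. It then writes this quantity as $\norm{A}_{2\to3}^3$, viewed as the supremum of the sub-Gaussian process $U\mapsto\scal{A,U}_{\mathrm F}$ over $\{yx^\top:\abs{x}_2\le1,\abs{y}_{3/2}\le1\}$. Generic chaining plus Chevet's inequality (Lemma~\ref{lem:chevet_tensor}) then gives $(\sqrt d+N^{1/3}+\sqrt u)^3$, from the Gaussian widths of the two balls.

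\emph{Your route.} You replace chaining by a fixed-direction $\psi_{2/3}$ Bernstein inequality, a $\frac14$-net, and the seminorm argument to pass from the net to the sphere. The heavy-tail term $t^{3/2}$ with $t\asymp d$ plays the role of the width term $d^{3/2}$. Your bound $K^3(N+\sqrt{Nd}+d^{3/2})$ is of the same order as the paper's $K^3(N+d^{3/2})$ for all $N,d$, since $\sqrt{Nd}\le N+d$, so nothing is lost.

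\emph{Trade-off.} Your geometric part is more elementary, but it relies on concentration for sums of $\psi_\alpha$ variables with $\alpha<1$ (e.g.\ Kuchibhotla--Chakrabortty), which is less standard. The paper obtains the bound for all $N,d,u$ in one shot using only sub-Gaussian tools.

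One minor point: for the union bound over $9^d$ net points to leave failure probability $2e^{-d}$, you need $t\ge(1+\log 9)d$, so take e.g.\ $t=4d$ rather than $3d$.
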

	\begin{proof}[Proof of Theorem \ref{thm:asymtotic-regime}]
		Fix $\mu\in\pp_2\tond*{\R^d}$ and $0<\eps<1$ and consider a step-size $h\le \frac1\beta$, where 
		$\beta =\inf{\left\{\mathfrak{L} >0 \mid \hess V \preceq \mathfrak{L} I_d\right\}}$ is the log-smoothness of $\pi$.
		Along \eqref{eq:ULA},
		\begin{equation} \label{eq:convergence-bound-mswm-asymptotic}
			\begin{split}
				\sqrt{\alpha}\msw{\mu_k, \pi} & \leq \sqrt{\alpha}\msw{\mu_k,\hat\pi} +\sqrt{\alpha}\msw{\hat\pi,\pi}
				\\
				&\leq \sqrt{\alpha} W_2\tond*{\mu_k,\hat\pi} + \sqrt{\alpha}\msw{\hat\pi, \pi}
				\\
				& \leq \sqrt{\alpha}e^{-\alpha kh} W_2\tond*{\mu,\hat\pi}+\sqrt{\alpha}\msw{\hat\pi, \pi}.
			\end{split}
		\end{equation}
		Since the first term decays exponentially fast, the iteration complexity is essentially determined by the choice of the step-size $h>0$, as imposed by the second error term.
		In what follows, we denote by $C$ an absolute constant and by $\hat C(K, \condcov, \bar\alpha, \bar M),\tilde C(K, \condcov, \bar\alpha,  \bar \beta, \bar M)$ positive constants depending only on the specified parameters.
		From Lemma \ref{lem:high_prob_cond_scal}, we know that if $ N \ge C \max\graf*{K^4 d,\,d^{3/2}}$ then with probability at least $1-4e^{-d}$ the potential $V$ satisfies Assumptions \ref{ass:strong-log-conc-log-smooth} and \ref{ass:third-order-smoothness} with $\kappa,\scalenumber$  bounded as in \eqref{eq:cond-numb-bound-asym}, \eqref{eq:scale-numb-bound-asym}.
		On this event, it follows from  Theorem \ref{thm:deloc-bias-gauss-perturb-2} that for $h\le \frac{1}{\beta}$
		\[
		\sqrt{\alpha}\msw{\pi,\hat \pi} \lesssim \hat C(K, \condcov, \bar\alpha, \bar M)\cdot \tond*{h\beta} \cdot \tond*{1 + \frac{d}{\sqrt{N}}}.
		\]
		We can choose $h=  \frac{\eps}{\beta \hat C(K, \condcov, \bar\alpha, \bar M)\tond*{1 + \frac{d}{\sqrt{N}}} }$ so that the second term in \eqref{eq:convergence-bound-mswm-asymptotic} is smaller than $\frac{\eps}{2}$.
		For such step-size, it suffices to take 
		\[
		k \ge \frac{\tilde C(K, \condcov, \bar\alpha,\bar\beta \bar M)\tond*{1 + \frac{d}{\sqrt{N}}}}{\eps} \log\tond*{1+\frac{\sqrt{\alpha} W_2(\mu,\hat\pi)}{\eps}}
		\]
		iterations of ULA so that the first term in \eqref{eq:convergence-bound-mswm-asymptotic} is less than $\eps/2$ too, as desired.
	\end{proof}

	\subsection{Proportional regime: $N \asymp d$}

	In this section, we study the proportional regime where $N \asymp d$, and we argue that in general we cannot prove dimension-free convergence rates in the rescaled $\mswm$-metric.
	In order to do this, we construct explicitly a potential $V$ of the form \eqref{eq:potential-statistics-application}, choosing $N = d$.

	\begin{theorem} \label{thm:proportional-regime} 
		For all $N = d$ large enough, there exists a potential of the form \eqref{eq:potential-statistics-application} with $\lambda = \frac14$, $\nu \in \pp \tond*{S^{d-1}}$ and $u_i \colon \R\to \R$ satisfying Assumption \ref{ass:ridge-separable-assumption}, such that the following holds for an absolute constant $C>0$.
		With probability at least $C$ over the datapoints $a_i$, for any  step-size $0<h<C$ small enough so that a stationary distribution $\hat \pi$ for \eqref{eq:ULA} exists, we have
		\begin{equation} \label{eq:mswm-bias-low-bound-prop-regime}
			\msw{\pi,\hat \pi} \ge C \sqrt{d}h. 
		\end{equation}
		Furthermore, with probability at least $C$, there exists an initial distribution $\mu\in \pp \tond*{\R^d}$ satisfying $W_2(\mu, \pi) \le 3 $ and such that
		along \eqref{eq:ULA} with $0<h<C$
		\begin{equation}\label{eq:iteration-low-bound-prop-regime}
			\msw{\mu_k,\pi} \ge (1-h/4)^k.
		\end{equation}
	\end{theorem}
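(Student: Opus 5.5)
The plan is to build the counterexample so that $\pi$ is exactly a product measure. Then ULA decouples coordinatewise, and the $\mswm$-bias can be bounded below through the mean, using the left inequality in \eqref{eq:general-bounds-mswm}, $\msw{\pi,\hat\pi}\ge\abs*{m-\hat m}$.

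\textbf{Construction.} Take $N=d$ and $\lambda=\tfrac14$. Let $\nu$ be the uniform distribution on the signed basis $\{\pm e_1,\dots,\pm e_d\}\subset S^{d-1}$, so that $\nu$ is centered with $\Sigma=I_d/d$ and $\condcov=1$. Take all $u_i=u$ with $u(s)=\tfrac12 s^2+\phi(s)$, where $\phi$ is a fixed smooth non-even function with $\phi''$ small and $\phi'''$ bounded and not identically zero. Then $\tfrac12\le u''\le 2$ and $\abs*{u'''}\le\bar M$.

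Two points need checking. First, whether the sub-Gaussian constant $K$ is allowed to grow with $d$ here. The statement only requires $\nu$ to satisfy the assumption, but for this discrete $\nu$ the constant $K$ is of order $\sqrt{d/\log d}$ rather than bounded. If a bounded $K$ is required, I would instead rotate by a random orthogonal matrix and run a perturbation argument; I flag this as a possible gap. Second, the sign flips: since $u(\langle -e_j,x\rangle)=u(-x_j)$, the one-dimensional potentials depend on signs. I would absorb this by conditioning on the signs, or simply by using only $+e_j$ and noting that $\nu$ need not be centered for this construction.

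With this choice, $V(x)=\sum_j\bigl(\tfrac18x_j^2+\sum_{i:\,a_i=\pm e_j}u(\pm x_j)\bigr)$, so $\pi$ is a product of one-dimensional measures. ULA with a separable gradient also factorizes, so $\hat\pi$ is a product of the one-dimensional ULA limits.

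\textbf{Counting and the one-dimensional bias.} A balls-in-bins argument with $N=d$ draws shows that, with probability bounded below by an absolute constant (in fact with high probability), at least $c d$ coordinates are hit exactly once with sign $+$, and at least one coordinate is never hit. Every coordinate hit exactly once with sign $+$ carries the same one-dimensional potential $W(s)=\tfrac18s^2+u(s)$. It therefore has the same mean bias $b(h)=\hat m_W-m_W$, and
\[
\abs*{m-\hat m}\ge\sqrt{cd}\,\abs*{b(h)}.
\]
It remains to show $\abs*{b(h)}\ge c'h$ for $0<h<C$. I would do this by a first-order expansion of the one-dimensional ULA invariant measure, $\hat\pi_W=\pi_W(1+hg+O(h^2))$, where $g$ solves a Poisson-type equation obtained from the discrete generator expansion, $\tfrac12(\mathcal L^2)^*$-type correction. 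This gives $b(h)=h\,\mathbb E_{\pi_W}[s\,g]+O(h^2)$ with a constant $\gamma=\mathbb E_{\pi_W}[s\,g]$ that depends only on $\phi$. Since $\phi$ is fixed and non-symmetric, one can choose it, possibly a small multiple $\eps\phi$ and expanding in $\eps$ as well, so that $\gamma\neq0$. Shrinking $C$ then absorbs the $O(h^2)$ term.

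This one-dimensional nondegeneracy computation is the main obstacle. What I would try first: take $\phi$ small and compute $\gamma$ to first order in the perturbation around the Gaussian, where ULA is explicit. Then $\gamma$ becomes an explicit Gaussian integral of $\phi'''$, which is nonzero for, e.g., a bounded smooth odd-cubic-like bump.

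\textbf{Lower bound along the iterates.} Pick a coordinate $j$ that is never hit. In that coordinate $\pi$ is $\mathcal N(0,4)$, and ULA acts linearly there: $\hatx_{k+1,j}=(1-h/4)\hatx_{k,j}+\sqrt{2h}\,\xi_k$. Let $\mu$ be $\pi$ translated by $e_j$. Then $W_2(\mu,\pi)\le1\le3$, and the $j$-th coordinate of the mean of $\mu_k$ is exactly $(1-h/4)^k$, while $\pi$ has zero mean in that coordinate. By \eqref{eq:general-bounds-mswm},
\[
\msw{\mu_k,\pi}\ge(1-h/4)^k,
\]
which is \eqref{eq:iteration-low-bound-prop-regime}.
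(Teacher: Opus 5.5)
Your proposal is correct and is essentially the paper's proof. It uses the same signed-basis $\nu$ with $N=d$ and $\lambda=\frac14$ and the product structure of $\pi$ and $\hat\pi$. It uses the same $\gtrsim d$ coordinates hit exactly once with sign $+$, giving $|m-\hat m|\gtrsim\sqrt{d}\,h$, and a never-hit $\mathcal N(0,4)$ coordinate whose mean decays as $(1-h/4)^k$. The paper starts that coordinate at $\delta_1$ rather than translating $\pi$, which is immaterial, and it also accepts the growing $K\asymp\sqrt{d/\log d}$.

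For the one-dimensional step, the paper builds $U_\delta(x)=\frac{x^2}{2}-\delta\sin x+b(\delta)x$ via the implicit function theorem so that $\int xU''\,d\rho\neq0$, and then invokes \cite[Proposition 3.1]{che-che-wee-wea-2024}. This matches your perturbative plan, because the first-order ULA mean bias is $-\frac h4\mathrm{Cov}_{\pi}(x,U''(x))$, which by Stein's lemma is a Gaussian integral of $\phi'''$ to leading order. Drop your aside about a non-centered $\nu$: Assumption~\ref{ass:ridge-separable-assumption} forbids it, and your main argument never uses it.
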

	\begin{remark}
		The bounds in the above theorem are stated for the $\mswm$-metric rather than the rescaled version: this is without loss of generality, since the strong log-concavity parameter is of order $1$.
		To appreciate the result, suppose that the goal is to sample with accuracy $\msw{\mu_k, \pi}\le \eps$ for a fixed  $\eps>0$. Then, \eqref{eq:mswm-bias-low-bound-prop-regime} forces to choose a step-size $h\lesssim \frac{\eps}{\sqrt{d}}$. Consequently, from \eqref{eq:iteration-low-bound-prop-regime}, we see that this requires at least 
		\[
		k \gtrsim \frac{\sqrt{d}}{\eps}
		\]
		iterations of ULA, proving that indeed (almost) dimension-free rates are not possible in this regime.
		Notice that for the specific distribution $\nu$ constructed in Theorem \ref{thm:proportional-regime} the constant  $K$ from Assumption \ref{ass:ridge-separable-assumption} is not of order $1$, but rather $K\asymp \sqrt{\frac{d}{\log d}}$, unlike what happens for Gaussian distributions. The results of the previous subsection still imply that the iteration complexity is asymptotically dimension-free for $N$ large enough, and so the double descent phenomenon holds for this distribution, 
		although in this specific case we require a larger choice of $N$ compared to the $N\gtrsim d^2$ derived before. 
		Theorem \ref{thm:proportional-regime} provides a rigorous justification that dimension-free guarantees are in general not possible in the regime $N\asymp d$.  We complement this theoretical result with a simulation for a potential analogous to the explicit one of Theorem \ref{thm:proportional-regime}, but with Gaussian data $(a_i)_i$ satisfying Assumption \ref{ass:ridge-separable-assumption} with constant (dimension-free) $K$. The results are illustrated in Figure \ref{fig:proportional_regime_simulation}, showing that even in this setting when $N\asymp d$ the error in the mean (hence the $\mswm$-bias) scales with $\sqrt{d}$. Notice that in this setting the strong log-concavity parameter of $\pi \propto e^{-V}$ is of order $1$, as discussed before, hence without loss of generality we can plot $\abs*{m-\hat m}$ rather than $\sqrt{\alpha}\abs*{m-\hat m}$.

		\begin{figure}[http!]
			\centering
			\includegraphics[width=0.55\textwidth]{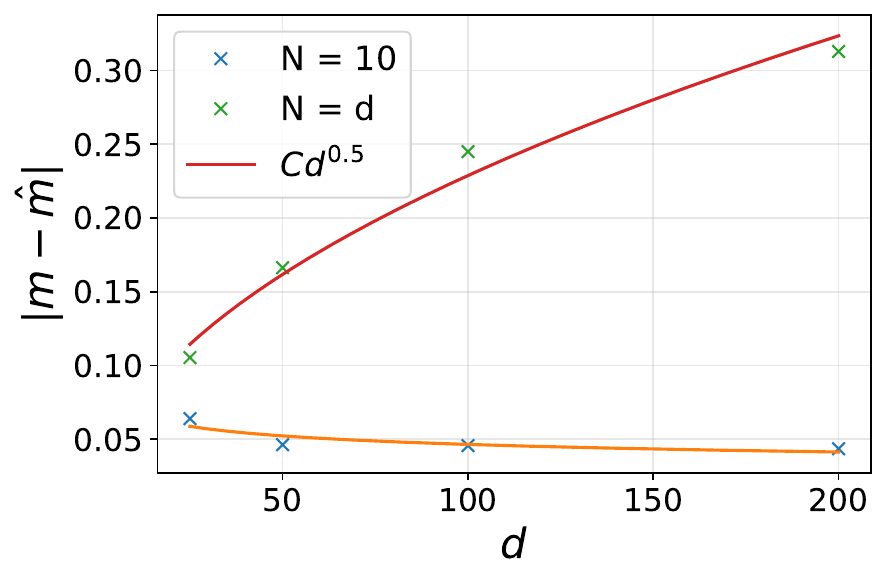}
			\caption{
				Bias of ULA as a function of the dimension $d$ for two choices of the number of ``data points'': $N=d$ and $N=10$ fixed. For each $d \in \{25,50,100,200\}$, we study the target with potential
				$
				V(x)=\frac12\|x\|^2+\sum_{i=1}^{N}\tond*{0.7 \sin\tond*{\left\langle x, a_i\right\rangle}+\abs*{\left\langle x, a_i\right\rangle}^2}$, where
				$a_i \sim \mathcal{N}(0,I_d/d),
				$
				using one sampled matrix $A \in \mathbb{R}^{N\times d}$ for each value of $d$. The reference mean $m$ is approximated by long MALA simulations with a tuned step size targeting acceptance $0.574$, and $\hat m$ is the ULA estimate obtained with step size $h = 0.2$. The figure reports the error $\abs*{m-\hat m_h}$, averaged over independent chains, for the same set of dimensions in both regimes.
			}
			\label{fig:proportional_regime_simulation}
		\end{figure}
	\end{remark}

	For the proof of Theorem \ref{thm:proportional-regime}, we will use the following lemma, whose proof is given in Appendix \ref{sec:omitted-proofs}: it constructs a centered one-dimensional distribution $\rho$, for which the biased limit of ULA has non-zero mean. The key idea for the proof of Theorem \ref{thm:proportional-regime} is then to choose the data distribution $\nu$ to be supported on the set $\graf*{\pm e_i}$ of coordinate vectors, so that the the resulting ridge-separable distribution $\pi \propto e^{-V}$is a product measure; more precisely, we ensure that with high probability there is a subset of order $d$ coordinates such that the marginal of $\pi$ on those coordinates is a product of independent copies of $\rho$. From this, it follows immediately that the the error in the mean (and to the bias $\msw{\pi,\hat \pi}$) scales with $\sqrt{d}$, as desired.

	\begin{lemma}\label{lem:one-dimensional-potential}
		There exists a one-dimensional probability measure $\rho = e^{-U} \in \pp\tond*{\R}$ with
		potential $U\colon \R\to \R$ of the form 
		\[
		U(x) = \frac{x^2}{2}+C_1\sin x +C_2 x+C_3
		\]
		for constants $C_1,C_2,C_3\in \R$ such that $\rho$ has mean $0$ and $\frac12 \le U'' \le 2$.
		Furthermore, there exists an absolute constant $C>0$ such that 
		for any step-size $0<h<C$, denoting by $\hat \rho_h$ the stationary distribution of ULA, we have that
		\[
		\abs*{\int_\R x \dd \rho  - \int_\R x \dd \hat \rho_h} = \abs*{\int_\R x \dd \hat \rho_h}\ge Ch.
		\]
	\end{lemma}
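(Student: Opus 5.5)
Our plan is to choose $C_1$ as a small fixed constant, say $C_1=\tfrac14$, so that $U''(x)=1-C_1\sin x\in[\tfrac34,\tfrac54]\subset[\tfrac12,2]$. Then we fix $C_2$ so that $\rho$ is centered and $C_3$ so that $\rho$ is normalized. To get $C_2$, note that $c\mapsto \int x\,e^{-x^2/2-C_1\sin x-cx}\dd x\big/\int e^{-x^2/2-C_1\sin x-cx}\dd x$ is the mean of a tilted family. Its derivative in $c$ is minus the variance, so it is strictly decreasing and continuous, tends to $\mp\infty$ as $c\to\pm\infty$, and hence has a unique zero $C_2$. Normalization then fixes $C_3$.

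For the bias, we expand the stationary mean of ULA in $h$. Since $U$ is $\tfrac34$-strongly convex and $\tfrac54$-smooth, $\hat\rho_h$ exists for $h<C$. By stationarity, $\mathbb E_{\hat\rho_h}[U'(X)]=0$, because $\hat X_{k+1}$ and $\hat X_k$ have the same mean. For $\rho$ we also have $\mathbb E_\rho[U']=0$. Writing $\mathbb E_{\hat\rho_h}[U'] = \mathbb E[X]-C_1\mathbb E[\cos X]\cdot$(sign) $+C_2$, more precisely $U'(x)=x+C_1\cos x+C_2$, we get
\[
\int x\dd\hat\rho_h-\int x\dd\rho = -C_1\Big(\int\cos x\dd\hat\rho_h-\int\cos x\dd\rho\Big).
\]
It therefore suffices to show that $\int \cos\dd\hat\rho_h-\int\cos\dd\rho = c_\star h+O(h^2)$ with $c_\star\neq0$, which yields the bound $\ge Ch$ after shrinking $C$.

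The first-order expansion is standard: $\int f\dd\hat\rho_h-\int f\dd\rho = h\int \psi f\dd\rho+O(h^2)$ for smooth test functions. Here $\psi$ comes from the one-step generator error $\tfrac{h}{2}(\mathcal L^2 f$ minus the ULA second-order term$)$, equivalently via the Poisson equation $\mathcal L g=f-\int f\dd\rho$ with $\mathcal L g=g''-U'g'$. Concretely, the ULA generator is $\mathcal L+h\mathcal A+O(h^2)$, and the first-order correction is $\int(\hat\rho_h-\rho)f=-h\int \mathcal A g\dd\rho+O(h^2)$. Since $U$ is smooth with bounded derivatives of all orders beyond the second, $g$ and its derivatives have controlled growth, so the $O(h^2)$ remainder is uniform for $h<C$.

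The main obstacle is verifying $c_\star\neq0$ for the specific $U$. We plan to handle it perturbatively in $C_1$. At $C_1=0$ and $C_2=0$, $\rho$ is standard Gaussian and $\hat\rho_h=\mathcal N(0,(1-h/2)^{-1})$. Then
\[
\int\cos\dd\hat\rho_h-\int\cos\dd\rho=e^{-\sigma_h^2/2}-e^{-1/2}=-\tfrac{h}{4}e^{-1/2}+O(h^2)\neq 0.
\]
The coefficient $c_\star(C_1)$ is continuous in $C_1$, with $C_2=C_2(C_1)$ continuous and vanishing at $0$. Hence for $C_1$ small enough, $c_\star\neq0$, and $C_1c_\star\neq0$ gives the claim. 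The uniformity of the remainder in $C_1$ near $0$ follows from the same Poisson-equation estimates.
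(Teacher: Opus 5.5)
Your proposal is correct and shares the paper's skeleton: a Gaussian potential perturbed by a small multiple of $\sin x$, a linear tilt to center it, and a first-order-in-$h$ expansion of the ULA bias for the resulting fixed potential. Where it genuinely differs is in how you certify that the leading coefficient is nonzero.

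\textbf{The paper's route.} It centers via the implicit function theorem. It then checks the nondegeneracy condition $\int x\,U''\,d\rho>0$ directly, through $I(\delta)/\delta\to\int x\sin x\,e^{-x^2/2}\,dx=\sqrt{2\pi}e^{-1/2}$. Finally it invokes Proposition 3.1 of the cited delocalization paper with $f(x)=x$, whose leading term for centered $\rho$ is a multiple of this integral.

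\textbf{Your route.} You use the exact stationarity identity $\mathbb{E}_{\hat\rho_h}[U']=0$ to write the mean bias as $-C_1$ times the bias of $\cos$. This extracts the small factor $C_1$ exactly. You then only need the $O(1)$ coefficient for $\cos$ at the Gaussian point, which you read off from $\hat\rho_h=\mathcal N(0,(1-h/2)^{-1})$.

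This is a nice trick. Applying the expansion to $f(x)=x$ at the Gaussian point gives a vanishing coefficient, since Gaussian ULA has no mean bias. That is why the paper must work at order $\delta$ and needs the specific form of the first-order correction. You only need the correction's existence and its continuity in $C_1$. The two routes agree: both give mean bias $\approx\tfrac{C_1}{4}e^{-1/2}h$ to leading order in $C_1$.

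\textbf{What you must supply yourself.} The price is that you have to provide what the paper outsources.
\begin{itemize}
\item A Talay--Tubaro-type expansion whose $O(h^2)$ remainder is uniform in $0<h<C$. This is only needed for the one fixed potential; uniformity in $C_1$ is unnecessary once $C_1$ is fixed.
\item Continuity of $c_\star$ in $C_1$. This is cleanest from an explicit expression. From your Poisson-equation setup one gets $c_\star=\mathrm{Cov}_\rho\bigl(\tfrac14U'^2-\tfrac12U'',\cos\bigr)$, which equals $-\tfrac14e^{-1/2}$ at $C_1=0$ and is continuous by dominated convergence.
\end{itemize}

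Finally, drop the opening ``say $C_1=\tfrac14$''. Nondegeneracy is only established for $C_1$ small enough.
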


	\begin{proof}[Proof of Theorem \ref{thm:proportional-regime}]
		Throughout this proof, $C>0$ is a small absolute constant whose value is possibly adjusted (decreased) from line to line.
		Recall that we denote by $(e_i)_i$  the standard basis of $\R^d$, and 
		consider  $\nu\in \pp \tond*{S^{d-1}}$ to be the uniform distribution on the set $\graf*{\pm e_i\mid i \in [d]}$, which has cardinality $2d$; notice that $\nu$ is centered and $\cova{\nu}=\frac1d I_d$.
		Set $N= d$ and let $(a_i)_{i=1}^N = (a_i)_{i=1}^d \iid \nu$. We consider $\pi \propto e^{-{V}}\in \pp\tond*{\R^d}$ with potential $V$ defined as in \eqref{eq:potential-statistics-application}, where we also choose $\lambda = \frac14$ and $u_i (x) \coloneqq U(x) -\frac18 x^2$, where $U$ is the potential from Lemma \ref{lem:one-dimensional-potential}.
		Thus, Assumption \ref{ass:ridge-separable-assumption} is satisfied with $\lambda, \bar \alpha, \bar \beta, \bar M,\kappa_\Sigma$ all of order $1$; furthermore, the choice of $\lambda$ satisfies $\lambda \asymp \trace{\Sigma}$, which is the natural order of magnitude as we explain in the next subsection.
		By construction, $\pi$ is a $\frac{1}{4}$-log-concave product measure, i.e. we have 
		\begin{align}
			V(x) &= \sum_{i=1}^d V_i(x_i)
			\\
			V_i(x_i) & =  \frac18(1-k_+(i)-k_-(i)) x_i^2+ k_+(i)U(x_i) +k_-(i)U(-x_i)  
		\end{align}
		where $k_+(i)$ and $k_-(i)$ are defined as the number of $j\in [N]$ such that $a_j = e_i$ or $a_j = -e_i$ respectively.
		We denote now by $u\subset [d]$ the set of coordinates $i$ for which $k_+(i)=1$ and $k_-(i)=0$, and by $\pi_u \in \pp \tond*{\R^{|u|}}$ the marginal of $\pi$ on these coordinates.
		Notice that $\pi_u = \rho^{\otimes \abs*{u}}$, i.e. $\pi_u$ is a product measure corresponding to $\abs*{u}$ copies of $\rho = e^{-U}$, with $\rho$ from Lemma \ref{lem:one-dimensional-potential}.
		Furthermore, denoting by $\mathcal{A}$ the event that $\abs*{u}\ge \frac{d}{10}$ and that there exists a coordinate $i$ for which $k_+(i) = k_-(i) =0$,
		it is not difficult to check that for large $N=d \gg 1$
		we have
		\[
		\pro{\mathcal{A}} \geq C
		\]
		(where we recall that $C>0$ is a small enough absolute constant).
		It follows that, for any step-size $0<h<C$, with probability at least $C>0$
		\begin{align*}
			\msw{\pi,\hat\pi } & \ge
			\abs*{m-\hat m} 
			\\
			&\ge \abs*{\int x \dd \pi_u - \int x \dd \hat \pi_u}
			\\
			&\gtrsim  \sqrt{\abs*{u}} h
			\\
			& \gtrsim \sqrt{d}h,
		\end{align*}
		which proves the first desired bound \eqref{eq:mswm-bias-low-bound-prop-regime}.
		From this, it is not difficult to conclude that in the regime $N\asymp d$ it is not possible to establish dimension-free complexity bounds for sampling in $\mswm$-accuracy, essentially because the previous bound forces to choose a step-size scaling with $\frac1{\sqrt d}$.
		More precisely, we argue as follows to prove \eqref{eq:iteration-low-bound-prop-regime}. Recall that, given the event $\mathcal{A}$, we denote by $i$  a coordinate for which 
		$k_+(i) = k_-(i) =0$.
		Consider as initialization for \eqref{eq:ULA} the product measure $\mu$
		whose marginal on coordinate $i$ is a Dirac $\delta_1$ centered at $1$, and whose marginal on the remaining coordinates $[d]\backslash \graf{i}$ is equal to that of $\pi$. Notice that 
		\begin{align*}
			W_2(\mu,\pi) = & \sqrt{\int_{\R}  \abs*{x_i-1}^2 \dd\pi(x)} \le 3
		\end{align*}
		since $\pi_i$ is a centered  one-dimensional Gaussian with variance $4$.
		Consider now running \eqref{eq:ULA} with a step-size $0<h<C$, and denote by $R_k$ the mean of the $i$-th marginal at iteration $k$. Clearly, $R_0 = 1$, and is not difficult to check that by taking expectations in the recursion defining ULA that
		\[
		R_k = (1-h/4)^k.
		\]
		We deduce that
		\begin{align*}
			\msw{\mu_k, \pi} &\ge \abs*{R_k-\int x_i \dd \pi}
			\\
			& = (1-h/4)^k,
		\end{align*}
		which gives the desired conclusion in \eqref{eq:iteration-low-bound-prop-regime}.
	\end{proof}

	\subsection{Overparameterized regime: $N\ll d$}
	
	We first prove in Theorem \ref{thm:msw-bias-low-rank-pert} a general bound about the $\mswm$-bias for low-rank perturbations 
	of standard Gaussians. In the proof, we make use of the following tensorization bound for the $\mswm$-metric, proved in Appendix \ref{sec:omitted-proofs}.
	\begin{lemma}[Tensorization bound for $\mswm$]
		\label{lem:tensorization-mswm}
		Consider product measures $\mu = \mu_1\otimes \mu_2, \nu = \nu_1\otimes \nu_2 \in \pp_2\tond*{\R^{d_1+d_2}}$.
		We have
		\begin{equation}
			\mswm^2\tond*{\mu,\nu} \leq \mswm^2\tond*{\mu_1, \nu_1} +\mswm^2\tond*{\mu_2,\nu_2}.
		\end{equation}
	\end{lemma}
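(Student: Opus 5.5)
Given couplings of $(\mu_1,\nu_1)$ and of $(\mu_2,\nu_2)$, we build a product coupling and bound the operator norm of its error second-moment matrix by the two diagonal blocks.

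\emph{Coupling.} By lower semicontinuity and tightness, optimal couplings exist in the definition of $\mswm$. Alternatively, we work with $\delta$-optimal couplings and let $\delta\to0$ at the end. So fix a coupling $(X_1,Y_1)$ of $(\mu_1,\nu_1)$ and a coupling $(X_2,Y_2)$ of $(\mu_2,\nu_2)$, with the pairs $(X_1,Y_1)$ and $(X_2,Y_2)$ independent, and each nearly optimal for its own $\mswm$. Then $X=(X_1,X_2)\sim\mu_1\otimes\mu_2$ and $Y=(Y_1,Y_2)\sim\nu_1\otimes\nu_2$, so $(X,Y)$ is a coupling of $(\mu,\nu)$. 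Write $Z_j=X_j-Y_j$ and $S_j=\expe{Z_jZ_j^\top}$, so $\opnorm{S_j}\approx\mswm^2(\mu_j,\nu_j)$.

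\emph{Block bound.} We have
\[
\expe{(X-Y)(X-Y)^\top}=\begin{pmatrix} S_1 & m_1m_2^\top\\ m_2m_1^\top & S_2\end{pmatrix},\qquad m_j=\expe{Z_j}.
\]
For a unit vector $w=(w_1,w_2)$, the quadratic form equals $\expe{(\scal{w_1,Z_1}+\scal{w_2,Z_2})^2}$. This is where the cross term must be handled, and it is the only real point. By independence,
\[
\expe{(\scal{w_1,Z_1}+\scal{w_2,Z_2})^2}=\expe{\scal{w_1,Z_1}^2}+\expe{\scal{w_2,Z_2}^2}+2\scal{w_1,m_1}\scal{w_2,m_2}.
\]
Since $\scal{w_j,m_j}^2\le\scal{w_j,S_jw_j}$ by Jensen, the cross term is at most $2\sqrt{ab}\le a+b$ with $a=\scal{w_1,S_1w_1}$ and $b=\scal{w_2,S_2w_2}$. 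That gives only the constant-$2$ bound $2(a+b)$, which is too weak.

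\emph{Removing the factor $2$ by centering.} Let $\tilde Z_j=Z_j-m_j$. Then
\[
\expe{(\scal{w_1,Z_1}+\scal{w_2,Z_2})^2}=\mathrm{Var}(\scal{w_1,\tilde Z_1})+\mathrm{Var}(\scal{w_2,\tilde Z_2})+(\scal{w_1,m_1}+\scal{w_2,m_2})^2 .
\]
Cauchy--Schwarz with weights $\abs{w_1}^2+\abs{w_2}^2=1$ gives
\[
(\scal{w_1,m_1}+\scal{w_2,m_2})^2\le \abs{m_1}^2+\abs{m_2}^2 .
\]
Combined with $\mathrm{Var}(\scal{w_j,\tilde Z_j})\le\opnorm{\mathrm{Cov}Z_j}\abs{w_j}^2$, this still does not close, because the mean part does not scale with $\abs{w_j}^2$.

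The cleaner route is to write $t_j=\abs{w_j}$ with $t_1^2+t_2^2=1$, and note that the quadratic form equals $\expe{(t_1\xi_1+t_2\xi_2)^2}$ with $\xi_j=\scal{w_j/t_j,Z_j}$ and $\expe{\xi_j^2}\le s_j:=\opnorm{S_j}$. Then
\[
\expe{(t_1\xi_1+t_2\xi_2)^2}\le t_1^2s_1+t_2^2s_2+2t_1t_2\sqrt{s_1s_2}=(t_1\sqrt{s_1}+t_2\sqrt{s_2})^2\le s_1+s_2,
\]
where the last step is Cauchy--Schwarz. Taking the supremum over $w$ and letting $\delta\to0$ yields the claim.
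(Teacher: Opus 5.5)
Your final argument is correct and is essentially the paper's proof: take independent (near-)optimal couplings for the two factors, expand the quadratic form in $w=(w_1,w_2)$, and bound the cross term by Cauchy--Schwarz to get $(t_1\sqrt{s_1}+t_2\sqrt{s_2})^2\le s_1+s_2$. The detours through AM--GM and centering are unnecessary, since your first attempt already closes once you write $a+b+2\sqrt{ab}=(\sqrt a+\sqrt b)^2$ with $a\le s_1 t_1^2$ and $b\le s_2 t_2^2$.
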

	
	\begin{theorem}\label{thm:msw-bias-low-rank-pert}
		Suppose that $\pi \propto e^{-V}$ with
		\[
		V(x) = \frac{\lambda}{2}\abs*{x}^2+ \ell\tond*{Rx},
		\]
		where $\ell\colon \R^d\to \R$ satisfies $0_d\preccurlyeq \hess \ell \preccurlyeq L I_d$, $\lambda >0$,
		and $R\in \R^{d\times d}$ is an orthogonal projection matrix with rank $r\ge 1$ such that $R^2 = R = R^T$.
		Then, if $0< h\le \frac{1}{\lambda+L}$ we have
		\[
		\msw{\pi,\hat \pi} \lesssim h\sqrt{r}\frac{\lambda+L}{\sqrt{\lambda}}.
		\]
	\end{theorem}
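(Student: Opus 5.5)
The plan is to exploit the orthogonal decomposition $\R^d = \mathrm{Im}(R)\oplus \ker(R)$ and reduce the problem to Lemma \ref{lem:tensorization-mswm}. Fix an orthonormal basis adapted to this splitting and write $x=(y,z)$, with $y=Rx\in\R^r$ and $z=(I-R)x\in\R^{d-r}$. Since $\abs*{x}^2=\abs*{y}^2+\abs*{z}^2$ and $\ell(Rx)$ depends only on $y$, the potential splits as
\[
V(x)=\Big(\tfrac{\lambda}{2}\abs*{y}^2+\ell(y)\Big)+\tfrac{\lambda}{2}\abs*{z}^2=:V_1(y)+V_2(z).
\]
Hence $\pi=\pi_1\otimes\pi_2$, where $\pi_2=\mathcal N(0,\lambda^{-1}I_{d-r})$. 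The gradient $\nabla V$ also splits into $(\nabla V_1(y),\lambda z)$. The ULA recursion \eqref{eq:ULA} therefore acts independently on the two blocks, because the Brownian increments are independent across orthogonal coordinates. Consequently the stationary measure factorizes as well, $\hat\pi=\hat\pi_1\otimes\hat\pi_2$, where $\hat\pi_j$ is the ULA limit for $V_j$. Existence and uniqueness follow from contractivity, since $h\le 1/(\lambda+L)$.

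Applying Lemma \ref{lem:tensorization-mswm} gives $\mswm^2(\pi,\hat\pi)\le \mswm^2(\pi_1,\hat\pi_1)+\mswm^2(\pi_2,\hat\pi_2)$, and the two terms can be bounded separately.

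\textbf{Gaussian block.} $V_2$ is quadratic with Hessian $\lambda I$, and $h\le 1/\lambda$. Proposition \ref{pro:msw-bias-gaussian} therefore gives $\msw{\pi_2,\hat\pi_2}\lesssim \sqrt{\lambda}h\le h(\lambda+L)/\sqrt{\lambda}$. One can also see this directly: $\hat\pi_2$ is $\mathcal N(0,\sigma^2 I)$ with $\sigma^2=\frac{2h}{1-(1-h\lambda)^2}$, and coupling by scaling gives an operator-norm error of order $h^2\lambda$.

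\textbf{Perturbed block.} $V_1$ on $\R^r$ satisfies Assumption \ref{ass:strong-log-conc-log-smooth} with $\alpha=\lambda$ and $\beta=\lambda+L$. Since $\mswm\le W_2$, Proposition \ref{prop:w2-guarantess-ula-strong-log-conc} applied in dimension $r$ gives
\[
\msw{\pi_1,\hat\pi_1}\le W_2(\pi_1,\hat\pi_1)\le 6h\sqrt{\kappa\beta r}=6h\sqrt r\,\frac{\lambda+L}{\sqrt\lambda}.
\]
Summing the squares of the two bounds and taking the square root yields the claim, since $\sqrt{\lambda}\le(\lambda+L)/\sqrt\lambda$ and $r\ge1$.

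The main step requiring care is the factorization $\hat\pi=\hat\pi_1\otimes\hat\pi_2$. The argument is as follows. Start ULA from $\pi_1\otimes\pi_2$ (or from any product measure). By the block structure, the law at every iteration stays a product of the two block-wise ULA laws. Each block-wise law converges in $W_2$ to its own stationary measure, and the limit $\hat\pi$ is unique. Hence $\hat\pi$ must equal $\hat\pi_1\otimes\hat\pi_2$. Everything else reduces to results already stated earlier in the paper.
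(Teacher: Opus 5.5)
Your proposal is correct and follows essentially the same route as the paper. Like the paper, you rotate so that the perturbation acts on an $r$-dimensional block, factor $\pi$ and $\hat\pi$ as products, bound the perturbed block by the classical $W_2$ bias estimate of Proposition \ref{prop:w2-guarantess-ula-strong-log-conc} and the Gaussian block by Proposition \ref{pro:msw-bias-gaussian}, and combine the two with Lemma \ref{lem:tensorization-mswm}. The paper only asserts $\hat\pi=\hat\pi_1\otimes\hat\pi_2$, whereas you justify it via uniqueness of the ULA limit, which is a useful extra detail.
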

	
	\begin{proof} 
		Without loss of generality, after applying an orthogonal transformation, we can assume that
		\[
		V(x) = \frac{\lambda}{2}\abs*{x}^2 + \tilde \ell\tond*{(x_i)_{i=1}^r}.
		\]
		with $\tilde \ell\colon \R^r \to \R$ still satisfying $0_r \preceq \hess \tilde \ell \preceq L I_r $
         (cf. Appendix  \ref{sec:app:auxiliary-ld-ula} for additional discussion about rotational invariance of (Unadjusted) Langevin).

		Then, denoting by $\pi_1$ the marginal of $\pi$ on the first $r$ coordinates and by $\pi_2$ the marginal on the remaining ones we have 
		\begin{itemize}
			\item $\pi = \pi_1 \otimes \pi_2$ and $\hat \pi = \hat \pi_1 \otimes \hat \pi_2$;
			\item $\msw{\pi_1, \hat \pi_1} \lesssim W_2(\pi_1,\hat \pi_1) \lesssim h\sqrt{r}\frac{\lambda+L}{\sqrt{\lambda}}$ by Proposition \ref{prop:w2-guarantess-ula-strong-log-conc};
			\item $\msw{\pi_2, \hat \pi_2} \lesssim h\sqrt{\lambda}$ since $\pi_2$ is Gaussian, cf. Proposition \ref{pro:msw-bias-gaussian}.
		\end{itemize}
		The proof of the theorem is then completed by Lemma \ref{lem:tensorization-mswm}.
	\end{proof}

	We are finally ready to state our main result for ridge-separable potentials \eqref{eq:potential-statistics-application} in the overparameterized regime $N\ll d$, showing that the iteration complexity of achieving a target $\mswm$-accuracy scales with $\sqrt{N}$ rather than $\sqrt{d}$.
	\begin{theorem} \label{thm:overparameterized-regime}
		Let $\pi \propto e^{-V}$ for a ridge-separable potential $V$ as in \eqref{eq:potential-statistics-application}, and suppose that Assumption \ref{ass:ridge-separable-assumption} holds, that $N \le d-1$, and that
		\[
		\lambda \asymp\trace{\Sigma} = \expe{\abs*{a}^2}.
		\]
		Then, with probability at least $1-2e^{-d}$ over the datapoints $(a_i)_i$, for any $0<\eps<1$, there exists a sufficiently small step-size $h>0$ such that  along \eqref{eq:ULA} we have
		\[
		\sqrt{\lambda}\msw{\mu_k,  \pi} \le \eps
		\]
		provided that 
		\[
		k \ge  \frac{C\sqrt{N}\tond*{1+\bar\beta K^{2}\condcov }}{\eps}\log\tond*{1+\frac{\sqrt{\lambda}W_2(\mu,\hat \pi)}{\eps}},
		\]
		where $C>0$ is a positive absolute constant.
	\end{theorem}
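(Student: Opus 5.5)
The plan is to reduce to Theorem \ref{thm:msw-bias-low-rank-pert} and then argue exactly as in the proof of Theorem \ref{thm:asymtotic-regime}.

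\emph{Step 1: low-rank structure.} Write $V(x)=\frac{\lambda}{2}\abs*{x}^2+\ell(Rx)$. Here $R$ is the orthogonal projection onto $\mathrm{span}(a_1,\dots,a_N)$, which has rank $r\le N$; if $r=0$ the measure is Gaussian and Proposition \ref{pro:msw-bias-gaussian} applies directly. Set $\ell(y)=\sum_i u_i(\scal*{a_i,y})$. Then $\ell(Rx)=\ell(x)$, since $\scal*{a_i,Rx}=\scal*{a_i,x}$. Moreover $\hess\ell=\sum_i u_i''(\scal*{a_i,\cdot})a_ia_i^\top\preccurlyeq \bar\beta\,\opnorm{\sum_i a_ia_i^\top}I_d$, and it is nonnegative since $u_i''\ge\bar\alpha\ge 0$. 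So Theorem \ref{thm:msw-bias-low-rank-pert} holds with $L=\bar\beta\opnorm{\sum_i a_ia_i^\top}$ and gives, for $h\le 1/(\lambda+L)$,
\[
\sqrt{\lambda}\,\msw{\pi,\hat\pi}\lesssim h\sqrt{N}(\lambda+L).
\]

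\emph{Step 2: high-probability bound on $L$.} This is the main obstacle. I need $\opnorm{\sum_{i=1}^N a_ia_i^\top}\lesssim K^2\condcov\,\trace{\Sigma}$ with probability at least $1-2e^{-d}$ when $N\le d$. The approach is to write $\sum_i a_ia_i^\top=A^\top A$, with $A\in\R^{N\times d}$ having i.i.d.\ sub-Gaussian rows $a_i=\Sigma^{1/2}z_i$, so that $\norm*{\scal*{z_i,\theta}}_{\psi_2}\le K$. Then
\[
\opnorm{A^\top A}\le \lambda_{\max}(\Sigma)\,\opnorm{Z}^2,
\]
where $Z$ has isotropic sub-Gaussian rows. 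The standard bound (Vershynin) gives $\opnorm{Z}\lesssim K(\sqrt{N}+\sqrt{d}+t)$ with probability at least $1-2e^{-t^2}$. Taking $t=\sqrt d$ and using $N\le d$ yields $\opnorm{Z}^2\lesssim K^2 d$. Since $d\,\lambda_{\max}(\Sigma)\le \condcov\, d\,\lambda_{\min}(\Sigma)\le\condcov\trace{\Sigma}$, we get $L\lesssim\bar\beta K^2\condcov\trace{\Sigma}\asymp \bar\beta K^2\condcov\,\lambda$. Hence
\[
\lambda+L\lesssim \lambda\tond*{1+\bar\beta K^2\condcov}.
\]

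\emph{Step 3: choice of $h$ and iteration count.} On this event, $\alpha\ge\lambda$ and $\beta\le\lambda+L$. Proceed as in \eqref{eq:convergence-bound-mswm-asymptotic}:
\[
\sqrt\lambda\,\msw{\mu_k,\pi}\le \sqrt\lambda\,e^{-\lambda kh}W_2(\mu,\hat\pi)+Ch\sqrt{N}\,\lambda(1+\bar\beta K^2\condcov).
\]
Here the contraction rate $e^{-\alpha kh}\le e^{-\lambda kh}$ comes from the $(1-\alpha h)$-contractivity of ULA. Choose
\[
h=\frac{\eps}{2C\lambda\sqrt N(1+\bar\beta K^2\condcov)}.
\]
This satisfies $h\le 1/(\lambda+L)$ after adjusting constants, since $\eps<1$, and it makes the bias term at most $\eps/2$. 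The exponential term is below $\eps/2$ once $k\ge(\lambda h)^{-1}\log(2\sqrt\lambda W_2(\mu,\hat\pi)/\eps)$. This is of the claimed form $\frac{C\sqrt N(1+\bar\beta K^2\condcov)}{\eps}\log\tond*{1+\frac{\sqrt\lambda W_2(\mu,\hat\pi)}{\eps}}$.
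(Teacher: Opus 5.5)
Your proposal is correct and follows the paper's proof. You apply Theorem \ref{thm:msw-bias-low-rank-pert} with $r\le N$ and $L=\bar\beta\opnorm{\sum_i a_ia_i^\top}$, bound $L\lesssim \bar\beta K^2\condcov\trace{\Sigma}\asymp \bar\beta K^2\condcov\lambda$ with probability $1-2e^{-d}$, and then choose $h\asymp \eps/(\lambda\sqrt{N}(1+\bar\beta K^2\condcov))$ together with the $(1-\lambda h)$-contraction of ULA. The only cosmetic difference is that you derive the bound on $\opnorm{\sum_i a_ia_i^\top}$ directly from the estimate $\opnorm{Z}\lesssim K(\sqrt N+\sqrt d+t)$ (valid with $K$ rather than $K^2$ by the standard net argument for independent centered sub-Gaussian rows), whereas the paper invokes Lemma \ref{lem:tail-prob-bound-opnom-sample-cov-2} with $u=d$; both yield $L\lesssim \bar\beta K^2\opnorm{\Sigma}d$.
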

	\begin{remark}
		In the setting of Theorem \ref{thm:overparameterized-regime}, observe that 
		\[
		\sup{\left\{\mathfrak{m} >0 \mid \hess V \succeq \mathfrak{m} I_d\right\}} = \lambda >0
		\]
		since  $\hess \tond*{V(x)-\lambda/2\abs*{x}^2}$ does not have full rank  for $N\le d-1$.
		Unlike in Theorem \ref{thm:asymtotic-regime}, here it is important that $\lambda>0$ is strictly positive, but $\bar \alpha$ from Assumption \ref{ass:ridge-separable-assumption} is allowed to be equal to $0$. The assumption that 
		$\lambda \asymp \trace{\Sigma}$ reflects the natural order of magnitude for this parameter, so that, for example, $\lambda$ is of order $1$ if the datapoints $a_i$ have unit norm.
	\end{remark}
	
	The result is deduced as a consequence of Theorem \ref{thm:msw-bias-low-rank-pert} (with $r \leq N$), and for this we need the following auxiliary lemma, proved in Appendix \ref{sec:app:random-tensors-high-dimensions}, to establish bounds in high-probability for the Hessian of the low-rank perturbative term.
	\begin{lemma}\label{lem:tail-prob-bound-opnom-sample-cov-2}
		Under Assumption \ref{ass:ridge-separable-assumption}, there exists an absolute constant $C>0$ such that, for all $u\ge 0$, 
		if $N\le d+u$ then with probability at least $1-2e^{-u}$ we have that
		\begin{align}
			\label{eq:upp-bound-tail-prob-norm-sample-cov-bis}
			{{\sum_{i=1}^N a_i a_i^T} \preccurlyeq N\opnorm{\Sigma}\tond*{1+CK^2{\frac{d+u}{N}}}I_d}.
		\end{align}
	\end{lemma}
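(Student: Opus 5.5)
Our plan is to reduce the statement to a standard non-asymptotic upper bound on the largest singular value of a sub-Gaussian random matrix. Write $a_i = \Sigma^{1/2} z_i$, where the $z_i$ are i.i.d., isotropic ($\expe{z_iz_i^T}=I_d$) and centered. The assumption $\norm*{\scal*{a_i,\theta}}_{\psi_2}\le K\sqrt{\scal*{\theta,\Sigma\theta}}$ is equivalent to $\norm*{\scal*{z_i,\eta}}_{\psi_2}\le K$ for all unit $\eta$. To see this, take $\theta = \Sigma^{-1/2}\eta/\abs*{\Sigma^{-1/2}\eta}$ and rescale; here $\Sigma\succ 0$ is used. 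Let $Z\in\R^{N\times d}$ be the matrix with rows $z_i^T$. Then
\[
\sum_i a_ia_i^T=\Sigma^{1/2}Z^TZ\,\Sigma^{1/2}\preccurlyeq \opnorm{\Sigma}\,s_{\max}(Z)^2 I_d .
\]
It therefore suffices to show that, with probability at least $1-2e^{-u}$,
\[
s_{\max}(Z)^2\le N\tond*{1+CK^2\tfrac{d+u}{N}}.
\]

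To bound $s_{\max}(Z)$ we use the classical two-sided estimate for matrices with independent, isotropic, sub-Gaussian rows (Vershynin's theorem). For every $t\ge0$, with probability at least $1-2e^{-t^2}$,
\[
s_{\max}(Z)\le \sqrt N + CK^2\tond*{\sqrt d + t}.
\]
If one prefers to reprove it, the argument is an $\tfrac14$-net $\mathcal N$ on $S^{d-1}$ with $\abs*{\mathcal N}\le 9^d$, Bernstein's inequality for $\sum_i(\scal*{z_i,\eta}^2-1)$ at each net point, and a union bound. We take $t=\sqrt u$. Squaring and using $(\sqrt d+\sqrt u)^2\le 2(d+u)$ gives
\[
s_{\max}(Z)^2\le N+2CK^2\sqrt{N}\sqrt{2(d+u)}+2C^2K^4(d+u).
\]

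The remaining step is to absorb everything into the form $N(1+CK^2\tfrac{d+u}{N})$, and this is where the hypothesis $N\le d+u$ is used. Since $N\le d+u$, we have $\sqrt N\sqrt{d+u}\le d+u$, so the cross term is at most $C'K^2(d+u)$. The quadratic term carries $K^4$ rather than $K^2$. Because $K\ge1$ this is not immediately harmless, so we plan to avoid generating it at all.

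The cleanest route is to use the net argument directly on the quadratic form. For fixed unit $\eta$, the variables $\scal*{z_i,\eta}^2$ are sub-exponential with $\psi_1$-norm at most $K^2$. Bernstein's inequality then gives, with probability at least $1-2e^{-c\,s}$,
\[
\sum_i\scal*{z_i,\eta}^2\le N+C\tond*{K^2\sqrt{Ns}+K^2 s}.
\]
Set $s\asymp d+u$ and union-bound over the $9^d$ net points; the $d$ in $s$ absorbs the net cardinality and leaves probability $1-2e^{-u}$. In the regime $N\le d+u$, the term $\sqrt{Ns}$ is again at most $C(d+u)$. This yields the bound $N+CK^2(d+u)$ with a linear dependence on $K^2$. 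Finally, the net approximation for the quadratic form, $\sup_{S^{d-1}}\le 2\sup_{\mathcal N}$ for a $\tfrac14$-net, costs only a constant factor. That factor is harmless in the $N$-term only if we bound $\opnorm{Z^TZ}$ rather than $\opnorm{Z^TZ-NI}$. The resulting estimate is $\opnorm{Z^TZ}\le 2N+CK^2(d+u)\le N\tond*{1+C''K^2\tfrac{d+u}{N}}$, where we again use $N\le d+u$ and $K\ge1$. In this regime $N\le d+u\le K^2(d+u)$, so the leading factor of $N$ is also absorbed. This bookkeeping of constants is the only delicate point; the rest is standard.
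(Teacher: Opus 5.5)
Your proposal is correct and follows essentially the paper's route: whiten to isotropic position, apply a sub-Gaussian covariance concentration bound that is linear in $K^2$, and use $N\le d+u$ to absorb the $\sqrt{N(d+u)}$ term into $d+u$. The only difference is that the paper simply cites Vershynin's high-probability covariance estimate (Remark 4.7.3) for $\Sigma^{-1/2}\tond*{\sum_i a_ia_i^T-N\Sigma}\Sigma^{-1/2}$, which keeps the coefficient of $N$ equal to one, whereas you re-derive a one-sided bound on $\opnorm{Z^TZ}$ via a net and Bernstein and then need $K\ge 1$ to absorb the doubled $N$; netting the centered form $Z^TZ-NI_d$ instead would avoid losing that factor on $N$ altogether.
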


	\begin{proof}[Proof of Theorem \ref{thm:overparameterized-regime}]
		Notice first that the assumptions of Theorem \ref{thm:msw-bias-low-rank-pert} are satisfied by $\pi$, with $r\le N$. 
		From Lemma \ref{lem:tail-prob-bound-opnom-sample-cov-2} with $u=d$, with probability $1-2e^{-d}$ we have the upper bound
		\[
		L \lesssim \bar\beta K^2 \opnorm{\Sigma}{d} \leq \bar\beta K^{2} \condcov \trace{\Sigma}.
		\]
		Let us then assume that this event holds.
        It  follows from Theorem \ref{thm:msw-bias-low-rank-pert} that 
		\[
		\msw{\pi,\hat \pi} \lesssim \frac{\lambda + \bar\beta K^{2} \condcov \trace{\Sigma}}{\sqrt{\lambda}}\sqrt{N}h.
		\]
		Using that
		$
		\lambda \asymp \trace{\Sigma} = \expe{\abs*{a}^2},
		$ 
		we deduce that 
		\begin{equation} \label{eq:bias-bound-overparameterized}
			\msw{\pi,\hat \pi} \lesssim h\sqrt{N\trace{\Sigma}}\tond*{1+\bar\beta K^{2}\condcov }.
		\end{equation}
		To conclude the proof, we can derive an upper bound on the iteration complexity to ensure 
		\[
		\sqrt{\lambda} \msw{\mu_k,\pi} \lesssim \eps
		\]
		for a fixed $0<\eps<1$. Arguing similarly to the proof of Theorem \ref{thm:asymtotic-regime}, from  \eqref{eq:bias-bound-overparameterized} we choose a step-size 
		$$h\lesssim \frac{\eps}{ \lambda\sqrt{N}\tond*{1+\bar\beta K^{2}\condcov }},$$ and from the exponential decay of the Wasserstein distance along \eqref{eq:ULA} we conclude that it suffices to take 
		\[
		k \gtrsim \frac{\sqrt{N}\tond*{1+\bar\beta K^{2}\condcov }}{\eps}\log\tond*{\frac{\sqrt{\lambda}W_2(\mu,\hat \pi)}{\eps}}
		\]
		iterations of the Unadjusted Langevin Algorithm.
	\end{proof}

	\paragraph{Acknowledgments} YC thanks David Dunson for fruitful discussions on the limitations of existing worst-case complexity analysis of Markov chain sampling algorithms. YC and FP acknowledge support from the European Research Council (ERC), through StG ``SamplingBeyondCS'' grant ID 101303421. PAW is partially supported by SNSF Project funding 10009286. 
	\paragraph{AI Usage Statement}
	We wrote the paper and originally derived the proofs without the use of AI. In the later stages of the project, we used ChatGPT only to improve the dependence on $\kappa$ and $\mathfrak{S}$ in Theorem \ref{thm:deloc-bias-gauss-perturb-2} (which does not affect the main conclusions of the paper and the application to Theorem \ref{thm:informal-ridge}). At this stage, we also used it for complementary proof checking. The authors retain full responsibility for all aspects of this article and certify that, to the best of their knowledge, all theoretical derivations and simulations are correct.

	\printbibliography
	
	\newpage

	\appendix
	
	\section{Additional simulation details}
    \label{sec:app:additional_experiments}
    
    In Figure \ref{fig:double_descent}, we average the results over $20$ independent datasets for each value of $N$. We estimate the posterior mean using $10^6$ samples from MALA and compare it with the mean estimated from $5\times 10^5$ samples from ULA. The ULA step-size is set to $d^{1/3}h_{\mathrm{MALA}}$, where $h_{\mathrm{MALA}}$ is tuned to target the optimal acceptance rate for MALA \cite{RobertsMALA} and estimates $c/\beta$ for some $c>0$. This choice provides a consistent step-size tuning procedure across values of $N$.
    
    Figure \ref{fig:double_descent} uses simulated data, but the same bell-shaped behavior is also observed for Bayesian logistic regression on the UCI SUSY dataset \cite{susy_279} in Figure \ref{fig:double_descent_SUSY}. This dataset consists of a classification problem from particle physics with $18$ covariates (dimensions). Its large number of observations allows us to explore the dependence of the error on $N$ across the different regimes.
    We use the first $50,000$ observations and, for each $N$, average results over $10$ subsets drawn uniformly without replacement after a single global shuffle. We precondition the covariates using a fixed invertible linear transformation, chosen so that the transformed coordinate variances are of order $1/d$.
    
    We estimate the posterior mean using two independent MALA chains of length $10^6$, after $2\times 10^5$ tuning iterations targeting the optimal acceptance rate for MALA \cite{RobertsMALA}, and compare this estimate with the mean estimated from a $10^6$-step ULA chain with step-size $d^{1/3}h_{\mathrm{MALA}}$.

    \begin{figure}[h!]
        \centering
        \includegraphics[width=0.328\textwidth]
            {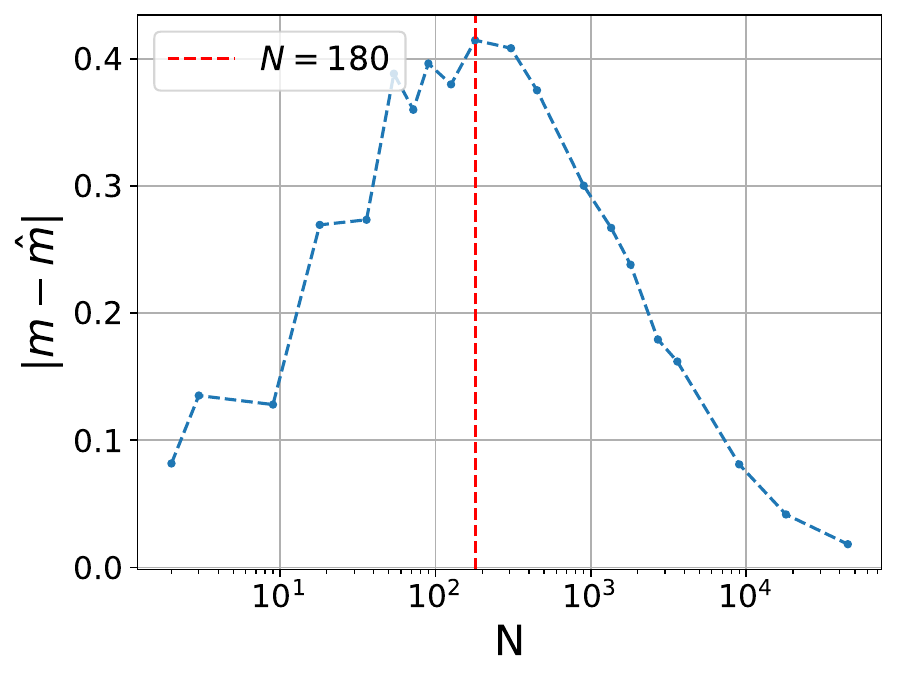}
        \includegraphics[width=0.328\textwidth]
            {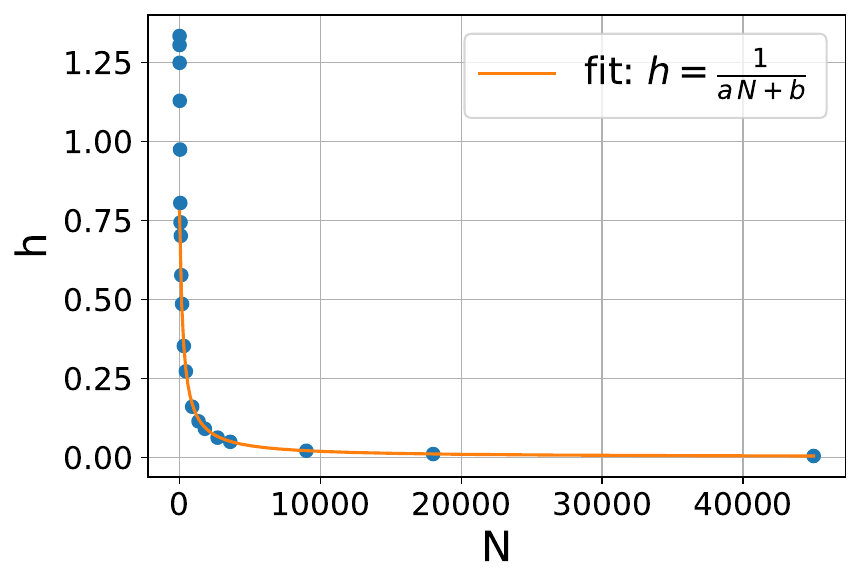}
        \includegraphics[width=0.328\textwidth]
            {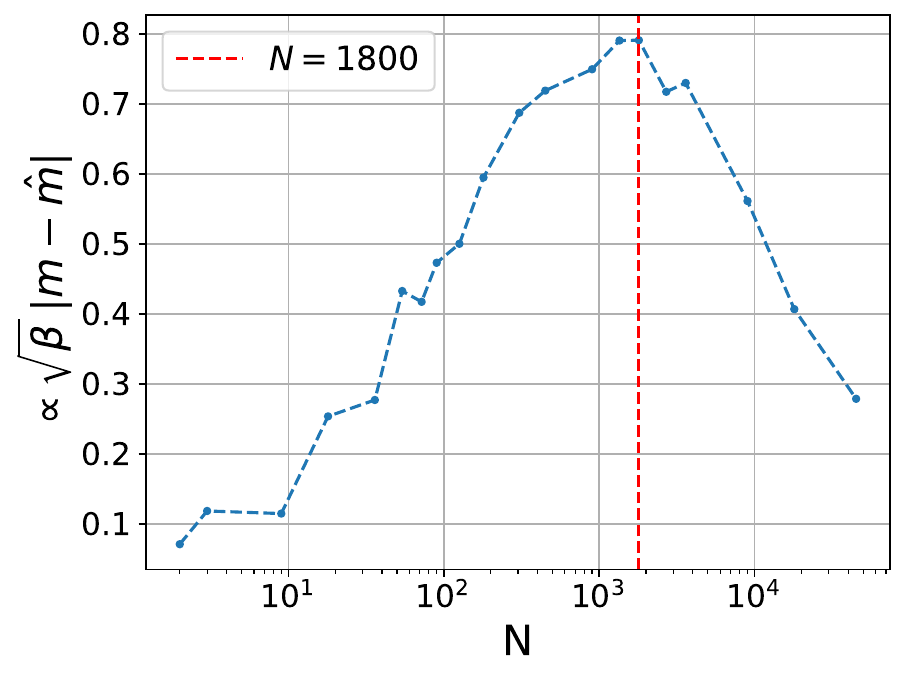}
    
        \caption{$18$-dimensional Bayesian logistic regression on the UCI SUSY dataset, with a standard Gaussian prior on the transformed coefficients.
        Left: Bell-shaped profile for the estimated error in the mean.
        Middle: Step-size scaling as a function of $N$.
        Right: Bell-shaped profile for the rescaled error in the mean.}
        \label{fig:double_descent_SUSY}
    \end{figure}

	\section{Subspace robust Wasserstein and related metrics}
	\label{sec:sliced-wasserstein}
	
	In this section, we discuss more in depth the subspace robust Wasserstein distance and compare it with some  variants.

	First, observe that the $\mswm$-distance provides rather strong control in terms of estimation of test functions.
	Indeed, by Kantorovich duality, for all $\theta \in\prosph{k}$ and Lipschitz functions $f\colon \R^k \to \R$ we have that
    	\[
	\abs*{\int f\tond*{(\scal*{\theta_i,x})_{i=1}^k} \dd \mu(x) -\int f \tond*{(\scal*{\theta_i,x})_{i=1}^k} \dd \nu(x)} \le \lip{f} \sqrt{k}\msw{\mu,\nu}.
	\]
	Furthermore, we also have for the means of the distributions that
	\begin{equation} \label{eq:control-mean-intro}
		\abs*{\int x \dd\mu-\int x \dd \nu } \leq \msw{\mu,\nu}.
	\end{equation}
	Therefore, an upper bound on $\msw{\mu_k, \pi}$ along \eqref{eq:ULA} still yields strong guarantees in terms of estimations of many important test functions, analogously to the $W_2$-metric.

	The subspace robust Wasserstein distance is related to the family of \emph{sliced Wasserstein distances}, which are transportation distances defined in terms of the Wasserstein-distance between projections on lower-dimensional subspaces.
	They were initially introduced because of their computational advantage over the classical Wasserstein metric; since then, they have found a variety of applications in different areas, particularly high-dimensional statistics and machine learning \cite{nadjahi2021sliced}, and are an object of active study \cite{bob-got-2024,car-fig-mer-wan-2025}.
	For example, the standard (squared) $2$-sliced Wasserstein distance is defined as the expected $2$-Wasserstein distance between projections on a uniform at random direction, i.e. by
	\begin{equation} \label{eq:1-sliced-wasserstein}
		\swm^2\tond*{\mu,\nu} \coloneqq {\int_{S^{d-1} } W_2^2\tond*{\mu_\theta, \nu_\theta} \dd\sigma_{d-1}(\theta)},
	\end{equation}
	where $\sigma_{d-1}$ is the uniform measure on the unit sphere $S^{d-1}$ .
	Several variants of \eqref{eq:1-sliced-wasserstein} can then be considered \cite{kit-tak-2023}, e.g. by using $p$-Wasserstein distances for different values of $1\le p \le \infty$, $k$-dimensional projections instead of just one-dimensional, or by replacing the expectation with respect to $\sigma_{d-1}$ with an $L^q\tond*{\sigma_{d-1}}$-norm.
	
	We observe, however, that the averaged counterparts like \eqref{eq:1-sliced-wasserstein} provide only weak guarantees, since they only imply that the typical $W_2$-distance between random low-dimensional pushforwards is dimension-free.   Ultimately, it is important to control the bias in any direction, not only random ones, and hence allowing practitioners to choose arbitrary low-dimensional functions whose expectation needs to be estimated; for example, in general even \eqref{eq:control-mean-intro} fails if we replace the right-hand-side with $\swm$.
	On a related note, the bias in these metrics is typically dimension-free (cf. \eqref{eq:sw2-bias-dimension-free} below). On the one hand, this means in particular that these versions are not suitable for explaining the empirical observations of Figure \ref{fig:double_descent}, as they would predict a flat curve for the iteration complexity, as opposed to the results of our  Theorem \ref{thm:informal-ridge} illustrated in Figure \ref{fig:theory}. On the other hand, as we discuss next, this fact still has interesting implications for predictive inference.
	\paragraph{Sliced Wasserstein distances and predictive inference}
	In the following lemma, we express and generalize this observation that, if we look at  $W_2$-bias in a ``typical direction'', (corresponding to the $2$-sliced Wasserstein metric $\swm$), the result is usually dimension-free 
	(cf. also \cite{DuEb24, lac-zho-2025} for related comments when projecting on a random coordinate axis instead of a random direction on the sphere).
	
	\begin{lemma} \label{lem:predictive-inference}
		Let $\theta$ be a centered random vector in $\R^d$ such that for some positive constants $M,\sigma^2$
		we have 
		$ \opnorm{\covrv{\theta}} \le M^2$ and 
		\begin{equation*}
			\pro{f(\theta) - \expe{f(\theta)} \ge t} \le e^{-\frac{t^2}{2\sigma^2\lip{f}^2}} 
		\end{equation*}
		for all Lipschitz functions $f\colon \R^d \to \R$ and $t\ge 0$. 
		Then, for $\pi,\hat \pi \in \pp_2(\R^d)$, and denoting 
		as before by $\pi_\theta, \hat \pi_\theta$ the pushforward measures with respect to $x \to \scal*{\theta,x}$, we have 
		\begin{equation}\label{eq:expected-w2-uniform-coordinate-marginal}
			\expe{W_2^2 \tond*{\pi_\theta, \hat \pi_\theta} } \le M^2\cdot{W_2^2\tond*{\pi, \hat \pi}}.
		\end{equation}
		Furthermore, 
		\begin{align*}
			\pro{W_2\tond*{\pi_\theta, {\hat \pi}_\theta} > (1+\eps)M\cdot{W_2(\pi,\hat \pi)}} & \le e^{-\frac{M^2\eps^2}{2\sigma^2}}. 
		\end{align*}    
	\end{lemma}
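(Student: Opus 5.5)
The plan is to fix an optimal $W_2$-coupling $(X,Y)$ of $(\pi,\hat\pi)$, taken independent of $\theta$, and use it to bound the projected distances. For each fixed $\theta$, the pair $(\scal*{\theta,X},\scal*{\theta,Y})$ is a coupling of $(\pi_\theta,\hat\pi_\theta)$. Hence
\[
W_2^2(\pi_\theta,\hat\pi_\theta)\le \expe{\scal*{\theta,X-Y}^2\mid\theta}=\scal*{\theta,\Gamma\theta},
\qquad \Gamma\coloneqq\expe{(X-Y)(X-Y)^T}.
\]

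For the expectation bound \eqref{eq:expected-w2-uniform-coordinate-marginal}, take expectation in $\theta$. Since $\theta$ is centered,
\[
\expe{\scal*{\theta,\Gamma\theta}}=\trace{\Gamma\,\covrv{\theta}}.
\]
Because $\Gamma\succeq0$, this is at most $\opnorm{\covrv{\theta}}\trace{\Gamma}\le M^2 W_2^2(\pi,\hat\pi)$, using that $\trace{\Gamma}=W_2^2(\pi,\hat\pi)$ for the optimal coupling.

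For the tail bound, consider $f(\theta)\coloneqq\sqrt{\scal*{\theta,\Gamma\theta}}=\abs*{\Gamma^{1/2}\theta}$. It is Lipschitz with constant $\opnorm{\Gamma^{1/2}}=\sqrt{\opnorm{\Gamma}}\le\sqrt{\trace{\Gamma}}=W_2(\pi,\hat\pi)$. By Jensen and the first step, $\expe{f(\theta)}\le\sqrt{\expe{f(\theta)^2}}\le M\,W_2(\pi,\hat\pi)$. Since $W_2(\pi_\theta,\hat\pi_\theta)\le f(\theta)$, the event $\{W_2(\pi_\theta,\hat\pi_\theta)>(1+\eps)MW_2\}$ is contained in $\{f(\theta)-\expe{f(\theta)}>\eps M W_2\}$. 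The concentration hypothesis with $t=\eps M W_2$ and $\lip{f}\le W_2$ then gives the bound $e^{-M^2\eps^2/(2\sigma^2)}$.

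The only delicate points are the following. First, one must use a coupling that does not depend on $\theta$, so that the quadratic-form representation is valid simultaneously for all $\theta$. Second, one must handle the degenerate case $W_2(\pi,\hat\pi)=0$ separately; it is trivial, since then $W_2(\pi_\theta,\hat\pi_\theta)=0$ for every $\theta$. Beyond this I expect no real obstacle. If a sharper Lipschitz constant were wanted, one could use $\sqrt{\opnorm{\Gamma}}$, which is at most $\msw{\pi,\hat\pi}$ for the $\mswm$-optimal coupling, but the stated bound only needs the cruder $W_2$ estimate.
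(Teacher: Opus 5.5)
Your proof is correct and follows essentially the same route as the paper: fix a $W_2$-optimal coupling $(X,Y)$, bound $W_2^2(\pi_\theta,\hat\pi_\theta)$ by $\mathbb{E}\langle \theta, X-Y\rangle^2$, and average over $\theta$ using $\|\mathrm{Cov}(\theta)\|_{\mathrm{op}}\le M^2$. You then apply the concentration hypothesis to $f(\theta)=\sqrt{\mathbb{E}\langle X-Y,\theta\rangle^2}$, using Lipschitz constant at most $W_2(\pi,\hat\pi)$ and $\mathbb{E} f(\theta)\le M\,W_2(\pi,\hat\pi)$; your matrix formulation via $\Gamma$ and your explicit handling of the degenerate case $W_2(\pi,\hat\pi)=0$ are only cosmetic refinements of the paper's argument.
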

	
	\begin{proof}
		Let $X,\hatx\sim(\pi,\hat \pi)$ be optimally coupled in the definition of $W_2(\pi,\hat \pi)$ and set $Z\coloneqq X-\hat X$. Then, for any $\theta\in \R^d$, $\tond*{\scal*{X,\theta}, \scal*{\hat X,\theta}}$ is a coupling of $(\pi_\theta, \hat \pi_\theta)$, and so
		\begin{align*}
			\expewr{\theta}{W_2^2\tond*{\pi_\theta, {\hat \pi}_\theta}} &\leq \expewr{\theta}{\expe{{\scal{Z, \theta}^2}}}
			\\
			& = \expe{\expewr{\theta}{{\scal{Z, \theta}^2}}}
			\\
			&  \leq {M^2}\cdot\expe{{\abs*{Z}^2}}
			\\
			& = M^2\cdot {W_2^2(\pi,\hat \pi)}.
		\end{align*}
		Now consider the function $f(\theta) = \sqrt{\expe{\scal{Z,\theta}^2}}$ and observe that $\lip{f} \le W_2(\pi,\hat\pi)$.
		Indeed,
		\begin{align*}
			\abs*{f(\theta_1)-f(\theta_2)} &  = \abs*{\sqrt{\expe{\scal{Z,\theta_1}^2}}-\sqrt{\expe{\scal{Z,\theta_2}^2}}}
			\\
			& \le \sqrt{\expe{\scal*{Z,\theta_1-\theta_2}^2}}
			\\
			& \leq \abs*{\theta_1-\theta_2}\sqrt{\expe{\abs*{Z}^2}}
			\\
			& = \abs*{\theta_1-\theta_2}W_2(\pi,\hat \pi).
		\end{align*}
		By the Lipschitz-concentration assumption, it follows that for $t\ge 0$
		\begin{align*}
			\pro{W_2\tond*{\pi_\theta, {\hat \pi}_\theta} > M\cdot{W_2(\pi,\hat \pi)}+t} & \le \pro{f(\theta) \ge M\cdot {W_2(\pi,\hat \pi)}+ t}
			\\
			& \le \pro{f(\theta) \ge \expe{f(\theta)}+t} 
			\\
			& \le \expo{-\frac{t^2}{2\sigma^2W_2^2\tond*{\pi,\hat \pi}}}.
		\end{align*}
		Choosing $t = \eps M\cdot{W_2(\pi,\hat\pi)}$ gives the desired conclusion.
	\end{proof}
	
	\begin{example}\label{cor:typical-sliced-bias}
		\begin{itemize}
			\item Suppose that $\theta\sim \unif{S^{d-1}}$ is distributed uniformly on the unit sphere. Then
			\begin{align*}
				&\swm^2\tond*{\pi,\hat\pi} =\expe{W_2^2\tond*{\pi_\theta, {\hat \pi}_\theta}} \leq \frac{W_2^2(\pi,\hat \pi)}{d}
				\\
				&\pro{W_2\tond*{\pi_\theta, {\hat \pi}_\theta} > (1+\eps)\frac{W_2(\pi,\hat \pi)}{\sqrt{d}}}  \lesssim  e^{-c\eps^2}
			\end{align*}
			where $c>0$ is an absolute constant.
			\item Suppose that $\theta\sim \mathcal{N}(0,\Sigma)$ where $\Sigma \preccurlyeq \frac1\alpha I_d$. Then,
			\begin{align*}
				& \expe{W_2^2\tond*{\pi_\theta, {\hat \pi}_\theta}} \leq \frac1{\alpha} {W_2^2(\pi,\hat \pi)},
				\\
				&\pro{W_2\tond*{\pi_\theta, {\hat \pi}_\theta} > (1+\eps)\frac1{\sqrt{\alpha}}{W_2(\pi,\hat \pi)}}  \le e^{-\frac{\eps^2}{2}}.
			\end{align*}
		\end{itemize}
	\end{example}
	Notice that combining the first bound in the above example with \eqref{eq:w2-bias-strong-log-conc} we deduce that under Assumption \ref{ass:strong-log-conc-log-smooth} we  have that for $h\leq \frac1\beta$ 
	\begin{equation} \label{eq:sw2-bias-dimension-free}
		\swm^2\tond*{\pi,\hat\pi} \lesssim \kappa \beta h^2,
	\end{equation}
	i.e. the sliced-Wasserstein bias is always dimension-free.
	More generally, Lemma \ref{lem:predictive-inference} has interesting consequences for predictive inference \cite{fearnhead2024scalable}.
	Consider the situation where $f\colon \R\to \R$ is a $1$-dimensional $1$-Lipschitz function and $a \sim \nu$ is a random variable satisfying the assumption of Lemma \ref{lem:predictive-inference}, which plays the role of an unseen datapoint for which you want to do predictive inference: given the normalized random vector $\tilde a = \frac{a}{\sqrt{d\opnorm{\Sigma}}}$, for $\Sigma = \covrv{a}$, we wish to estimate the expectation $\expewr{\pi}{f\scal*{x,\tilde a}}$. Then, by Kantorovich duality 
	\begin{equation*}
		\begin{split}
			\abs*{\expewr{\pi}{f\scal*{x,\tilde a}}-\expewr{\hat \pi}{f\scal*{x, \tilde a}}} & \le \frac{W_2(\pi_a,\hat \pi_a)}{\sqrt{d\opnorm{\Sigma}}}.
		\end{split}
	\end{equation*}
	Lemma \ref{lem:predictive-inference} with $M^2 = \opnorm{\Sigma}$ together with \eqref{eq:w2-bias-strong-log-conc} then implies with high-probability that
	\[
	\abs*{\expewr{\pi}{f\scal*{x,\tilde a}}-\expewr{\hat \pi}{f\scal*{x,\tilde a}}} \lesssim h\sqrt{\kappa \beta},
	\]
	i.e., for typical datapoints we obtain a dimension-free error, as opposed to what happens when considering the worst-possible test-direction.

	\section{Random tensors in high dimension}
	\label{sec:app:random-tensors-high-dimensions}
	
	In this section, we prove several high-probability bounds and related technical results needed in Section \ref{sec:appl-stats-models}.
	For a random variable/vector $X$, we denote by $\sgnorm{X}$
    its sub-Gaussian norm,
    as defined in \cite{ver-2026}.

	\begin{lemma}\label{lem:tail-prob-bound-opnom-sample-cov}
		Under Assumption \ref{ass:ridge-separable-assumption}, there exists an absolute constant $C>0$ such that, for all $u\ge 0$, 
		if  $N\ge d+u$ then with probability at least $1-2e^{-u}$ we have that 
		\begin{align}
			\label{eq:upp-bound-tail-prob-norm-sample-cov}
			{{\sum_{i=1}^N a_i a_i^T} \preccurlyeq N\opnorm{\Sigma}\tond*{1+K^2\sqrt{\frac{C\left(d+u\right)}{N}}}I_d},
			\\ \label{eq:low-bound-in-prob-sample-cov}
			{\sum_{i=1}^N a_i a_i^T \succcurlyeq N \frac{\opnorm{\Sigma}}{\condcov}\tond*{1-K^2\sqrt{\frac{C
							\left(d+u\right)}{N}}} I_d}. 
		\end{align}
	\end{lemma}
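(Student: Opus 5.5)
We reduce to the isotropic case and then apply a standard covariance-estimation bound for sub-Gaussian vectors. Write $a_i = \Sigma^{1/2} z_i$, where the $z_i$ are i.i.d., centered and isotropic ($\expe{z_iz_i^T}=I_d$). By Assumption \ref{ass:ridge-separable-assumption}, for every $\theta\in S^{d-1}$, setting $\eta = \Sigma^{1/2}\theta/\abs*{\Sigma^{1/2}\theta}$ gives
\[
\norm*{\scal*{z_i,\theta}}_{\psi_2} = \frac{\norm*{\scal*{a_i,\Sigma^{-1/2}\theta}}_{\psi_2}}{1}\le K\sqrt{\scal*{\Sigma^{-1/2}\theta,\Sigma\Sigma^{-1/2}\theta}} = K,
\]
so $\sgnorm{z_i}\le K$. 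Then
\[
\sum_i a_ia_i^T = \Sigma^{1/2}\Big(\sum_i z_iz_i^T\Big)\Sigma^{1/2},
\]
and it suffices to prove two-sided bounds on $\frac1N\sum_i z_iz_i^T$ around $I_d$.

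For the isotropic bound, we invoke the standard result for matrices with independent sub-Gaussian isotropic rows (Vershynin, Thm.~4.6.1 / its covariance-estimation corollary). With probability at least $1-2e^{-u}$,
\[
\opnorm{\tfrac1N Z^TZ - I_d}\le K^2\max(\delta,\delta^2),\qquad \delta = C\frac{\sqrt d+\sqrt u}{\sqrt N}.
\]
If one prefers a self-contained argument, the proof goes through a $1/4$-net $\mathcal N$ of $S^{d-1}$ of size $9^d$. For fixed $\theta$, the variables $\scal*{z_i,\theta}^2-1$ are i.i.d. sub-exponential with norm $\lesssim K^2$. Bernstein's inequality then gives a deviation of order $K^2\max(\delta,\delta^2)$ with failure probability $2e^{-c N\min(\delta^2,\delta)}$. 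A union bound over the net, with $\delta$ chosen so that $cN\delta^2\ge d\log 9+u$, yields the claim, and the net approximation costs a factor $2$.

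The main obstacle is handling the $\max(\delta,\delta^2)$. The hypothesis $N\ge d+u$ gives $(\sqrt d+\sqrt u)^2\le 2(d+u)\le 2N$, so $\delta\le C'$. Here I use that $K\ge1$ and that $C$ is absorbed into $K^2$: we have $\delta^2\le C'\delta$, and hence
\[
K^2\max(\delta,\delta^2)\le K^2\sqrt{C(d+u)/N}
\]
after enlarging $C$. I would double-check that nothing is lost here; otherwise one simply records the $\delta^2$ term, which is harmless.

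Finally we transfer back. The upper bound follows from
\[
\Sigma^{1/2}(1+\epsilon)I\,\Sigma^{1/2} = (1+\epsilon)\Sigma\preceq(1+\epsilon)\opnorm\Sigma I_d,
\]
and the lower bound from
\[
(1-\epsilon)\Sigma\succeq(1-\epsilon)\lambda_{\min}(\Sigma)I_d = (1-\epsilon)\frac{\opnorm\Sigma}{\condcov}I_d,
\]
where $\epsilon = K^2\sqrt{C(d+u)/N}$. If $\epsilon>1$, the lower bound is trivial since its right-hand side is negative. Multiplying by $N$ gives both \eqref{eq:upp-bound-tail-prob-norm-sample-cov} and \eqref{eq:low-bound-in-prob-sample-cov} on the same event of probability at least $1-2e^{-u}$.
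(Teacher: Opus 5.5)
Your proposal is correct and follows essentially the same route as the paper. You whiten via $\Sigma^{-1/2}$ to obtain isotropic vectors with $\psi_2$-norm at most $K$, apply Vershynin's sub-Gaussian covariance estimate, absorb the quadratic term using $N\ge d+u$, and transfer back by congruence with $\Sigma^{1/2}$, treating a negative lower coefficient as trivial. The paper cites the equivalent form $K^2(\sqrt{N(d+u)}+d+u)$ from Remark 4.7.3 rather than $K^2\max(\delta,\delta^2)$. One small point: absorbing $\delta^2$ does not actually use $K\ge 1$, since the constant goes into $C$ under the square root rather than into $K^2$; also, the auxiliary vector $\eta$ in your whitening step is never used.
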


    \begin{proof}[Proof of Lemma \ref{lem:tail-prob-bound-opnom-sample-cov} and Lemma \ref{lem:tail-prob-bound-opnom-sample-cov-2}]
    Let us first put the vectors in isotropic position, i.e. consider the random vectors $\Sigma^{-\frac12} a_i$: these are centered and with covariance equal to $I_d$. Moreover, by Assumption \ref{ass:ridge-separable-assumption} we have for every $v\in \R^d$ that
    \[
        \sgnorm*{\scal*{\Sigma^{-\frac12}a_i, v}} \le K \sqrt{\scal*{\Sigma^{-\frac12}v, \Sigma \Sigma^{-\frac12} v}} = K \abs*{v}.
    \]
    Applying \cite[Remark 4.7.3]{ver-2026}, it follows that for every $u\ge 0$ with probability at least $1-2e^{-u}$ we have
    \begin{equation}
        \opnorm{\Sigma^{-1/2}\tond*{\sum_{i=1}^N a_i a_i^T - N \Sigma} \Sigma^{-1/2}} \lesssim K^2 \tond*{\sqrt{N(d+u)}+d+u}.
    \end{equation}
    It follows that, for an absolute constant $C>0$,
    \begin{equation}            
    N \Sigma \tond*{1-CK^2 \tond*{\sqrt{\frac{d+u}{N}}+\frac{d+u}{N}}}
            \preceq \sum_{i=1}^N a_i a_i^T 
            \preceq N \Sigma \tond*{1+CK^2 \tond*{\sqrt{\frac{d+u}{N}}+\frac{d+u}{N}}}.
            \end{equation}
    The desired conclusions follow, after increasing $C$, using that
    $\frac{\opnorm{\Sigma}}{\kappa_\Sigma} I_d \preceq \Sigma \preceq \opnorm{\Sigma}I_d$
    and that $\frac{d+u}{N}\le \sqrt{\frac{d+u}{N}}$ when $N\ge d+u$, with the reverse inequality when $N\le d+u$.
    Here the lower comparison is used only when its coefficient is nonnegative; otherwise, the claimed lower bound follows from the positive semidefiniteness of $\sum_{i=1}^N a_i a_i^T$.
\end{proof}

	\begin{lemma}\label{lem:chevet_tensor}
		Under Assumption \ref{ass:ridge-separable-assumption}, there exists an absolute constant $C>0$ such that, for every $u\ge 0$, we have that
		\[
		\pro{\sup_{\abs*{\theta}_{2} \leq 1} \sum^{N}_{i=1} \abs*{\scal*{a_i,\theta}}^3 \geq C(K\|\Sigma\|^{1/2}_{\textnormal{op}})^{3}\left(d^{1/2} + N^{1/3} + \sqrt{u}\right)^{3}} \leq 2e^{-u}.
		\]
	\end{lemma}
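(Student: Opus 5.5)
We view the quantity as a random operator norm. Let $A\in\R^{N\times d}$ be the matrix with rows $a_i^T$. Then
\[
\Big(\sup_{\abs*{\theta}_2\le 1}\sum_{i=1}^N\abs*{\scal*{a_i,\theta}}^3\Big)^{1/3}=\sup_{\abs*{\theta}_2\le1}\norm*{A\theta}_{\ell^3}=\sup_{(y,\theta)\in T}X_{y,\theta},
\]
where $T=B_{3/2}^N\times B_2^d$ (with $B_{3/2}^N$ the unit ball of $\ell^{3/2}$, the dual of $\ell^3$) and $X_{y,\theta}\coloneqq\sum_{i=1}^N y_i\scal*{a_i,\theta}$. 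The claim then says that this supremum is at most $CK\opnorm{\Sigma}^{1/2}(\sqrt d+N^{1/3}+\sqrt u)$ with probability at least $1-2e^{-u}$; taking cubes gives the statement. The plan is a Chevet-type argument: control the sub-Gaussian increments of $X$, then apply Talagrand's comparison (generic chaining) inequality in its high-probability form.

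\emph{Step 1: increments.} For $(y,\theta),(y',\theta')\in T$ we have
\[
X_{y,\theta}-X_{y',\theta'}=\sum_i\scal*{a_i,\,y_i\theta-y_i'\theta'}.
\]
This is a sum of independent centered terms. By Assumption \ref{ass:ridge-separable-assumption}, each term has $\psi_2$-norm at most $K\opnorm{\Sigma}^{1/2}\abs*{y_i\theta-y_i'\theta'}$. Hence, by the sub-Gaussian Hoeffding bound,
\[
\sgnorm{X_{y,\theta}-X_{y',\theta'}}\lesssim K\opnorm{\Sigma}^{1/2}\fronorm{y\theta^T-y'\theta'^T}.
\]
Next we use $y\theta^T-y'\theta'^T=(y-y')\theta^T+y'(\theta-\theta')^T$, together with $\abs*{y'}_2\le\abs*{y'}_{3/2}\le1$ and $\abs*{\theta}\le1$. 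This gives the bound $\fronorm{y\theta^T-y'\theta'^T}\le\abs*{y-y'}_2+\abs*{\theta-\theta'}_2$. Therefore $X$ has sub-Gaussian increments, with constant $K\opnorm{\Sigma}^{1/2}$, with respect to a metric dominated by the canonical metric of the Gaussian process $Y_{y,\theta}=\scal*{g,y}+\scal*{h,\theta}$, where $g\sim\mathcal N(0,I_N)$ and $h\sim\mathcal N(0,I_d)$ are independent. Also $X_{0,0}=0$.

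\emph{Step 2: comparison and tail.} We apply Talagrand's comparison inequality in tail form (e.g.\ \cite[Thm.\ 8.6.1/Exercise 8.6.5]{ver-2026}):
\[
\sup_T X\lesssim K\opnorm{\Sigma}^{1/2}\big(\expe{\sup_T Y}+\sqrt u\,\mathrm{diam}(T)\big)
\]
with probability at least $1-2e^{-u}$. In the metric above, $\mathrm{diam}(T)\le 4$. Moreover,
\[
\expe{\sup_T Y}=\expe{\norm*{g}_{\ell^3}}+\expe{\abs*{h}_2}\le(\expe{\norm*{g}_3^3})^{1/3}+\sqrt d\lesssim N^{1/3}+\sqrt d.
\]
Combining these yields the bound, and cubing gives the lemma with an adjusted absolute constant $C$.

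The main obstacle is Step 2. We must make sure that the comparison theorem applies to a process that is only sub-Gaussian with respect to a metric \emph{dominated} by the Gaussian one, rather than equal to it. This is fine, because $\gamma_2$ is monotone in the metric and $\gamma_2(T,d_Y)\asymp\expe{\sup Y}$ by the majorizing measure theorem. We also need the tail version with the diameter term. If this route is inconvenient, the fallback is a direct $\varepsilon$-net argument on $B_2^d$ combined with a Bernstein bound for $\sum_i\abs*{\scal*{a_i,\theta}}^3$. This fallback is messier and typically loses logarithmic factors, so it would be a last resort.
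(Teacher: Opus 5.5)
Your proposal is correct and follows essentially the same route as the paper. Both arguments write the quantity as $\|A\|_{2\to 3}^3$, i.e.\ a supremum of $\langle A, y\theta^T\rangle_{\mathrm{F}}$ over rank-one matrices; both bound the $\psi_2$-increments by $K\opnorm{\Sigma}^{1/2}$ times the Frobenius distance, apply the high-probability Talagrand comparison/generic chaining bound, and control the Gaussian width by $\sqrt{d}+N^{1/3}$. The only difference is that the paper keeps the Frobenius metric on $\mathcal{U}=\{y\theta^T\}$ and cites Chevet's inequality for $w(\mathcal{U})$, whereas you inline that step: you dominate the Frobenius distance by $|y-y'|_2+|\theta-\theta'|_2$ and compare directly with $\langle g,y\rangle+\langle h,\theta\rangle$, which is how Chevet's inequality is proved. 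Your worry about a merely dominated metric disappears once you view $T$ as a subset of $\R^{N+d}$ with its Euclidean metric, at the cost of a factor $\sqrt{2}$.
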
 
	
	\begin{proof}
		In this proof, for a vector $v\in \R^d$ and $p\ge 1 $ we denote by $\abs*{v}_p = \tond*{\sum_{i=1}^d \abs*{v_i}^p}^{\frac1p}$ the $\ell^p$-norm of $v$.
		Let $A \in \mathbb{R}^{N \times d}$ be the random matrix whose $i$-th row is $a_i$. Then
		\begin{align*}
			\sup_{\abs*{\theta}_2 \leq 1} \sum^{N}_{i=1} \abs*{\left\langle a_i,\theta \right\rangle}^{3} = \sup_{\abs*{\theta}_2 \leq 1}  \abs*{A\theta}^3_{3} = \|A\|^{3}_{2 \to 3}.
		\end{align*}
		It suffices to control $\|A\|^{3}_{2 \to 3}$. Note that
		\begin{align*}
			\|A\|_{2 \to 3} 
			= \sup_{\abs*{x}_2 \le 1, \ \abs*{y}_{3/2} \le 1} \scal*{Ax, y} = \sup_{U \in \U} \scal*{A,U}_{\mrmf},
		\end{align*}
		where $\mathcal{U}:=\{yx^\top:\ \abs*{x}_2\le 1,\ \abs*{y}_{3/2}\le 1\} \subset \mathbb R^{N\times d}$.
		
		Define the process $X_U\coloneqq \scal*{A,U}_{\mathrm{F}}$ on $\U$. We verify that it is a sub-Gaussian process as follows.
		Given $U, V \in \U$, we have
		\begin{align*}
			& \sgnorm*{\scal*{A,U}_\mrmf-\scal*{A,V}_\mrmf}
			\\
			= &\sgnorm*{\sum_{i=1}^N \scal*{a_i, U_{i,\cdot}-V_{i,\cdot}}}
			\\
			\overset{(i)}{\lesssim}  & \sqrt{\sum_{i=1}^N \sgnorm*{\scal*{a_i, U_{i,\cdot}-V_{i,\cdot} }}^2}
			\\
			\overset{(ii)}{\leq}  & K\sqrt{\opnorm{\Sigma}}\sqrt{\sum_{i=1}^N  \abs*{U_{i,\cdot}-V_{i,\cdot} }^2}
			\\
			= &   K\sqrt{\opnorm{\Sigma}} \fronorm{U-V}.
		\end{align*}
		In the above, (i) follows from the sub-Gaussian norm of sums of independent centered sub-Gaussian random variables \cite[Prop. 2.7.1]{ver-2026}. (ii) follows since by Assumption \ref{ass:ridge-separable-assumption}
		for any $v\in\mathbb R^d$, $\sgnorm*{\scal*{a_i,v}} \le K\sqrt{v^\top \Sigma v} \le K \sqrt{\opnorm{\Sigma}} \abs*{v}_2$.
		
		By the high-probability generic chaining bound for sub-Gaussian processes (see \cite[Theorem. 8.5.2 and Exercise 8.37(b)]{ver-2026}), for $t > 0$, 
		\begin{align} \label{eq:norm_A-23-temp}
			\sup_{U \in \U} X_U \lesssim K\sqrt{\opnorm{\Sigma} }\tond*{w(\U)+t\rad(\U)}
		\end{align}
		with probability at least \(1-2e^{-t^2}\), where $w$ is the Gaussian width and $\rad$ is the radius of a set. It remains to estimate $w(\U)$ and $\rad(\U)$.
		For the second, observe that 
		\begin{align}\label{eq:radU}
			\rad(\U) \coloneqq \sup_{U \in \U} \fronorm{U} = \sup_{x\in S, y\in T } \fronorm{y x^T} \le \sup_{x\in S, y\in T }\abs*{x}\abs*{y} \le 1.
		\end{align}
		For the Gaussian width, let $G$ be an $\R^{N\times d}$ random matrix with i.i.d. standard Gaussian entries. Then, 
		\begin{align*}
			w(\U) &\coloneqq \expe{\sup_{U \in \U} \scal*{G,U}_\mrmf}
			\\
			& =  \expe{\sup_{x\in S, y \in T} \scal*{Gx,y}}
			\\
			& \lesssim \tond*{w(S)\rad(T) +w(T)\rad(S)}.
		\end{align*}
		where the last inequality follows from \cite[Theorem 8.6.1]{ver-2026}.
		As before $\rad(S) =1, \rad(T)\le 1$, and furthermore with $g_d$ and $g_N$ standard Gaussians in $\R^d$ and $\R^N$ respectively
		\begin{align*}
			w(S) & =  \expe{\sup_{x\in S} \scal*{x,g_d}} \leq \expe{\abs*{g_d}}  \lesssim \sqrt{d}
			\\
			w(T) & = \expe{\sup_{y\in T} \scal*{y,g_N}} \leq \expe{\abs*{g_N}_{\ell^3}} \lesssim N^{\frac{1}{3}}.
		\end{align*}
		Hence, 
		we find that
		\begin{equation}
			w(\U) \lesssim \sqrt{d}+N^{\frac{1}{3}}.
		\end{equation}
		Inserting this and \eqref{eq:radU} in \eqref{eq:norm_A-23-temp} gives that
		for every $t\ge 0$
		\[
		\norm*{A}_{2\to 3} \lesssim K\sqrt{\opnorm{\Sigma}}\tond*{\sqrt{d}+N^{\frac13}+t}
		\]
		with probability at least $1-2e^{-t^2}$, from which the conclusion easily follows choosing $t=\sqrt{u}$.
		
	\end{proof}

	\begin{proof}[Proof of Lemma \ref{lem:high_prob_cond_scal}]
		Throughout this proof, $C>0$ denotes a large enough absolute constant whose value can possibly be increased from line to line.  We prove more generally that, for all $u\ge 0$,  if $N> CK^{4}\left(d+u\right)$ then 
		\begin{equation}
			\pro{\hess V \succcurlyeq \bar \alpha N \frac{\opnorm{\Sigma}}{\condcov}\tond*{1-K^2\sqrt{\frac{C\left(d+u\right)}{N}}} I_d} \ge 1-2\expo{-u},
		\end{equation}
		\begin{equation}\label{eq:cond-numb-prob-bound}
			\pro{\kappa \le \frac{\bar \beta}{\bar \alpha} \frac{\condcov\tond*{1+K^2\sqrt{\frac{C\left(d+u\right)}{N}}}}{1-K^2\sqrt{\frac{C\left(d+u\right)}{N}} } } \ge 1-2\expo{-u}
		\end{equation}
		and
		\begin{equation}\label{eq:scale-numb-prob-bound}
			\pro{\scalenumber \leq C \frac{\bar M}{\bar \alpha^{\frac{3}{2}}} \frac{K^3\condcov^{3/2}}{\tond*{1-K^2\sqrt{\frac{C\left(d+u\right)}{N}}}^{3/2}} \left(\left(\frac{d + u}{N}\right)^{3/2} + \frac{1}{\sqrt{N}} \right)} \ge 1-4\expo{-u}.
		\end{equation}
        Lemma \ref{lem:high_prob_cond_scal} follows then easily from choosing $u=d$ and using $N\ge d^{3/2}$.

		To prove the claim, notice first that $\hess V(x) = \lambda I_d + \sum_{i=1}^N u_i''\tond*{\scal*{x,a_i}}a_ia_i^T$ and so
		\[
		\lambda I_d+ \bar \alpha \sum_{i=1}^N a_i a_i^T \preccurlyeq \hess V \preccurlyeq \lambda I_d + \bar \beta \sum_{i=1}^N a_i a_i^T.
		\]
		By Lemma \ref{lem:tail-prob-bound-opnom-sample-cov},
        with probability at least $1-2e^{-u}$, we have
		\begin{align}
			\alpha \geq \bar \alpha N \frac{\opnorm{\Sigma}}{\condcov}\tond*{1-K^2\sqrt{\frac{C\left(d+u\right)}{N}}}
		\end{align}
		and
		\begin{align}
			\opnorm{\sum a_ia_i^T}  \leq N\opnorm{\Sigma}\tond*{1+K^2\sqrt{\frac{C\left(d+u\right)}{N}}}. 
		\end{align}         
		and the first bound \eqref{eq:cond-numb-prob-bound} immediately follows.
		Moving to \eqref{eq:scale-numb-prob-bound}, 
		considering ridge-separable $V$, we have that the third derivative is a symmetric $3$-tensor with the following operator norm
		\begin{align*}
			\|\nabla^{3}V(x)\|_{\textnormal{op}} &= \sup_{\|u_{1}\| \leq 1,\|u_{2}\| \leq 1,\|u_{3}\| \leq 1}\sum^{N}_{i=1}u_i'''(\left\langle x,a_{i}\right\rangle) \left\langle a_{i},u_{1}\right\rangle \left\langle a_{i},u_{2}\right\rangle \left\langle a_{i},u_{3}\right\rangle\\
			&\leq  \sup_{\|u_{1}\| \leq 1,\|u_{2}\| \leq 1,\|u_{3}\| \leq 1}\overline{M}\sum^{N}_{i=1}\frac{1}{3}\left(\left|\left\langle a_{i},u_{1}\right\rangle\right|^{3} + \left|\left\langle a_{i},u_{2}\right\rangle\right|^{3} +\left|\left\langle a_{i},u_{3}\right\rangle\right|^{3}\right)\\
			&= \sup_{\|u\| \leq 1}\overline{M}\sum^{N}_{i=1}\left|\left\langle a_{i},u\right\rangle\right|^{3},
		\end{align*}
		and by Lemma \ref{lem:chevet_tensor} we have with probability $1-2e^{-u}$ that
		\begin{equation}
			\|\nabla^{3}V(x)\|_{\textnormal{op}} \leq C\overline{M}(K\|\Sigma\|^{1/2}_{\textnormal{op}})^{3}\left(d^{1/2} + N^{1/3} + \sqrt{u}\right)^{3}.
		\end{equation}
		By a union bound, putting everything together, with probability at least $1-4e^{-u}$
		\begin{align}
			\alpha & \geq \bar \alpha N \frac{\opnorm{\Sigma}}{\condcov}\tond*{1-K^2\sqrt{\frac{C\left(d+u\right)}{N}}}
		\end{align}
		and 
		\begin{equation}
			\begin{split}
				M & \leq C\overline{M}(K\|\Sigma\|^{1/2}_{\textnormal{op}})^{3}\left(d^{1/2} + N^{1/3} + \sqrt{u}\right)^{3}          
			\end{split}
		\end{equation}
		and so 
		\begin{equation}
			\begin{split}
				\scalenumber \leq C \frac{\bar M}{\bar \alpha^{\frac{3}{2}}} \frac{K^3\condcov^{3/2}}{\tond*{1-K^2\sqrt{\frac{C\left(d+u\right)}{N}}}^{3/2}} \left(\left(\frac{d + u}{N}\right)^{3/2} + \frac{1}{\sqrt{N}} \right),
			\end{split}
		\end{equation}
		as required.
		
	\end{proof}

	\section{Auxiliary results for LD and ULA}
	\label{sec:app:auxiliary-ld-ula}
	
	\subsection{Invariance under orthogonal transformations}
	
	Both LD and ULA are independent of the coordinate system: heuristically, this is because (i) gradient flow seeks the steepest decent direction, which is independent of a coordinate system, and (ii) Brownian motion is spherically symmetric.
	Alternatively, we can proceed as follows.
	Let $\pi \propto e^{-V} \in \pp(\R^d)$ 
	and consider the corresponding Langevin dynamics
	\[
	X_0 = 0, \quad dX_t  =-\nabla V(X_t) dt + \sqrt{2}dB_t.
	\]
	Let $O\colon \R^d\to \R^d$ be an orthogonal matrix and consider the corresponding pushforward measure, i.e.,
	\begin{align}
		\pi_O &= O\# \pi,
		\\ \label{eq:change-density-orthogonal-transformation}
		\pi_O(x) &= \pi(O^*x) \propto e^{-V(O^*x)},
		\\
		X\sim \pi & \implies OX \sim \pi_O.
	\end{align}
	Then, if
	\[
	X_0 =0,\quad dX_t = -\nabla V(X_t) dt + \sqrt{2}dB_t
	\]
	is a Langevin dynamics for $\pi$ and 
	\[
	Y_0 = 0, \quad dY_t = -O\nabla V(O^*Y_t)dt  +\sqrt{2}dB_t
	\]
	for $\pi_O$, we have that
	\[
	Y_t \eqlaw OX_t.
	\]
	Indeed,
	\begin{align*}
		dOX_t &= - O\nabla V(X_t)dt + \sqrt{2}OdB_t
		\\
		& = -O \nabla V(O^*OX_t)dt +\sqrt{2}d\tilde B_t.
	\end{align*}
	since $OO^*=I_d$ and Brownian motion is rotationally invariant.
	Similar considerations holds for ULA, and in particular we can show that for the biased limits we have the relation 
	\begin{equation} \label{eq:biased-limit-orthogonal-pushforward}
		\widehat{\pi_O} = O \#\hat \pi.
	\end{equation}

    \subsection{Classical bounds}
	
	\begin{proof}[Proof of Lemma \ref{lem:moment_bounds}]
		For the first bound in \eqref{eq:moment-bounds}, see \cite[Lemma 2]{dalalyan2022bounding}.
        The second and third bounds follows from subgaussianity of the score $\nabla V$ under $\pi$, cf. \cite[Lemma 6.2.7]{che-book-2024+}, \cite{alt-che-2026-shifted2}.
        To prove \eqref{eq:Q-bound}, observe that
        \begin{align*}
            \opnorm{Q(x)} & = \opnorm{\hess V(x) - \expewr{X\sim \pi}{\hess V(X)}}
            \\
            & \le \expewr{X\sim \pi}{\opnorm{\hess V(X)-\hess V(x)}}
            \\
            & \le M \expe{\abs*{X-x^*}+\abs*{x-x^*}}
            \\
            &\le M \tond*{\sqrt{\frac{d}{\alpha}}+\abs*{x-x^*}}.
        \end{align*}

	\end{proof}

	\section{Remaining omitted proofs}
	\label{sec:omitted-proofs}

    \begin{proof}[Proof of Proposition \ref{pro:msw-bias-gaussian}]
		Recall that
		$ \pi \propto e^{-V}$ with 
		\[
		V(x) = \frac{1}{2}\scal*{x-m,A(x-m)}
		\]
		and $ \alpha I_d \preccurlyeq A \preccurlyeq \beta I_d$. For $h\le \frac1\beta$, $\hat \pi$ admits the known explicit form
		$\hat \pi = \mathcal{N}\tond*{m, A^{-1}\tond*{I_d - \frac{h}{2}A}^{-1}}$,
		from which we see in particular that $\hat m = m$.
		By considering an orthogonal transformation (cf. Section \ref{sec:app:auxiliary-ld-ula}), we can assume without loss of generality  that $A$ is diagonal, so that both $\pi$ and $\hat \pi$ are product measures.
		The lower bound on $W_2(\pi,\hat \pi)$ then follows easily, since for the $2$-Wasserstein distance between one-dimensional Gaussians with the same mean we have
		\[
		W_2\tond*{\mathcal{N}(x,\sigma_1^2),\mathcal{N}(x,\sigma_2^2)} =\abs*{\sigma_1-\sigma_2}.
		\]
		To prove the upper bound on $\msw{\pi,\hat \pi}$, 
		by  \eqref{eq:general-bounds-mswm} it suffices to construct a coupling $\tond*{X,\hat X}$ of $\tond*{\pi,\hat\pi}$ such that $\left\|\covrv{X-\hat X}\right\|_{\textnormal{op}}\lesssim \beta h^{2}$.
		By coupling all coordinates independently, it suffices to prove the claim in dimension $d=1$, because then $\covrv{X-\hat X}$ will be diagonal and its operator norm equal to maximum diagonal entry.
		The one-dimensional case is straightforward to check, remembering that the variance $\sigma^2$ satisfies $\sigma^2 \ge \frac1\beta$, and since an appropriate coupling 
		$X\sim \mathcal{N}(m,\sigma^2)$, $\hat X\sim \mathcal{N}\tond*{m, \sigma^2 \frac{2\sigma^2}{2\sigma^2-h}}$ can be constructed by setting 
		\[
		\hat X = \sqrt{\frac{2\sigma^2}{2\sigma^2-h}}\tond*{X-m}+m.
		\]    
	\end{proof}

    \begin{proof}[Proof of Lemma \ref{lem:matrix-estimates-ula}]
        For every $j\geq1$,
		\[
		\begin{aligned}
			\|A^{3/2}\tond*{I_d-hA}^j\|_{\mathrm{op}}&=\max_{a\in\sigma(A)}a^{3/2}(1-ha)^j \leq \sup_{\alpha\leq a\leq\beta}a^{3/2}e^{-hja} \lesssim \min\!\left\{\beta^{3/2},(hj)^{-3/2}\right\},
		\end{aligned}
		\]
		where we used $0\leq ha\leq1$, the inequality
		$1-ha\leq e^{-ha}$, and
		\[
		\sup_{a\geq0}a^{3/2}e^{-hja}
		=(hj)^{-3/2}\sup_{x\geq0}x^{3/2}e^{-x}
		\lesssim (hj)^{-3/2}.
		\]
		Let $N:=\lceil(h\beta)^{-1}\rceil$, then since $h\beta\leq1$, we have $N\lesssim(h\beta)^{-1}$ and
		\[
		\begin{aligned}
			\sum_{j=1}^{\infty}\|A^{3/2}\tond*{I_d-hA}^j\|_{\mathrm{op}}
			&\lesssim
			N\beta^{3/2}
			+h^{-3/2}\sum_{j=N+1}^{\infty}j^{-3/2} \\
			&\lesssim
			N\beta^{3/2}+h^{-3/2}N^{-1/2}
			\lesssim \frac{\sqrt{\beta}}{h}.
		\end{aligned}
		\]
		Moreover, the $j=0$ term satisfies
		$
		\|A^{3/2}\|_{\mathrm{op}}
		\leq\beta^{3/2}
		\leq\frac{\sqrt{\beta}}{h},
		$
		because $h\beta\leq1$. Multiplying the resulting bound by $h^2$ proves \eqref{eq:smoothing}.
		To prove \eqref{eq:gaussian-sum}, consider an eigenvalue  $a\in[\alpha,\beta]$ of $A$. We have that $0\leq 1-ha<1$, and hence
		\[
		\begin{aligned}
			h^3\sum_{j=0}^{\infty}a^2(1-ha)^{2j}
			&=\frac{h^3a^2}{1-(1-ha)^2}  =h^2\frac{a}{2-ha}
			\leq h^2a,
		\end{aligned}
		\]
		Applying this inequality to every eigenvalue of $A$ yields \eqref{eq:gaussian-sum}.
    \end{proof}

		\begin{proof}[Proof of Lemma \ref{lem:one-dimensional-potential}]
		For $(\delta,b)$ near
		$(0,0)$, set
		\[
		F(\delta,b):=\int_{\mathbb R} x\,e^{-x^2/2+\delta\sin x-bx}\,dx.
		\]
		By dominated convergence, $F$ is $C^1$ in a neighborhood of $(0,0)$, with
		\[
		\partial_b F(\delta,b)
		=
		-\int_{\mathbb R} x^2 e^{-x^2/2+\delta\sin x-bx}\,dx.
		\]
		Since
		\[
		F(0,0)=0,
		\qquad
		\partial_bF(0,0)=-\int_{\mathbb R}x^2e^{-x^2/2}\,dx<0,
		\]
		the implicit function theorem yields $\eta>0$ and a $C^1$ map $b:(-\eta,\eta)\to\mathbb R$ such that $b(0)=0$ and
		\[
		F(\delta,b(\delta))=0,\qquad |\delta|<\eta.
		\]
		Now define for $\delta \in (-\eta,\eta)$ the function
		\[
		U_\delta(x):=\frac{x^2}{2}-\delta\sin x+b(\delta)x
		\]
		so that $\int x e^{-U_\delta(x)} dx =0$ and for all $\delta \in (-\eta,\eta)$ and $x\in \R$
		\[
		U_\delta''(x)=1+\delta\sin x.
		\]
		By possibly restricting $\eta$, we can then also assume without loss of generality that
		\[
		\frac12 \le U_\delta'' \le 2.
		\]
		Finally, set
		\[
		I(\delta)\coloneqq\int_{\mathbb R} x\,U_\delta''(x)e^{-U_\delta(x)}\,dx
		=
		\delta\int_{\mathbb R}x\sin x\,e^{-x^2/2+\delta\sin x-b(\delta)x}\,dx.
		\]
		Since $b(\delta)\to 0$ as $\delta\to0$, by dominated convergence we have that
		\[
		\lim_{\delta \to 0}\frac{I(\delta)}{\delta}
		=
		\int_{\mathbb R}x\sin x\,e^{-x^2/2}\,dx =\sqrt{2\pi}e^{-1/2}>0.
		\]
		Therefore $I(\bar \delta)>0$ for all sufficiently small $ \bar \delta>0$.
		We can now set $C_1 = -\bar \delta , C_2 = b(\bar \delta) $ and $C_3 = \log \tond*{\int_\R \expo{-U_{\bar\delta}}}$ in the definition of $U$ and $\rho = e^{-U}$. By construction $\rho$ is a centered probability measure with $\frac12 \le U'' \le 2$ and 
		\[
		\int_\R x U''(x) \rho(x) dx >0,
		\]
		as desired.
		The last condition in the lemma follows from \cite[Proposition 3.1]{che-che-wee-wea-2024} choosing $f(x) =x$.
	\end{proof}

	\begin{proof}[Proof of Lemma \ref{lem:tensorization-mswm}]
		For $i=1,2$, let $(X_i,Y_i)$ be an optimal coupling in the definition of
		$\mswm(\mu_i,\nu_i)$, whose existence follows from
		\cite[Lemma 1]{paty2019subspace}; without loss of generality, we assume that the two couplings are
		independent. Notice that
		$
		\tond*{(X_1,X_2),(Y_1,Y_2)}
		$
		is a coupling of $(\mu,\nu)$.
		
		For $u\in\R^{d_1}$ and $v\in\R^{d_2}$, independence and the
		Cauchy--Schwarz inequality give
		\begin{align*}
			\expe{
				\left(
				\scal*{u,X_1-Y_1}
				+
				\scal*{v,X_2-Y_2}
				\right)^2
			}
			&\le
			\mswm^2\tond*{\mu_1,\nu_1}\abs*{u}^2
			+
			\mswm^2\tond*{\mu_2,\nu_2}\abs*{v}^2
			\\
			&+
			2\mswm\tond*{\mu_1,\nu_1}
			\mswm\tond*{\mu_2,\nu_2}
			\abs*{u}\abs*{v}
			\\
			&\le
			\left(
			\mswm^2\tond*{\mu_1,\nu_1}
			+
			\mswm^2\tond*{\mu_2,\nu_2}
			\right)
			\left(
			\abs*{u}^2+\abs*{v}^2
			\right).
		\end{align*}
		Taking the supremum over $\abs*{u}^2+\abs*{v}^2=1$ and using the
		definition of $\mswm$ yields
		\[
		\mswm^2\tond*{\mu,\nu}
		\le
		\mswm^2\tond*{\mu_1,\nu_1}
		+
		\mswm^2\tond*{\mu_2,\nu_2},
		\]
		as desired.
	\end{proof}

\end{document}

%% file: commands.tex
\theoremstyle{plain}
\newtheorem{theorem}{Theorem}
\newtheorem{proposition}{Proposition}[section]
\newtheorem{lemma}{Lemma}[section]

\theoremstyle{remark}
\newtheorem{remark}{Remark}[section]
\newtheorem{example}{Example}[section]

\theoremstyle{definition}

\newtheorem{assumption}{Assumption}[section]

\newcommand{\numberset}{\mathbb}
\newcommand{\N}{\numberset{N}}

\newcommand{\R}{\numberset{R}}

\DeclarePairedDelimiter{\norm}{\lVert}{\rVert}
\DeclarePairedDelimiter{\abs}{\lvert}{\rvert}

\DeclarePairedDelimiter{\tond}{(}{)} 
\DeclarePairedDelimiter{\quadr}{[}{]}
\DeclarePairedDelimiter{\graf}{\{}{\}} 
\DeclarePairedDelimiter{\scal}{\langle}{\rangle} 

\DeclarePairedDelimiter{\lip}{\lVert}{\rVert_{\mathrm{Lip}}}
\DeclarePairedDelimiter{\sgnorm}{\lVert}{\rVert_{\psi_2}}

\DeclareMathOperator*{\argmin}{arg\,min}

\DeclareMathOperator{\law}{law}

\newcommand{\eps}{\varepsilon}

\newcommand{\hess}{\nabla^2}

\newcommand{\expo}[1]{\exp\tond*{#1}}

\newcommand{\prob}{\mathds{P}}
\newcommand{\unif}[1]{\mathrm{Unif}\tond*{#1}}

\newcommand{\pro}[1]{\prob \tond*{#1}}
\newcommand{\expe}[1]{\mathbb{E}\quadr*{#1}}
\newcommand{\expewr}[2]{\mathbb{E}_{#1}\quadr*{#2}}

\newcommand{\cova}[1]{\mathrm{Cov}\quadr*{#1}}
\newcommand{\covrv}[1]{\mathrm{Cov}\quadr*{#1}}

\newcommand{\msw}[1]{\mathrm{SRW}\tond*{#1}}
\newcommand{\mswm}{\mathrm{SRW}}

\newcommand{\swm}{\mathrm{SW}_2}

\newcommand{\pp}{\mathcal{P}}

\newcommand{\eqlaw}{\stackrel{d}{=}}
\newcommand{\iid}{\mathrel{\overset{\text{i.i.d.}}{\sim}}}

\newcommand{\hatx}{\hat{X}}

\newcommand{\dd}{\, d}

\newcommand{\prosph}[1]{\tond*{S^{d-1}}^{#1}}

\newcommand{\opnorm}[1]{\norm*{#1}_{\mathrm{op}}}
\newcommand{\fronorm}[1]{\norm*{#1}_{\mathrm{F}}}
\newcommand{\U}{\mathcal{U}}
\newcommand{\mrmf}{\mathrm{F}}

\DeclareMathOperator{\rad}{rad}

\newcommand{\scalenumber}{\mathfrak{S}}

\newcommand{\trace}[1]{\mathrm{Tr}\tond*{#1}}

\newcommand{\condcov}{\kappa_{\Sigma}}